\date{}
\title{\bfseries\papertitle}
\author{ Suprovat Ghoshal$^*$ \\ University of Michigan \\ suprovat@umich.edu	\and Aadirupa Saha$^*$ \\ Microsoft Research, New York City \\ aadirupa.saha@microsoft.com}
\newtheorem{thm}{Theorem}
\newtheorem{lem}[thm]{Lemma}
\newtheorem{cor}[thm]{Corollary}
\newtheorem{defn}[thm]{Definition}
\newtheorem{rem}{Remark}
\newtheorem{cl}{Claim}
\newcommand{\R}{{\mathbb R}}
\newcommand{\E}{{\mathbf E}}
\newcommand{\1}{{\mathbf 1}}
\newcommand{\0}{{\mathbf 0}}
\newcommand{\cA}{{\mathcal A}}
\newcommand{\cB}{{\mathcal B}}
\newcommand{\cI}{{\mathcal I}}
\newcommand{\cS}{{\mathcal S}}
\newcommand{\cE}{{\mathcal E}}
\newcommand{\cF}{{\mathcal F}}
\newcommand{\cG}{{\mathcal G}}
\newcommand{\cT}{{\mathcal T}}
\newcommand{\cN}{{\mathcal N}}
\newcommand{\cD}{{\mathcal D}}
\newcommand{\tS}{{\tilde {S}}}
\newcommand{\tj}{{\tilde {j}}}
\newcommand{\teta}{{\tilde \eta}}
\newcommand{\p}{{\mathbf p}}
\newcommand{\sm}{\setminus}
\newcommand{\bzeta}{\boldsymbol \zeta}
\newcommand{\bSigma}{\boldsymbol \Sigma}
\newcommand{\bnu}{{\boldsymbol \nu}}
\newcommand{\bmu}{{\boldsymbol \mu}}
\newcommand{\bsigma}{\boldsymbol \sigma}
\DeclareMathOperator*{\argmin}{argmin}
\DeclareMathOperator*{\argmax}{argmax}
\def \gn{{\it Gumbel}}
\def \rank{{\texttt{Rank}}}
\newcommand{\obr}{\textit{Ordered Block-Rank}}
\newcommand{\br}{\textit{Block-Rank}}
\newcommand{\algbr}{\texttt{Block-Rank Preference Bandits}}
\newcommand{\brpb}{BlockRank-PB}
\newcommand{\rumr}{\texttt{LR-RUM}$(n,k,r)$}
\newcommand{\rumb}{\texttt{BR-RUM}$(n,k,r)$}
\newcommand{\irum}{\texttt{I-RUM}$(n,k)$}
\def \ratio{{\it Best-Item-Advantage-Ratio}}
\def \mrat{{\it $\epsilon$-BAR}}
\def \lr{{Low-Rank-Choice-Model}}
\def \indrum{{Independent-RUM-Choice-Model}}
\def \prob{{$(\epsilon,\delta)$-PAC arm identification in LR-RUM}}
\def \papertitle{{Exploiting Correlation to Achieve Faster Learning Rates  in Low-Rank Preference Bandits}} 
\newcommand{\red}[1]{\textcolor{red}{#1}}
\let\Pr\relax
\DeclareMathOperator*{\Pr}{Pr}
\newcommand{\oleq}[1]{\overset{#1}{\leq}}
\newcommand{\cX}{\mathcal{X}}
\begin{document}

\maketitle

\def\thefootnote{*}\footnotetext{Equal contribution alphabetically.}

\vspace{-10pt}
\begin{abstract}
We introduce the \emph{Correlated Preference Bandits} problem with random utility based choice models (RUMs), where the goal is to identify the best item from a given pool of $n$ items through online subsetwise preference feedback. 
We investigate whether models with a simple correlation structure, e.g. low rank, can result in faster learning rates. 
While we show that the problem can be impossible to solve for the general `low rank' choice models, faster learning rates can be attained assuming more structured item correlations. In particular, we introduce a new class of \emph{Block-Rank} based RUM model, where the best item is shown to be $(\epsilon,\delta)$-PAC learnable with only $O(r \epsilon^{-2} \log(n/\delta))$ samples. This improves on the standard sample complexity bound of $\tilde{O}(n\epsilon^{-2} \log(1/\delta))$ known for the usual learning algorithms which might not exploit the item-correlations ($r \ll n$). 
We complement the above sample complexity with a matching lower bound (up to logarithmic factors), justifying the tightness of our analysis. 
Further, we extend the results to a more general `\emph{noisy Block-Rank}' model, which ensures robustness of our techniques. Overall, our results justify the advantage of playing subsetwise queries over pairwise preferences $(k=2)$, we show the latter provably fails to exploit correlation. 
\end{abstract}

\vspace{-15pt}
\section{Introduction}
\label{sec:intro}
\vspace{-7pt}
We give an algorithm for sequentially PAC learning the best item from a finite pool of $n$ items, where at each decision round $t$, a subset of $k$ items can be tested, and preference feedback of the winning item can be observed. Given a fixed $\epsilon,\delta \in [0,1]$, the objective of the algorithm is to find, with high probability $1-\delta$, an `$\epsilon$-best' item, with minimum possible query complexity. 

The problem has been studied extensively in recent works in the setting of pairwise preferences (i.e. $k = 2$) \citep{Busa_pl,falahatgar_nips,Busa14survey}, while some works also extend the setting to general subsetwise queries \citep{Ren+18,SGwin18,ChenSoda+18} for specific choice of random utility model (RUM) based subset choice model (e.g.  MNL or Plackett-Luce model \cite{Agrawal+16}). While at a first glance, one might expect the sample complexity of an optimal learner to depend on the sizes of the subsets queried (i.e., $k$)---precisely, with increasing subset size $k$, one may expect to achieve faster learning rates, as with larger $k$, the learner also gets to observe a preference feedback on  more items per time step. 

However, surprisingly, it is known that in general, the fundamental performance limit of the problem is not improvable based on the subset size $k$. For e.g., \cite{SG18,Ren+18} formally shows a worst-case sample complexity lower bound of $O(\frac{n}{\epsilon^2}\log \frac{1}{\delta})$ for any $k \in [n]$ which has no dependence on $k$. These results are of course discouraging, since they imply there is no advantage in observing general subsetwise feedback over pairwise preferences ($k=2$). Why should one even build systems for general $k$-subset size queries when a pairwise query serves as good? 

Our first step towards answering the above is the following crucial observation: the subset size obliviousness of the earlier results is rooted in the fact that here aforementioned results assume a \emph{no-correlations among the item rewards} structure. But this in turn implies that the winning probability of a certain item in a $k$-subset solely depends on its own underlying value, and is independent of the context (rest of items present alongside), which is often unjustified in practice. In almost every real-world scenario, the items in the decision space are often correlated with interdependent utilities or losses; e.g. in movie recommendation, if a group of users dislikes a movie from the horror genre, it is likely they will dislike a thriller movie as well, similarly in restaurant recommendation, if a person shows preference for `tiramisu', one may expect the similar desert items are also going to be in the top of their preference list etc. Thus depending on the nature of correlations, correlations may help in faster information aggregation where the learner can hope to gather side information about related items without explicitly learning the underlying scores (rewards) of each item separately.

Assuming `independent rewards' however defeats the purpose of subsetwise games. This is since despite having the provision of playing a larger set of items (and hence observing feedback on a larger item set per round), due to the `independence' assumption, the preference outcome of one item does not reveal any information of the rest as their scores remains unaffected by each other's presence. We thus focus our attention to studying the interplay between learning rate and reward correlations in preference bandits: Here the preference information of one item can reveal additional partial preferential information of the items present alongside and hence, one can hope that selecting larger subsets in such settings should lead to faster learning rates (smaller sample complexity).

As mentioned above, to the best of our knowledge, none of the earlier work address this perspective in the setting of preference bandits, arguably due to the ease of analyzing their proposed algorithms under the `independent (uncorrelated)' assumption, e.g., \cite{SGwin18,KhetanOh16,ChenSoda+18} exploit the {Independence of Irrelevant Attributes} (IIA) property of the Plackett Luce (PL) preference model in their sample complexity analysis. In fact, it is unclear how to incorporate `correlation structures' into subsetwise preference models. 

The main objective of our work is to \emph{formulate and understand how playing a subsetwise game can improve the sample complexity of the best arm identification problem for correlated items} (\emph{without the learner having prior knowledge of the underlying correlation structure}). Our contributions are: 

(1) We introduce the problem of \emph{Correlated Preference Bandits} under random utility based preference models (RUMs)\footnote{It is worth noting that correlated noise in discrete choice models has been studied in statistics and economics (e.g.,~\cite{train09}), however it is not known how to exploit the correlation structure to achieve faster learning rates through preference based active learning, which remains the goal of this work.}, which generalizes the \indrum\, model by incorporating item correlations in terms of \lr s  \rumr \, (see Sec. \ref{sec:prelims} and \ref{sec:prob}).
 
(2). Our first finding shows that for any general \lr \, \rumr, the best-arm identification problem can be impossible to solve for (see Lem. \ref{thm:lb_gen}, Sec. \ref{sec:genrank}). 
 
(3). We then introduce a new class of \br\, based RUM model which uses a more combinatorially interpretable notion of rank. We show that in the setting of RUMs with block rank at most $r$, namely \rumb\, the best item is $(\epsilon,\delta)$-PAC learnable in just $O(r \epsilon^{-2} \log(n/\delta))$ samples when $k > 2$ (Thm. \ref{thm:ub_ordbr}, Sec. \ref{sec:br_ub}). This improves over the known sample complexity bound of $\tilde{O}(n\epsilon^{-2} \log(1/\delta))$ of the case where the arms are independent when $r \ll n$. 
 
(4). We complement our upper bound with a matching lower bound (up to logarithmic factors), justifying the tightness of our analysis (Thm. \ref{thm:lb_ordbr}, Sec \ref{sec:br_lb}). 

(5). We also show a lower bound of $\Omega(n\epsilon^{-2}\log(1/\delta))$ (Thm. \ref{thm:lb_duel}, Sec. \ref{sec:br}) when the learner is forced to play just pairwise queries ($k = 2$), which indicates how playing larger subset sizes allows the learner to exploit the underlying correlation structure achieving faster learning rates. In contrast, however, a pairwise query model $(k=2)$ fails to exploit the underlying correlation structure as shown in Thm. \ref{thm:lb_ordbr}.

(6). Finally we extend our analysis to a general $\eta$-`noisy-Block-Rank' based RUM choice model justifying \emph{robustness} of proposed method which shows its $O(r \epsilon^{-2} \log(n/\delta))$ sample complexity performance remains unaffected under some `tolerable $\eta$-noise' in the correlation structure even if the underlying correlation matrix becomes full rank, i.e. $r=n$ (Sec. \ref{sec:nbr}). 

This work is mostly theoretical in nature and in particular has no societal impact.



{\bf Related Works.} For the classical multiarmed bandits setting, there is extensive literature on PAC-arm identification problem \citep{Even+06,Audibert+10,Kalyanakrishnan+12,Karnin+13,LilUCB}, where the learner gets to see a noisy draw of absolute reward feedback of an arm upon playing a single arm per round. 
On the contrary, learning to identify the best item(s) with only relative preference information (ordinal as opposed to cardinal feedback) has seen steady progress since the introduction of the dueling bandit framework \citep{Zoghi+13} with pairs of items (size-$2$ subsets) that can be played, and subsequent work on generalization to broader models both in terms of distributional parameters \citep{Yue+09, Adv_DB,Ailon+14, Zoghi+15MRUCB} as well as combinatorial subset-wise plays \citep{MohajerIcml+17,pbo,SG18,Sui+17}. 

There have been a few works in the MAB literature to exploit the advantages of item correlations, which assume the knowledge of the correlation structure (in terms of side information or online feedback-graphs). \cite{SideInfo11,SideInfo14,SideInfo16,Alon+15,Alon+17} study the MAB problem assuming a relation graph over the nodes, however their setting also requires revealing rewards of the neighboring set of the pulled arm, which reduces this to a semi-bandit (side information) setting. On the contrary, our setting is based on a pure bandit feedback model that reveals only a noisy reward of the selected arm. \cite{CheapBandits} also consider a stochastic sequential learning problems on graphs but here the learner gets to observe the average reward of a group of graph nodes rather than a single one. 
\cite{bok16} studies the top $k$ item determination problem of multiarmed bandits for correlated arm rewards (where the underlying correlation structure can be arbitrary) and show that in the worst case the learner could be forced to consider all $\Omega\left(n \choose k\right)$ subsets.
 \cite{Singh20,Gupta19} studies the MAB regret minimization problem under correlated arms, modeling the reward dependencies in terms of clusters or some known correlation structures. 
To the best of our knowledge there have been no previous attempts towards understanding how item correlations affect the sample complexity of the winner determination problem in preference bandits, \emph{specifically in settings where the learner has no prior knowledge of the underlying correlation}, which is the primary focus of the current work. We believe that this is a new direction which can be explored along multiple fronts.

\vspace{-5pt}
\section{Preliminaries}
\label{sec:prelims}
{\bf Notations.} We denote by $[n]$ the set $\{1,2,...,n\}$. 

\vspace{-0pt}

\subsection{Low Rank Subset Choice Models (accounting Item Correlations)}
\label{sec:corrRUM}

Before introducing our \lr, we recall the definition of the standard (independent) \emph{discrete random utility based choice models} (RUMs) \cite{Az+12,ChenSoda+18} used in the preference bandits literature, which however do not take into account the item correlations.  

\textbf{Discrete Random Utility based Choice Model (RUMs). } 
 RUMs are a widely-studied class of discrete choice models; they assume a (non-random) ground-truth utility score $\mu_{i} \in \R$ for each alternative $i \in [n]$, and assign a distribution $\cD_i(\cdot|\mu_{i})$ for scoring item $i$, where $\E[\cD_i \mid \mu_i] = \mu_i$. To model a winning alternative given any set $S \subseteq [n]$, one first draws a random utility score $X_{i} \sim \cD_i(\cdot|\mu_{i})$ for each alternative in $S$, and selects an item with the highest random score. More formally, the probability that an item $i \in S$ emerges as the {\em winner} in set $S$ is given by:
\vspace{-1pt}
\begin{align}
\label{eq:prob_rum}
Pr(i|S) = Pr(X_i > X_j ~~\forall j \in S\sm \{i\} ),
\end{align}
where ties are broken uniformly over all elements in set $S$. 
It is generally assumed that for each item $i \in [n]$, its random {\em utility score} $X_i$ is of the form $X_i = \mu_i + \zeta_i$, where all the $\zeta_i \sim \cD$ are `noise' random variables drawn \emph{independently} from a probability distribution $\cD$. 
For the purposes of analysis, it is generally assumed without loss of generality\footnote{Under the assumption that the learner's decision rule does not contain any bias towards a specific item index}, that $\mu_1 > \mu_i \, \forall i \in [n]\setminus\{1\}$ for ease of  exposition\footnote{The extension to the case where several items have the same highest parameter value is easily accomplished.}. Formally, we define the \emph{best-item} to be one with the highest score parameter: $i^* \in \underset{i \in [n]}{\text{argmax}}~\mu_i = \{1\}$, under the assumptions above. We will denote the model as \indrum ~(\irum) for the rest of the paper. 

\textbf{Popular examples of \irum.} A widely used RUM is the {\it Multinomial-Logit (MNL)} or {\it Plackett-Luce model (PL)}, where the $\cD_i$'s are taken to be independent Gumbel$(0,1)$ distributions with location parameters $0$ and scale parameter $1$ \citep{Az+12}, which results in score distributions $Pr(X_i \in  [x,x + dx]) = e^{-(x - \mu_i)}e^{-e^{-(x - \mu_i)}} dx$, $\forall i \in [n]$. Similarly, other different families of discrete choice models can be considered for different choices of the underlying iid noise model $\zeta_i \sim \cD$, e.g.  Exponential, Uniform, Gaussian, Weibull etc \cite{SG20}. 

\textbf{Limitations of existing results for \irum.} The $(\epsilon,\delta)$-PAC best-arm identification problem under this model has already been studied in the literature. In particular, ~\cite{SG18,SG20} show a fundamental sample complexity lower bound of ${\Omega}(\frac{n}{\epsilon^2}\ln \frac{1}{\delta})$ for this, given fixed $\epsilon,\delta \in (0,1)$. A disappointing takeaway from these results are that the bounds are subset size independent, which triggers the natural question: Why should one play subsets of larger sizes if it does not lead to a faster learning rate? One can simply get away with the problem of identifying the $\epsilon$-best item by just playing pairwise preference games $(k = 2)$ in that case. 
%

\textbf{Main questions: Can we exploit reward correlations in Preference Bandits (with $k$-subsetwise pulls) without the knowledge of the underlying correlation structure?} Naturally the first question is if we incorporate utility correlations in $(X_1,\ldots X_n)$ does that lead to faster learning rate? Further, what is the right measure of correlation in a choice model? 

\begin{rem}[Why MAB setup can not exploit reward correlations]
Note in the setting of standard MAB, the item correlations do not play any role in improving the learning rate beyond $\Omega(\frac{n}{\epsilon^2}\ln \frac{1}{\delta})$ \cite{Even+06}. This is due to the inherent limitation of models which restrict the learner to query feedback of just single arms at every round -- this means irrespective of the correlation model $\Sigma$, the learner would never have a way to distinguish if two arms are fully correlated or exactly identical from single arm pulls.
\end{rem}


\textbf{Our proposed preference-choice models (to capture correlations). } Towards this we study the following natural generalization of \irum:
Define the utility score vector $X$ as a \emph{multivariate} random variable of the form: $X = \bmu + \boldsymbol \zeta$, where $\bzeta \sim \cD$ is a multivariate noise drawn from a {\em joint} distribution $\cD$ (instead of sampling the $n$ utility scores $X = (X_1,\ldots,X_n)$ independently), such that it has mean zero and correlation structure quantified by the $n\times n$-matrix of correlation coefficients $\Sigma:={\rm Corr}(\boldsymbol \zeta)$. Again, the choice probability $P(i|S)$ of any arm $i \in S$ in any subset $S \subseteq [n]$, can still be defined same as that suggested in Eqn. \eqref{eq:prob_rum}. We refer to this model as \emph{Correlated-RUM-based-Choice-Models}. For example, assume $\cD = \cN(\0,\Sigma)$---a multivariate zero mean Gaussian noise with some fixed (but unknown) covariance matrix $\Sigma$. In particular when $\Sigma$ is not the identity matrix, this recovers a `\emph{correlated Gaussian based choice model}'.

{\bf \lr ~(\rumr). }
A first step towards understanding the effect of item correlations on learning rate is to determine a suitable `measure of correlation' among the utility scores $X_1,\ldots,X_n$ in terms of some properties of the underlying correlation matrix $\Sigma$. Formally $\Sigma$ is a $n \times n$ matrix such that $\Sigma(i,j) := {\rm Corr}(\zeta_i,\zeta_j)$ where ${\rm Corr}(\cdot,\cdot)$ is used to denote the correlation coefficient{\footnote{We point that since correlation is translation invariant, we have ${\rm Corr}(\zeta_i,\zeta_j) = {\rm Corr}(X_i,X_j)$ for every $i,j \in [n]$, and hence it suffices to work with the correlation matrix defined on the $\zeta$-variables.}}.
A natural quantity to express the complexity of item correlations is the rank of the underlying correlation matrix. E.g. PCA (\cite{bishop06}) or Matroids (\cite{ox06}) which has varied applications in many real systems including image, graph networks or information retrieval. Motivated by these, we define the \lr, which models the item dependencies through rank of the correlation matrix $\Sigma$. Precisely we assume $\rank(\Sigma)=r$ for some $r \in [n]$. Clearly the setting $r=n$ expresses independent discrete RUM based choice model as a special case. We will henceforth denote this model by `$r$-\lr' or \rumr.

\textbf{$r$-\br. } An interesting special case of low-rank choice model is one where the item correlations results in an $r$-clustering (referred to as `blocks' henceforth) of the set of items. In particular, we say that a $r$-\br\, instance has {\em block-rank} $r$ if there exists a partitioning on the set of items $[n]$ into blocks $\cB_1 \uplus \cB_2 \uplus \cdots \uplus \cB_r$ such that the following properties hold:
\begin{itemize}
	\item[(i)] {\it Inter-block Identity}: For any block $\cB_i$ and any pair of items $a,b \in \cB_i$, we have $\zeta_a = \zeta_b$.
	\item[(ii)]{\it Cross-block Independence}: For any subset of items $S \subseteq [n]$ such that $|S \cap \cB_i| \leq 1$ for every $i \in [r]$, the set of variables $(\zeta_i)_{i \in S}$ is jointly independent.
\end{itemize}
Note that in this setting, the correlation matrix admits a block diagonal structure i.e.,
$$\Sigma(a,b) = 
\begin{cases}
1,~ \forall a,b \in \cB_i, ~i \in [r],\\
0, ~\forall a \in \cB_i, b \in \cB_{j}, ~i,j \in [r],~i\neq j.
\end{cases}
$$
Again for brevity, we shall refer to this model as \rumb\, in the rest of the paper.

\textbf{(Noisy) $(r,\teta,\eta)$-\br. } The $(r,\teta,\eta)$-\br~model generalizes $r$-\br~by allowing intra-block variables to be {\em almost correlated} and inter-block variables to be nearly independent (for some $\teta, \eta \in (0,1]$). Formally, in $(r,\teta,\eta)$-\br~instance, the set of items $[n]$ admits a partitioning into blocks $\cB_1 \uplus \cdots \uplus \cB_r$ such that the following properties hold.
\begin{itemize}
	\item[(i)]{\it Cross-block Approximate Independence}: For any subset of items $S \subseteq [n]$ such that $|S \cap \cB_i| \leq 1$ and any non-trivial partitioning $S = S_1 \uplus S_2$ we have $I(\zeta_{S_1};\zeta_{S_2}) \leq \teta$ where $\zeta_{S_1} := (\zeta_i)_{i \in S_1}$ denotes the set of variables in $S_1$ and  $I(\cdot,\cdot)$ denotes the mutual information (MI) between two (sets of) variables. 
	\item[(ii)] {\it Inter-block Approximate Identity}: For any block $\cB_i$ and any pair of items $a,b \in \cB_i$ we have ${\rm Corr}(\zeta_i,\zeta_j) \geq 1 - \eta$.
\end{itemize}
Note that instantiating $\teta, \eta = 0$, we recover the $r$-\br~setting as a special case. For {\em Gaussian noise}, one can express both items (i) and (ii) above in terms of correlation, since it is folklore that a pair of Gaussians can be uncorrelated if and only if they are independent.

\vspace*{-10pt}

\section{Problem Setting}
\label{sec:prob}
\vspace*{-7pt}
We consider the \emph{Probably Approximately Correct (PAC)} version of the best-arm identification problem through subset-wise comparisons. %
%
Formally, the learner is given a finite set $[n]$ of $n > 2$ items or `arms' along with a playable subset of size $k \leq n$. At each round $t = 1, 2, \ldots$, the learner selects a subset $S_t \subseteq [n]$ of size at most $k$ 
distinct items, and receives (stochastic) feedback of the `winning item' drawn according to $Pr(\cdot \mid S_t)$ (see Eqn. \eqref{eq:prob_rum}) depending on (a) the chosen subset $S_t$, and (b) a \rumr\, choice model with parameters $\bmu = (\mu_1,\mu_2,\ldots, \mu_n)$ a priori unknown to the learner. 


\subsection{Correctness and Sample Complexity: \prob} 
\label{sec:obj}


For a \lr\, \rumr\, instance with $n \geq k$ arms, an arm $i \in [n]$ is said to be $\epsilon$-optimal if $\mu_i > \mu_1 - \epsilon$. A sequential learning algorithm that depends on feedback from an appropriate subset-wise feedback model is said to be $(\epsilon,\delta)$-{PAC}, for given constants $0 < \epsilon \leq \frac{1}{2}, 0 < \delta \leq 1$, if the following properties hold when it is run on any instance \rumr:
%
(a) it stops and outputs an arm $I \in [n]$ after a finite number of decision rounds (subset plays) with probability $1$, and (b) the probability that its output $I$ is an $\epsilon$-optimal arm in \rumr\, is at least $1-\delta$, i.e, $Pr(\text{$I$ is $\epsilon$-optimal}) \geq 1-\delta$. 
By {\em sample complexity} of the algorithm, we mean the expected time (number of decision rounds) taken by the algorithm to stop when run on the instance \rumr.
%


\section{Impossibility Result: General \lr}
\label{sec:genrank}

In this section we show that allowing for arbitrary correlations can make the $\epsilon$-optimal winner determination problem ill defined in the following sense: one can construct instances where there are subsets for which the item with the largest win probability is not the same as the item with the largest score. In particular, such instances can be simply constructed in a way such that the covariance matrix is just $2$-dimensional. We state the observation formally in the following lemma.

\begin{restatable}[]{lem}{lbgen}
\label{thm:lb_gen}
\label{THM:LB_GEN}
Consider an instance of \rumr\, with $n=k$, and $r = 2$. Then for any even $k \geq 4$ and $0 < \epsilon \leq \epsilon(k)$, there exist scores $\mu_1 > \mu_2 + \epsilon \geq \cdots \geq \mu_k$ with underlying correlation matrix $\Sigma \in \R^{k \times k}$ s.t. 
$
\argmax_{i \in [k]} \Pr\left( i | [k] \right)  \neq 1;
$
i.e. Item-$1$, despite of being the only $\epsilon$-optimal item, would not have the maximum winning probability when played in a subset. 
\end{restatable}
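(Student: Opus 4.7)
The strategy is to exhibit an explicit rank-$2$ correlation structure in which a strictly anti-correlated pair of items amplifies its effective noise spread and overtakes item~$1$ in winning probability. Choose $\mu_1 = 2\epsilon$ and $\mu_j = \epsilon/2^{j-1}$ for $j \geq 2$, which satisfies $\mu_1 > \mu_2 + \epsilon$ and the required ordering $\mu_2 > \mu_3 > \cdots > \mu_k$. Let $Z_1, Z_2 \sim \cN(0,1)$ be independent and set $\zeta_1 = \zeta_2 = Z_1$, $\zeta_j = +Z_2$ for odd $j \geq 3$, and $\zeta_j = -Z_2$ for even $j \geq 4$. The resulting correlation matrix $\Sigma$ is block-diagonal, with a $2\times 2$ all-ones block (rank $1$) on $\{1,2\}$ and a $(k-2)\times(k-2)$ block with $\pm 1$ entries (rank $1$) on $\{3,\ldots,k\}$, so $\mathrm{rank}(\Sigma) = 2$ as required.

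The first reduction is that items sharing the same noise direction collapse to their $\mu$-maximizer. Since $\mu_1 > \mu_2$ and $\zeta_1 = \zeta_2$, item $2$ always loses to item $1$. Within the odd-indexed group $\{3,5,7,\ldots\}$ (all with noise $+Z_2$), only item $3$ can win; within the even-indexed group $\{4,6,8,\ldots\}$ (all with noise $-Z_2$), only item $4$ can win. Thus $\Pr(\cdot \mid [k])$ is supported on $\{1, 3, 4\}$, reducing the problem to a three-way comparison.

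The main computation evaluates these three probabilities in the limit $\epsilon \to 0^+$, using the $8$-fold rotational symmetry of the isotropic Gaussian $(Z_1, Z_2)$. Item $1$ wins iff $Z_1 > Z_2 - O(\epsilon)$ and $Z_1 > -Z_2 - O(\epsilon)$, which limits to $\Pr(Z_1 > |Z_2|) = 1/4$. Item $3$ wins iff $Z_2 > Z_1 + O(\epsilon)$ and $Z_2 > -O(\epsilon)$, which limits to
$$\Pr(Z_2 > Z_1,\ Z_2 > 0) \;=\; \Pr(Z_1 \leq 0 < Z_2) + \Pr(0 < Z_1 < Z_2) \;=\; \tfrac{1}{4} + \tfrac{1}{8} \;=\; \tfrac{3}{8},$$
where $1/8$ is the Gaussian mass of a single $45^\circ$ octant. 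By the symmetry $Z_2 \mapsto -Z_2$, item $4$ also has limiting probability $3/8$. Since $3/8 > 1/4$ and the three probabilities depend continuously on $\epsilon$, there exists a threshold $\epsilon(k) > 0$ such that $\Pr(3 \text{ wins}) > \Pr(1 \text{ wins})$ for every $0 < \epsilon \leq \epsilon(k)$, so $\argmax_{i \in [k]} \Pr(i \mid [k]) \neq 1$.

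The main conceptual obstacle is locating the right rank-$2$ noise structure; the key insight is that an anti-correlated pair $(+Z_2, -Z_2)$ has pointwise maximum $|Z_2| \geq 0$, effectively doubling its noise ``spread'' against a single independent opponent, which overwhelms item~$1$'s small score advantage $\epsilon$. Once this structure is in place, the three orthant probabilities follow immediately from Gaussian isotropy, and the continuity argument closes the proof.
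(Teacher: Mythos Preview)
Your proof is correct and follows the same overall strategy as the paper: realize the rank-$2$ noise as linear projections of a two-dimensional standard Gaussian onto fixed unit directions, compute the win probabilities at $\epsilon=0$ using the rotational symmetry of the Gaussian (angular sectors / orthants), and then invoke continuity to push the conclusion to all sufficiently small $\epsilon>0$.

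The specific constructions differ. The paper places item~$1$ at angle $0$, item~$k$ at angle $\pi$, and the remaining items at $\pm\pi/4$, with a flat score vector $(\mu+\epsilon,\mu,\ldots,\mu)$; this yields limiting win probabilities $1/8$ for item~$1$ and $3/8$ for item~$k$. You instead put items $1,2$ along $Z_1$ and the remaining items along $\pm Z_2$, with geometrically decaying scores; the collapse to representatives $\{1,3,4\}$ then gives $1/4$ for item~$1$ and $3/8$ each for items $3,4$. Your orthogonal directions make the orthant computation a bit cleaner, your strictly decreasing scores deliver the strict inequality $\mu_1>\mu_2+\epsilon$ directly (the paper's flat scores only give $\mu_1=\mu_2+\epsilon$, which is why its proof concludes with $\epsilon/2$-optimality), and your threshold $\epsilon(k)$ is in fact independent of $k$ since only items $1,3,4$ ever compete. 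These are minor variations of the same geometric idea.
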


Clearly, the above kind of instances are a {\em structural} barrier (as opposed to an information theoretic one) to the winner determination problem, since if the instance itself is the subset equipped with the distribution from the above lemma, the observations will be guided by the win probabilities, which do not favor the true winner. 
%

\vspace*{-8pt}
\begin{proof}[Proof Sketch of Lem. \ref{thm:lb_gen}]

Consider the following generic way of constructing a family of correlated Gaussians using unit vectors ${\bf v}_1,\ldots,{\bf v}_k$. (i). Sample a random Gaussian vector ${\bf g} \sim \cN({\bf 0},{\bf I}_{2 \times 2})$. (ii). For every $i \in [k]$, set $g_i = \langle {\bf v_i},{\bf g} \rangle$. 

We use the above geometric interpretation to define the correlation structure of the Gaussians. For every $i \in [k]$, we set ${\bf v}_i := {\bf u}(\alpha_i)$, where ${\bf u}(\alpha)$ is the unit vector $(\cos \alpha, \sin \alpha)$. We define the corresponding $\alpha_i$'s as follows. We set $\alpha_1 = 0$, $\alpha_k = \pi$ and for every $i \in \{2,\ldots,k-1\}$, we set $\alpha_i = (-1)^{i~{\rm mod}~2} \cdot \pi/4$. 

Finally, we assign the score vector ${\bmu} = (\mu_1,\ldots,\mu_k)$ as follows. We set $\mu_1 = \mu + \epsilon$ and $\mu_j = \mu$ for every $j \in [k] \setminus\{1\}$. Note that in the above construction of $(g_1,\ldots,g_k)$, the correlation matrix $\Sigma$ is exactly ${\bf V}^\top{\bf V}$ where ${\bf V} := [{\bf v}_1,\ldots,{\bf v}_k]$. Since ${\bf v}_i$ are $2$-dimensional unit vectors, we have ${\rm rank}(\Sigma) \leq 2$. Furthermore, arm $1$ is the only $\epsilon/2$-best arm in the setting.

{\bf Analysis.} 
We first observe that when $\epsilon = 0$ (i.e., all items are assigned score $\mu$), the win probability of an item $i$ when $[k]$ is played is exactly the angular measure of arc consisting of the points on the unit circle closest to vector ${\bf v}_i$. 
In that case, one can easily verify that 

\vspace{-20pt}
\begin{align*}
\Pr_{{\bmu}' = (\mu,\ldots,\mu)}\left(1 | [k]\right) = 1/8  \text{ and } 
Pr_{{\bmu}' = (\mu,\ldots,\mu)}\left(k|[k]\right) = 3/8
\end{align*}
\vspace{-20pt}

Furthermore, even when $\epsilon$ is non-zero but small enough as a function of $k$, using a first order approximation argument, for the actual score vector ${\bmu} = (\mu + \epsilon,\mu,\ldots,\mu)$, we have the following win probability bounds: $\Pr_{{\bmu} = (\mu + \epsilon,\ldots,\mu)}\left(1 | [k]\right) \leq 1/8 + o(1), ~
\Pr_{{\bmu} = (\mu+ \epsilon,\ldots,\mu)}\left(k|[k]\right) \geq 3/8 - o(1)$

\vspace{-10pt}
	\begin{figure}[H]
		\begin{center}
			\includegraphics[width=0.4\textwidth]{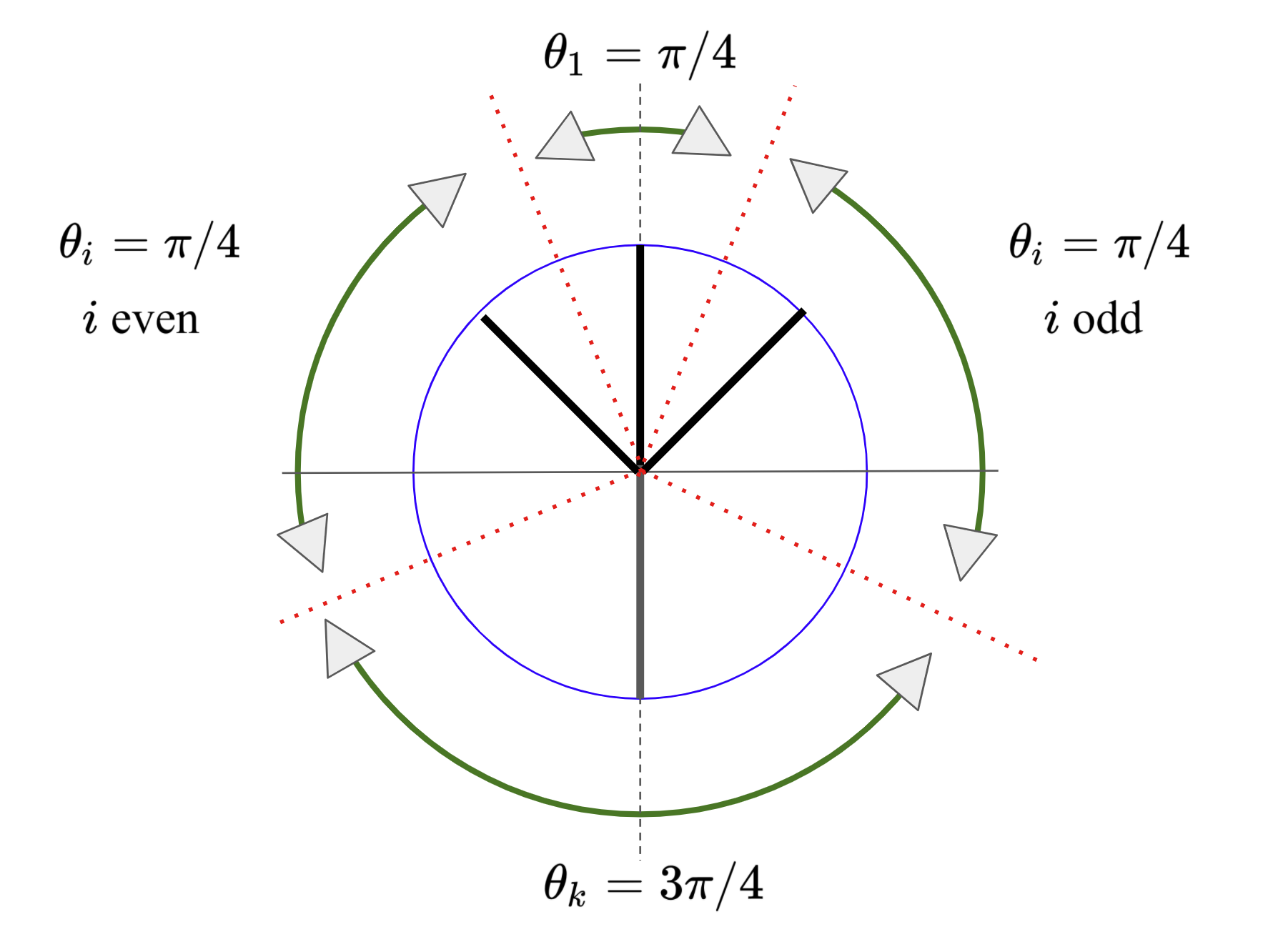} 
			\vspace{-7pt}
			\caption{Winning Sectors corresponding to the arms}		
			\label{fig:imposs}
		\end{center}
	\end{figure}
	
\vspace{-20pt}

In summary, we have $\mu_1 > \mu_{k} + \epsilon/2$ but $\Pr(1|[k]) < \Pr(k|[k])$, which establishes the guarantees claimed. We include the full proof in Appendix \ref{app:gen-rank}.  
\end{proof}

\section{$r$-\br\, Choice Model}
\label{sec:br}
\label{SEC:BR}

The impossibility result for the general \br\, case (Sec. \ref{sec:genrank}) motivates us to understand if a faster learning rate can be achieved through imposing more structured item correlations. In particular, in this section, we use a more combinatorial notion of measure of simplicity (namely, block rank) to explore $r$-\br~ instances (see Sec. \ref{sec:prelims} for description). In particular, our contributions include an  $O(r\epsilon^{-2}\log(n/\delta))$-sample complexity algorithm for PAC learning for \rumb~instances when the learning algorithms is allowed to play subsets of sizes at least $3$, and complement it with matching sample complexity lower bound for the same setting. In addition, we show a $\Omega(n\epsilon^{-2}\log(1/\delta))$-sample complexity lower bound for these instance when the learner is restricted to play just pairwise duels. 

\vspace{-5pt}
\subsection{Algorithm: Sample Complexity Bound} 
\label{sec:br_ub}

We first design an algorithm for this setup based on the following key intuition. For any instance with block rank $r$, the information theoretic bottleneck here is the winner determination problem among the best item from each of the $r$-blocks. However, the challenge here is the obvious one, the identities of these items are not known upfront, and as such, any off-the-shelf algorithm for the $(\epsilon,\delta)$-PAC learning problem which does not exploit the underlying correlation structure, would essentially end up solving the winner determination problem on $n$-arms leading to a sample complexity of $O(n/\epsilon^2 \log 1/\delta)$. 

\textbf{Main ideas:} We circumvent these issues by: 

$(1)$ \emph{Fast Pre-processing step} (with number of arm pulls independent of $\epsilon$) which reduces the effective pool of candidate items to a subset of size at most $r$: The pre-processing step is based on the following principle. Given any non ``strictly optimal''\footnote{i.e., an item whose score is not strictly larger than those of every other item in the same block.} item $i$ within a block, we can always find another item $i'$ in the same block whose win probability is at least as large as that of $i$ on any subset $S$ which simultaneously contains $i$ and $i'$.  On the other hand, since $1$ is the unique winner, it's win probability is never dominated by that of another item.  This observation is the core guiding principle for  our design of the pre-processing step which plays all possible triples and eliminates items based on their worst case win probability estimates. In particular, with high probability it returns a set of at most $r$-arms, say $S$, each of which belongs to a distinct block, and one of which is the optimal arm. Since they come from the $r$ distinct blocks, they are independent. 

$(2)$ Now to obtain an $\epsilon$-best item, we simply run the {\emph Sequential-Pairwise-Battle} algorithm\footnote{See Appendix \ref{app:seq-pb} for an informal self-contained description of the algorithm.} of \cite{SG20} (precisely Seq-PB$(S,\min(k,r),\epsilon,\delta/2,c(\cD))$, which is known to be a provably optimal $(\epsilon,\delta)$-PAC for any \irum\, given any underlying noise model distribution $\cD$ ($c(\cD)$ being a constant depending on the `minimum-\ratio' (\mrat) of $\cD$, see Defn. \ref{def:rat}). {Note that our algorithm is adaptive and does not require prior knowledge of the block-rank $r$.} The pseudocode is given as Algorithm \ref{alg:ordbr}.

\begin{center}
	\begin{algorithm}[h]
		\caption{\algbr \, (\brpb)} 
		\label{alg:ordbr}
		\begin{algorithmic}[1]
			\STATE {\bfseries Input:} 
			\STATE ~~~ Set of items: $[n]$. Error bias: $\epsilon >0$, Confidence parameter: $\delta >0$. 
			\STATE ~~~  Noise model $(\cD)$ (or equivalently $c(\cD)$, a noise model dependent constant)
			\STATE {\bfseries Initialize:} 
			\STATE ~~~ $t \gets O\left(\log\frac{4n^3}{\delta}\right)$ and set ${\rm Flag}(i) \gets 0$ for every $i \in [n]$.
			\FOR {$\cT \in {[n] \choose 3}$} 
			\STATE ~~~ Play the triple $\cT$ for $t$-times. For $i \in \cT$, let $N_\cT(i)$ be the number of times $i$-wins.
			\STATE ~~~ For every item $i \in \cT$ such that $N_{\cT}(i) \leq 0.26t$, mark ${\rm Flag}(i) \gets 1$.
			\ENDFOR
			\STATE ~~~ Construct set $S := \{ i \in [n] | {\rm Flag}(i) = 0 \}$.
			\STATE ~~~ Find: $\hat i \leftarrow$ Seq-PB$(S,\min(k,|S|),\epsilon,\delta/2,c(\cD))$ (Alg. 1, \cite{SG20}). 
			\STATE ~~~ Output $\hat i$: The winner returned by Seq-PB.
		\end{algorithmic}
	\end{algorithm}
	\vspace{-2pt}
\end{center}
\vspace{-10pt}

The following theorem formally states the guarantee of the above algorithm.

\begin{restatable}[Alg. \ref{alg:ordbr}: Correctness and Sample Complexity for \rumb]{thm}{ubordbr}
\label{thm:ub_ordbr}
\label{THM:UB_ORDBR}
Consider any \rumb\, \br\, choice model with noise distribution $\cD$, $k >2$. Then, Alg. \ref{alg:ordbr} is $(\epsilon,\delta)$-{PAC} with sample complexity $\max \left\{O(n^3\log(n/\delta)), O(\frac{r}{c \epsilon^2} \ln \frac{r}{\delta})\right\}$, where $c:= c(\cD)$ is a constant depending on $\cD$.
\end{restatable}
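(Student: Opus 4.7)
The plan is to analyze the two phases of Algorithm \ref{alg:ordbr} separately. For each block $\cB_j$, designate its \emph{champion} $m_j := \arg\max_{i \in \cB_j} \mu_i$ (breaking ties arbitrarily) and set $\cM := \{m_1,\ldots,m_r\}$, so that $1 \in \cM$. My goal is to show that with probability at least $1 - \delta/2$, the surviving set $S$ returned by the pre-processing loop satisfies $\{1\} \subseteq S \subseteq \cM$, and hence (i) contains the true winner and (ii) has $|S| \le r$ with all elements drawn from distinct blocks. Conditioned on this event, cross-block independence makes the restriction of the RUM to $S$ a bona fide \irum{} instance with noise distribution $\cD$, so I can invoke the $(\epsilon,\delta/2)$-PAC guarantee of Seq-PB from \cite{SG20} on $S$ with subset-size $\min(k,|S|) \ge 2$ to identify an $\epsilon$-best arm of $S$, which is automatically an $\epsilon$-best arm of $[n]$ since $1 \in S$.

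\textbf{Phase 1 (pre-processing).} Every non-champion $i \notin \cM$ is flagged with probability one: setting $i' := m_{\cB(i)}$, intra-block identity gives $\zeta_i = \zeta_{i'}$, whence $X_{i'} - X_i = \mu_{i'} - \mu_i > 0$ almost surely, so $i$ never wins in any triple containing both $i$ and $i'$ and therefore $N_\cT(i) = 0$ there. For arm $1$, I would show $\Pr(1 \mid \cT) \ge 1/3$ for every triple $\cT \ni 1$, by case analysis on the blocks of the other two items: if all three blocks are distinct, the $\zeta$'s are i.i.d.\ and a coupling argument monotone in $\mu_1$, combined with the symmetric value $1/3$ at $\mu_1 = \mu_j = \mu_k$, gives the bound; otherwise some pair of items shares a block, triggering a deterministic within-block domination that reduces $\Pr(1 \mid \cT)$ to a pairwise win probability strictly above $1/2$ for arm $1$. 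With $t = \Theta(\log(n^3/\delta))$, Hoeffding's inequality then gives $\Pr(N_\cT(1) \le 0.26\,t) \le \exp(-2t(1/3 - 0.26)^2) \ll \delta/n^3$, and a union bound over the $\binom{n-1}{2}$ triples containing $1$ completes the guarantee.

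\textbf{Combining the phases and main obstacle.} On the good event of Phase 1 the surviving pool $S$ behaves as an \irum{} on at most $r$ items, so Seq-PB adds $O(|S|\epsilon^{-2} c(\cD)^{-1} \log(|S|/\delta)) \le O(r \epsilon^{-2} c(\cD)^{-1} \log(r/\delta))$ samples and fails with probability at most $\delta/2$, yielding total failure probability at most $\delta$ and total sample complexity $\max\{O(n^3 \log(n/\delta)),\, O(r \epsilon^{-2} c(\cD)^{-1} \log(r/\delta))\}$. The main technical obstacle, I expect, will be the universal lower bound $\Pr(1 \mid \cT) \ge 1/3$: the coupling/monotonicity step must be made rigorous for an arbitrary noise distribution $\cD$ (with careful handling of ties), and the resulting constant gap $1/3 - 0.26$ must be \emph{strictly positive and independent of $\cD$}, so that a single Hoeffding budget $t = \Theta(\log(n^3/\delta))$ -- the source of the $O(n^3 \log(n/\delta))$ term in the final complexity -- suffices uniformly across the RUM family and justifies the otherwise-arbitrary threshold constant $0.26$.
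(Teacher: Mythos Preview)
Your proposal is correct and follows essentially the same two-phase structure as the paper: establish via the win-probability bound $\Pr(1\mid\cT)\ge 1/3$ (the paper's Claim~\ref{cl:win-prob1}, proved by the same block-membership case analysis you sketch) together with Hoeffding and a union bound that the pre-processing step yields $S$ with $1\in S$ and at most one survivor per block (Lemma~\ref{lem:pre-proc}), then invoke Seq-PB on the resulting \irum\ instance over $|S|\le r$ independent arms. The only minor deviation is in how non-champions get flagged: you use the deterministic domination $X_{m_{\cB(i)}}>X_i$ a.s.\ to force $N_\cT(i)=0$ in one triple, whereas the paper instead proves $\Pr(j\mid(1,i,j))\le 1/4$ whenever $i,j$ share a block with $\mu_i\ge\mu_j$ (Claim~\ref{cl:win-prob2}) and then applies Hoeffding---your route is a shade simpler but tacitly assumes strict within-block score inequalities, while the paper's $1/4$ bound handles ties as well.
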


\begin{rem}[Improved Sample Complexity] 
	Note that above implies improved sample complexity of $O(r\epsilon^{-2} \log(r/\delta))$ which is much smaller than the usual bound of $O(n\epsilon^{-2} \log (r/\delta))$, when $r \ll n$. This is due to the fact that in general, block rank is a more precise notion of the effective number of arms to be considered for the winner determination problem.
\end{rem}

\begin{rem}[Parameter regime for improved Sample Complexity]
The above algorithm exhibits improved sample complexity $O(r\epsilon^{-2}\log(n/\delta))$ for all  $\epsilon \in (0, (r/n^3)^{1/2}]$; this improves on the $O(n\epsilon^{-2}\log(n/\delta))$ of~\cite{SG20} for the \irum~model.  In particular, we don't need $n$ to be constant for the overall sample complexity to be $O(r \epsilon^{-2} \log(r\delta))$ i.e., it is actually the trade-off between $r/n$ and $\epsilon$ that determines the regime of parameters under which Algorithm \ref{alg:ordbr} exhibits improved convergence rates.
\end{rem}

\begin{rem}
In particular, \cite{SG20}~show that $c(\cD)$ is a constant for several popular choices of noise distributions such as {\em Uniform}, {\em Gumbel}, {\em Gaussian}, {\em Gamma}, {\em Weibull} (see). Consequently, our algorithm \emph{\algbr}\, gives a $O(\frac{r}{ \epsilon^{2}} \log(n/\delta))$-sample complexity guarantee for all such distributions as shown in the Thm. \ref{thm:ub_ordbr}. 
\end{rem}

\begin{proof}[Proof Sketch of Thm. \ref{thm:ub_ordbr}]
	{\bf Justifying Correctness.} It is based on the following idea that when items are played in triples, \emph{there exists a separation in worst case win-probabilities} (when played in triples) between non-strictly optimal items and the best item.  
	\begin{restatable}[]{cl}{wprob}			\label{cl:win-prob1}
		For any triple $\cT = (1,i,j)$, we have $Pr(1 | \cT) \geq 1/3$. 
	\end{restatable}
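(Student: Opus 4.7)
The plan is to carry out a case analysis on the block memberships of the three items $1, i, j$ and leverage the block structure together with the fact that $\mu_1 > \mu_\ell$ for all $\ell \neq 1$. Let $b(\ell)$ denote the block containing item $\ell$; recall from the \rumb{} definition that items in the same block share identical noise (i.e., $\zeta_a = \zeta_b$ whenever $b(a) = b(b)$) while noises in distinct blocks are jointly independent, with all marginals drawn from the same (continuous) base distribution.

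First I would dispose of the degenerate cases where at least two of the three items share a block. If $b(1) = b(i) = b(j)$, then $\zeta_1 = \zeta_i = \zeta_j$ and $X_1$ strictly dominates both $X_i$ and $X_j$ purely by the score gaps, so $\Pr(1 \mid \cT) = 1$. If exactly one pair shares a block, either a pair involving $1$ (say $b(1) = b(i) \neq b(j)$) or the pair $b(i) = b(j) \neq b(1)$, then after cancelling the common noise the joint event ``$1$ beats both'' reduces to a single inequality of the form $\zeta_1 - \zeta_\ast > \mu_\ast - \mu_1$ involving a single independent pair of iid noises, where the right-hand side is negative. By symmetry of the noise distribution this probability is at least $1/2$.

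The interesting (and tightest) case is when $b(1), b(i), b(j)$ are pairwise distinct, so $\zeta_1, \zeta_i, \zeta_j$ are iid. Since $\mu_1 \geq \mu_i$ and $\mu_1 \geq \mu_j$, the event $\{\zeta_1 \geq \zeta_i,\ \zeta_1 \geq \zeta_j\}$ is contained in $\{X_1 > X_i,\ X_1 > X_j\}$ (the score gaps only help item $1$), and by exchangeability of the three iid noise variables the former event has probability exactly $1/3$, since each of the three is equally likely to be the maximum. The main obstacle I anticipate is really just careful bookkeeping of the four sub-cases and a clean handling of ties; the latter is essentially trivial under the standing continuity assumption on the noise, since strict and weak inequalities coincide with probability $1$. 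Combining all cases yields $\Pr(1 \mid \cT) \geq 1/3$, with equality attained in the limit $\mu_1 - \mu_i, \mu_1 - \mu_j \to 0^+$ in Case 4, which also certifies that the constant is tight.
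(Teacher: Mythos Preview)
Your proposal is correct and follows essentially the same approach as the paper: a case analysis on the block memberships of $1,i,j$, yielding $\Pr(1\mid\cT)=1$ when all three share a block, $\Pr(1\mid\cT)\ge 1/2$ when exactly one pair does, and $\Pr(1\mid\cT)\ge 1/3$ when all three blocks are distinct. The only cosmetic differences are that the paper organizes the ``one shared pair'' cases separately (depending on whether item~$1$ is in the shared pair) and, in the fully independent case, argues $\Pr(1\mid\cT)\ge\max\{\Pr(i\mid\cT),\Pr(j\mid\cT)\}$ from score monotonicity rather than your direct exchangeability inclusion; both routes are equally short. One wording caveat: your phrase ``by symmetry of the noise distribution'' should be read as symmetry of the difference $\zeta_1-\zeta_\ast$ (which follows from the two noises being i.i.d.), not as an assumption that $\cD$ itself is symmetric.
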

	\vspace{-10pt}
	\begin{restatable}[]{cl}{wprobb}				\label{cl:win-prob2}
		For any triple $\cT = (1,i,j)$, such that $i,j$ are from the same block, if $\mu_i \geq \mu_j$, then $\Pr\left(j|\cT\right) \leq 1/4$.
	\end{restatable}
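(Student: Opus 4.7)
The plan is to do a two-level case split: first on whether item $1$ lies in the same block as $i$ and $j$, and second (when it does not) on whether $\mu_i > \mu_j$ strictly or $\mu_i = \mu_j$. The key structural input is the \emph{Inter-block Identity} property: any two items in a common block have \emph{equal} latent noises, so within-block score differences are fully determined by the $\mu$-gap.

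\textbf{Case 1: item $1$ is in the same block as $i,j$.} Then $\zeta_1=\zeta_i=\zeta_j$, and since $\mu_1 > \mu_i \ge \mu_j$ we get $X_1 > X_i$ and $X_1 > X_j$ almost surely. Item $1$ always wins, so $\Pr(j\mid \cT)=0 \le 1/4$.

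\textbf{Case 2: item $1$ is in a different block from $i,j$.} Then by \emph{Cross-block Independence}, $\zeta_1$ is independent of $\zeta_i=\zeta_j$, while $\zeta_1,\zeta_i$ share the common noise marginal $\cD$. I would further split:
\begin{itemize}
\item[(2a)] If $\mu_i > \mu_j$: since $\zeta_i=\zeta_j$, $X_i - X_j = \mu_i - \mu_j > 0$ deterministically. Hence $j$ never beats $i$ and $\Pr(j\mid\cT)=0$.
\item[(2b)] If $\mu_i = \mu_j$: now $X_i = X_j$ deterministically, and item $j$ can win only via a tie with $i$. Since uniform tie-breaking assigns $j$ the win with probability exactly $1/2$ on the event $\{X_i = X_j > X_1\}$, and the event of a three-way tie has measure zero under continuous $\cD$, one gets
\[
\Pr(j\mid \cT)\;\le\;\tfrac{1}{2}\,\Pr\!\bigl(X_i > X_1\bigr)\;=\;\tfrac{1}{2}\,\Pr\!\bigl(\zeta_i - \zeta_1 > \mu_1 - \mu_i\bigr).
\]
Because $\mu_1 - \mu_i > 0$ and $\zeta_i,\zeta_1$ are i.i.d.\ from $\cD$, the random variable $\zeta_i - \zeta_1$ is symmetric about $0$; hence the right-hand tail probability is at most $\Pr(\zeta_i > \zeta_1)\le 1/2$, giving $\Pr(j\mid \cT)\le 1/4$.
\end{itemize}

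Combining the cases yields the claim. The only genuinely delicate step is (2b), where one must justify the factor of $1/2$ coming from tie-breaking and then use the i.i.d.\ symmetry of the noise marginals to bound a one-sided tail of $\zeta_i - \zeta_1$ by $1/2$. Everything else is a direct consequence of the block-rank structure (identity within a block, independence across blocks) and the assumption $\mu_1 > \mu_i \ge \mu_j$. I expect the \emph{only} subtlety a reviewer might flag is the continuity of $\cD$ (needed to ignore three-way ties and to interpret the $\Pr(X_i > X_1) = \Pr(X_i \ge X_1)$ step), which is a standard tacit assumption in the RUM literature cited in the paper.
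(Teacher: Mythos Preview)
Your proof is correct and uses essentially the same ingredients as the paper: the block-identity property gives $X_i \geq X_j$ almost surely, and cross-block independence together with $\mu_1 > \mu_i$ gives $\Pr(1\mid\cT) \geq 1/2$. The only stylistic difference is that the paper avoids your (2a)/(2b) subcase split by observing directly that $X_i \geq X_j$ a.s.\ implies $\Pr(i\mid\cT) \geq \Pr(j\mid\cT)$, so that $\tfrac12 \geq \Pr(\{i,j\}\mid\cT) = \Pr(i\mid\cT) + \Pr(j\mid\cT) \geq 2\Pr(j\mid\cT)$; this handles both the strict and equality cases in one line without needing to invoke tie-breaking explicitly.
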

		\vspace{-5pt}
	The first two claims taken together imply: (a) every item $j$ whose score $\mu_j$ is not uniquely largest in its block participates in a triple where it wins with probability at most $1/4$ and (b) in every triple the best item $1$ wins with probability at least $1/3$. Since our choice of number of trials $t$ is large enough, we know that the empirical estimates are close enough approximates of the true win probabilities; points (a) and (b) also hold for the empirical win-probability estimates $\{N_\cT(i)/t\}_{i,\cT}$. This observations are stitched together in the following lemma which gives useful characterization of items which are flagged inside the for loop.
	
	\begin{restatable}[]{lem}{flag}
	\label{lem:flag}
		With probability at least $1 - \delta/2$, the following holds. For any item $j \in [n]$, if there exists an item $i$ from the same block for which $\mu_i \geq \mu_j$, we have ${\rm Flag}(j)  = 1$. Furthermore, ${\rm Flag}(1) = 0$.
	\end{restatable}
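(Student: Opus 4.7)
The plan is to prove Lemma \ref{lem:flag} by combining Claims \ref{cl:win-prob1} and \ref{cl:win-prob2} with a standard Hoeffding concentration + union bound argument over all (item, triple) pairs considered by the for loop of Algorithm \ref{alg:ordbr}. Since the $t$ plays of each triple are mutually independent and the empirical counts $N_\cT(i)$ are sums of $t$ Bernoullis with mean $\Pr(i|\cT)$, concentration of each $N_\cT(i)/t$ around its expectation is the only probabilistic ingredient required.

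First, to establish $\mathrm{Flag}(1) = 0$, it suffices to show that for every triple $\cT$ containing item $1$, the empirical win fraction $N_\cT(1)/t$ strictly exceeds $0.26$. By Claim \ref{cl:win-prob1}, $\E[N_\cT(1)/t] = \Pr(1|\cT) \geq 1/3$. Hoeffding's inequality with deviation $\varepsilon = 0.07$ then gives $\Pr(N_\cT(1)/t \leq 0.26) \leq \exp(-2t\varepsilon^2)$, which is at most $\delta/(4n^3)$ for $t = \Theta(\log(n^3/\delta))$.

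Second, for each non-strictly-optimal item $j$ (i.e., one for which there exists $i$ in the same block with $\mu_i \geq \mu_j$), I target the specific witness triple $\cT^\star := \{1, i, j\}$. By Claim \ref{cl:win-prob2}, $\Pr(j|\cT^\star) \leq 1/4$, so Hoeffding with deviation $\varepsilon' = 0.01$ gives $\Pr(N_{\cT^\star}(j)/t > 0.26) \leq \exp(-2t(\varepsilon')^2)$, which is again at most $\delta/(4n^3)$ for $t = \Theta(\log(n^3/\delta))$. On the complement of this bad event, $N_{\cT^\star}(j) \leq 0.26 t$, so Algorithm \ref{alg:ordbr} sets $\mathrm{Flag}(j) = 1$ during the iteration for $\cT^\star$. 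A union bound over the $3\binom{n}{3} = O(n^3)$ distinct (item, triple) pairs that arise in Steps 7--8 bounds the cumulative failure probability by $\delta/2$, and the calibration $t = \Theta(\log(4n^3/\delta))$ specified in the algorithm absorbs the constants coming from $(\varepsilon')^{-2}$.

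The main obstacle, as the parameter choices above suggest, is the tight $0.01$ margin between the Claim \ref{cl:win-prob2} upper bound ($1/4$) and the algorithmic threshold ($0.26$). One must confirm that the threshold $0.26$ in Algorithm \ref{alg:ordbr} genuinely sits inside the open interval $(1/4, 1/3)$ with enough room on both sides for the Hoeffding deviations, which in turn dictates the constant hidden in $t = \Theta(\log(4n^3/\delta))$. Apart from this calibration, the argument is a routine combination of the two claims with Hoeffding and a union bound; no joint-concentration subtlety arises because distinct triples are played in mutually independent rounds.
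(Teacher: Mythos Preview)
Your proposal is correct and follows essentially the same route as the paper: apply Claims \ref{cl:win-prob1} and \ref{cl:win-prob2} to identify the relevant win-probability gaps ($1/3$ versus $1/4$) around the threshold $0.26$, invoke Hoeffding's inequality for each (item, triple) pair, and finish with a union bound. The paper uses the slightly sharper count of $\binom{n-1}{2} + (n-1) = O(n^2)$ bad events rather than your $3\binom{n}{3} = O(n^3)$, but this only affects the hidden constant in $t$ and is absorbed by the algorithm's choice $t = O(\log(4n^3/\delta))$.
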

	In particular, with high probability, every item $j$ satisfying the premise of (b) gets flagged, whereas item $1$ never gets flagged. And whenever this high probability event holds, the resulting set $S$ must consist of at most $r$-arms, which are all independent and $1 \in S$.
	\begin{restatable}[Pre-processing Step Guarantee]{lem}{preproc}					
		\label{lem:pre-proc}
		With probability at least $1 - \delta/2$, the set $S$ satisfies the following conditions. For every $i \in [r]$, $|S \cap \cB_i | \leq 1$. Additionally $1 \in S$.
	\end{restatable}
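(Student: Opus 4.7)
The plan is to derive Lemma \ref{lem:pre-proc} as a direct corollary of Lemma \ref{lem:flag}, without any additional probabilistic analysis. Lemma \ref{lem:flag} already supplies the high-probability event (of measure at least $1 - \delta/2$) on which every item $j$ possessing some distinct same-block item $i$ with $\mu_i \geq \mu_j$ is flagged, while item $1$ itself is never flagged. Both conclusions of Lemma \ref{lem:pre-proc} then follow deterministically on this event by a purely structural argument that leverages the Inter-block Identity property of the $r$-\br~model.

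To establish $|S \cap \cB_i| \leq 1$ for every block, I would fix an arbitrary block $\cB_i$ and show that at most one item of $\cB_i$ can escape being flagged. The Inter-block Identity property forces $\zeta_a = \zeta_b$ for all $a, b \in \cB_i$, so the random utilities inside the block are totally ordered by the deterministic scores $\{\mu_a\}_{a \in \cB_i}$. For any two distinct items $a, b \in \cB_i$, assume without loss of generality that $\mu_a \geq \mu_b$; then the first clause of Lemma \ref{lem:flag} (applied with $j = b$, $i = a$) yields ${\rm Flag}(b) = 1$. Iterating this over all pairs inside the block shows at most one item of $\cB_i$ survives, which is exactly $|S \cap \cB_i| \leq 1$. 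For the claim $1 \in S$, I would simply invoke the second clause of Lemma \ref{lem:flag}, which asserts ${\rm Flag}(1) = 0$; this is consistent because item $1$ is the strict global maximum, so the premise of the first clause is never triggered for $j = 1$.

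The main conceptual difficulty is already absorbed into the proof of Lemma \ref{lem:flag}, which rests on concentration of the empirical win counts $N_{\cT}(i)/t$ around their true means, uniformly over all $\binom{n}{3}$ triples via a union bound; the choice $t = \Theta(\log(n^3/\delta))$ is exactly what certifies the $0.26$-threshold used in Algorithm \ref{alg:ordbr}. The only minor subtlety remaining here is the handling of ties $\mu_a = \mu_b$ within a block: in such a case both items satisfy the Lemma \ref{lem:flag} premise with respect to each other, so both are flagged and the block contributes zero items to $S$ — still consistent with the bound $|S \cap \cB_i| \leq 1$. Because $\mu_1 > \mu_i$ strictly for every $i \neq 1$, no tie can trigger the flagging condition for item $1$, so the conclusion $1 \in S$ is preserved on the same event.
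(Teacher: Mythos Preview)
Your proposal is correct and follows essentially the same approach as the paper's own proof: both derive Lemma \ref{lem:pre-proc} deterministically from Lemma \ref{lem:flag} by arguing, block by block, that every non-strictly-maximal item gets flagged (with the tie case handled exactly as you describe) and that ${\rm Flag}(1) = 0$ yields $1 \in S$. The only cosmetic difference is that the paper separates the unique-maximizer and non-unique-maximizer cases explicitly, whereas you handle both via the pairwise argument; the content is the same.
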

	Now assume that the subset $S$ satisfies the guarantees of the above lemma. Since the items in $S$ come from distinct blocks, the corresponding arms are independent and the subset-wise feedback on subsets of $S$ follow the independent RUM model. Hence running Algorithm 1 from \cite{SG20} will return an $\epsilon$-best arm in $O(r\epsilon^{-2} \log(r/\delta))$-samples.
	
	{\bf Justifying the Sample complexity.} In the for loop (Lines 6-9), each triple $\cT \in {n \choose 3}$ is played $t$ times. Therefore, then total number of arm pulls in the for loop is bounded by $O(n^3 t) \leq O(n^3 \log(n/\delta))$ which is a constant independent of $\epsilon$. Therefore, step corresponding to Line $11$ incurs a sample complexity cost of $O(r{\epsilon^{-2}}\log(r/\delta))$. Since the latter term dominates as $\epsilon \to 0$, this establishes the desired sample complexity. The complete proof is given in Appendix \ref{app:br_ub}.
\end{proof}

\begin{rem}
We consider playing subsets of various sizes, because without this relaxation, the winner determination problem can again become ill defined in the correlated setting (see Lem. 	\ref{thm:impos_fxd}).
\end{rem}

\vspace{-10pt}
\subsection{\textbf{$r$-\br:} Lower Bound} 
\label{sec:br_lb}
Our lower bound analysis is based on the following intuition: Given an instance $\cI$ with $r$-\br, where $[n] = \uplus_{i \in [r]} \cB_i$ is the partitioning of arms into block structure. Consider the set $\cS$ constructed by adding the arm with the highest score from each block $\cB_i$. Now the key insight is that any algorithm which solves the $\epsilon$-best arm identification problem on $\cI$ must also solve the $\epsilon$-best arm identification problem on the set of independent arms $\cS$. This observation can be used to embed instances of \texttt{IND-RUM}$(r,k,\bmu)$ into instances of \texttt{BR-RUM}$(n,k,r,\bmu')$, thus forcing the worst case sample complexity of the latter to be lower bounded by that of the former, which is known to be $\Omega(r\epsilon^{-2} \log(1/\delta))$.

\vspace*{-13pt}
\begin{figure}[H]
	\begin{center}
		\includegraphics[width=0.4\textwidth]{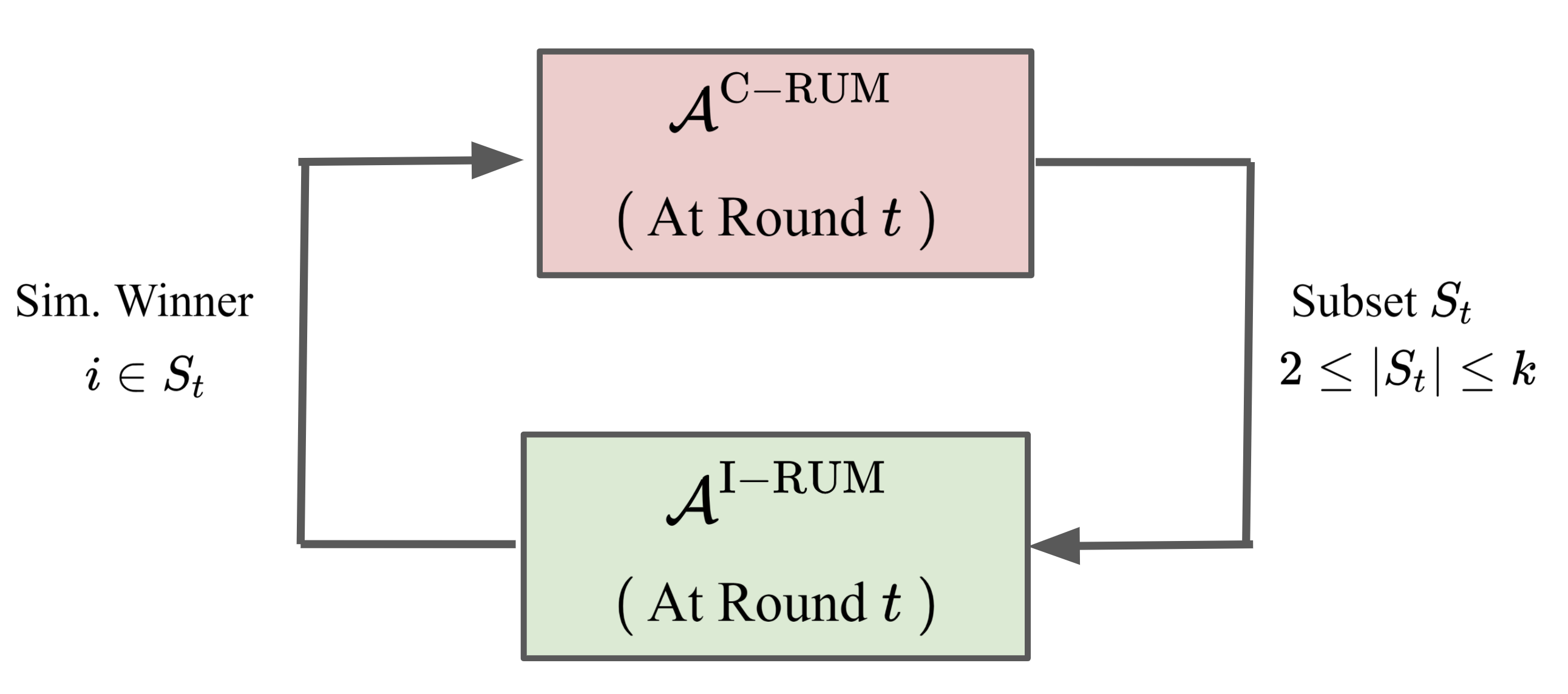} 
		\vspace{-7pt}
		\caption{Reduction (Pseudocode in Appendix \ref{app:pc2})}		
		\label{fig:lb_demon1}
	\end{center}
\end{figure}
\vspace*{-20pt}

\begin{restatable}[Performance limit for \obr]{thm}{lbordbr}
	\label{thm:lb_ordbr}
	\label{THM:LB_ORDBR}
	Given $\epsilon \in (0,1]$, $\delta \in (0,1]$, $r,k \in [n]$, for any $(\epsilon,\delta)$-PAC algorithm $A$ for the \prob, there exists an instance of \rumb, say $\nu$, where the expected sample complexity of $A$ on $\nu$ is  at least
	$\Omega\big( {r}\epsilon^{-2} \log 1/\delta\big)$.
\end{restatable}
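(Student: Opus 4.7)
The plan is to reduce the \prob\ on \rumb\ instances to $(\epsilon,\delta)$-PAC best-arm identification in the Independent-RUM model on $r$ arms, for which an $\Omega(r\epsilon^{-2}\log(1/\delta))$ lower bound is already known (see e.g.~\cite{SG18,SG20}). Concretely, given any $(\epsilon,\delta)$-PAC algorithm $A$ for \rumb, I will build an $(\epsilon,\delta)$-PAC algorithm $A'$ for the Independent-RUM setting on $r$ arms with subsets of size at most $\min(k,r)$ that uses \emph{exactly} the same number of samples as $A$; the claimed bound then follows by contraposition.

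First I would construct the hard \rumb\ instance. Starting from any Independent-RUM instance $\cI_0$ on $r$ arms with scores $\mu_1 > \mu_2 + \epsilon \geq \cdots \geq \mu_r$ that witnesses the independent lower bound, I partition $[n]$ into nonempty blocks $\cB_1 \uplus \cdots \uplus \cB_r$ and define the \rumb\ instance $\cI_1$ by assigning every item in $\cB_i$ the score $\mu_i$ and the common noise $\zeta_i$, where $(\zeta_1,\ldots,\zeta_r)$ has the same joint distribution as the noise of $\cI_0$. Items inside a block are then exchangeable (identical score and noise), and since $\mu_1 > \mu_i + \epsilon$ for $i>1$, the set of $\epsilon$-optimal items in $\cI_1$ is exactly $\cB_1$. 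Next I would design the simulation: whenever $A$ plays a subset $S \subseteq [n]$ with $|S| \leq k$, $A'$ forms $S' := \{i \in [r] : S \cap \cB_i \neq \emptyset\}$ (so $|S'| \leq \min(k,r)$), forwards $S'$ to its Independent-RUM oracle, observes a winning block $i^\star \in S'$, and returns to $A$ a uniformly random item from $S \cap \cB_{i^\star}$. A direct coupling argument shows that this simulated feedback is distributed identically to the true winner of $S$ under $\cI_1$: since within-block items are indistinguishable, the winning block in $\cI_1$ matches the Independent-RUM winner on $S'$, and intra-block ties are broken uniformly. When $A$ halts and outputs $\hat\jmath \in [n]$, $A'$ returns the unique $i$ with $\hat\jmath \in \cB_i$.

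The correctness and sample complexity claims then follow immediately: with probability at least $1-\delta$ the output $\hat\jmath$ of $A$ lies in $\cB_1$, so $A'$ outputs arm $1$, which is the unique $\epsilon$-best arm of $\cI_0$, and $A'$ spends exactly the number of samples $A$ does; hence the Independent-RUM lower bound transfers to $A$. The main subtle point I foresee is verifying that the Independent-RUM lower bound continues to apply when $A'$ is restricted to query subsets of size $\leq \min(k,r)$ rather than an arbitrary $k$; fortunately the bound of \cite{SG18,SG20} is subset-size oblivious, which is precisely what makes the reduction yield a matching bound. I would then formalize these steps, including the pseudocode for the reduction referenced in Figure~\ref{fig:lb_demon1}, in the appendix.
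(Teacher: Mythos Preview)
Your proposal is correct and follows essentially the same reduction as the paper: embed an \irum\ instance on $r$ arms into a \rumb\ instance on $n$ arms, simulate the feedback for the BR-RUM algorithm using the I-RUM oracle, and invoke the known $\Omega(r\epsilon^{-2}\log(1/\delta))$ lower bound for the independent model (Theorem~\ref{thm:lb_ind}). The only cosmetic difference is the embedding: the paper keeps arms $1,\ldots,r$ as-is and pads with $n-r$ dummy arms of score $-\infty$ (so the simulation is ``intersect with $[r]$ and play''), whereas you replicate each I-RUM arm $i$ into a full block $\cB_i$ with identical score and noise (so the simulation is ``collapse to the block level, play, then break ties uniformly within the winning block''); both yield a faithful coupling and the same sample-complexity transfer.
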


\begin{proof}[Proof sketch of Thm. \ref{thm:lb_ordbr}]
The proof of Theorem \ref{thm:lb_ordbr} uses a reduction from the problem of $(\epsilon,\delta)$-PAC learning in an \texttt{IND-RUM}$(r,k,\bmu)$ instance to an $(\epsilon,\delta)$-PAC learning problem in a \texttt{BR-RUM}$(n,k,r,\bmu')$ instance. Formally, the reduction proceeds as follows. Given an algorithm $\cA^{C-RUM}$ which $(\epsilon,\delta)$-PAC learns best items from \texttt{BR-RUM}$(n,k,r,\bmu')$ instances, we can construct an algorithm $\cA^{\rm I-RUM}$ which does the same for the independent setting with $r$-arms. In particular, the algorithm embeds the best item learning problem on $r$-arms over a unknown score profile $\bmu$ inside best item learning problem on $n$-arms with correlations, and then uses $\cA^{\rm C-RUM}$ to solve the large problem. This is done using a simple idea: the outer algorithm $\cA^{\rm C-RUM}$ adds $n-r$ dummy items to the set with score $\mu_i = -\infty$ with appropriate correlation structure. This ensures that:
(i) 	The $\epsilon$-best item set in $[r]$ is also the $\epsilon$-best item set in the larger set $[n]$.
(ii) The algorithm $\cA^{\rm I-RUM}$ can simulate the subsetwise preference feedback required by $\cA^{\rm C-RUM}$ on $[n]$ using its own preference feedback on subsets of $[r]$ (Fig. \ref{fig:lb_demon1}).

Overall, if $\cA^{\rm C-RUM}$ is an $(\epsilon,\delta)$-PAC algorithm for \texttt{BR-RUM}$(n,k,r,\bmu')$, then so is $\cA^{\rm I-RUM}$ for \texttt{IND-RUM}$(r,k,\bmu)$. Therefore the sample complexity of $\cA^{C-RUM}$ is bounded by that of $\cA^{I-RUM}$, which we prove to be $\Omega(r \epsilon^{-2}\log(1/\delta))$--this is done by extending previous known lower bounds for fixed subset sizes to the setting of variable-sized subsetwise plays (Thm. \ref{thm:lb_ind}, Appendix \ref{app:lb_ind}). The proof is given in Appendix \ref{app:br_lb}.
\end{proof}

On the other hand, our next result shows that, there is no advantage in querying pairwise-feedback $(k=2)$ even for any $r \ge 2$ (note $r=1$ is a trivial case), as stated formally in the following theorem.

\begin{restatable}[Pairwise Preferences: Sample Complexity Lower Bound for \lr]{thm}{lbduel}
	\label{thm:lb_duel}
	\label{THM:LB_DUEL}
	Given $\epsilon \in (0,1/4]$, $\delta \in (0,1]$, and general $r \in [n]$ $(r> 1)$, for any $(\epsilon,\delta)$-PAC algorithm $A$ for \prob\, problem, $\exists$ an instance of \texttt{BR-RUM}$(n,2,r,\bmu)$, where the expected sample complexity of $A$ is at least $\Omega\big( \frac{n}{\epsilon^2} \log \frac{1}{\delta}\big)$-- independent of $r$.
\end{restatable}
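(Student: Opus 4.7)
The plan is to mirror the template of Theorem~\ref{thm:lb_ind} -- the $\Omega(n\epsilon^{-2}\log(1/\delta))$ pairwise lower bound for fully independent RUMs -- and show that the \br\, structure does nothing to help a learner restricted to $k=2$, since pairwise feedback cannot exploit correlation. Concretely, I would construct a family of $n$ hypotheses $\{\nu_j\}_{j \in [n]}$, one for each candidate $\epsilon$-best arm: in $\nu_j$, arm $j$ is placed in a singleton block with score $\mu_j = 1/2 + 2\epsilon$, while every other item shares the common score $1/2$, and the remaining $n-1$ items are partitioned into the other $r-1$ blocks via a fixed template so that the block rank is exactly $r$ (e.g.\ for $r=2$, a single block of size $n-1$; for $r>2$, one block of size $n-r+1$ together with $r-2$ spare singletons drawn from a fixed dummy index set). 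The noise is drawn from any standard continuous zero-mean symmetric distribution $\cD$. By design, $j$ is the unique $\epsilon$-best arm in $\nu_j$, so any $(\epsilon,\delta)$-PAC algorithm must output $j$ under $\nu_j$ with probability $\geq 1-\delta$.

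The informational crux is that a pairwise query $(a,b)$ issued under $\nu_j$ carries no information about the identity of $j$ unless $j \in \{a,b\}$. Indeed, for any pair with $j \notin \{a,b\}$, both items share score $1/2$; so either (i) they lie in the same block, in which case $X_a = X_b$ and uniform tie-breaking yields a $1/2$--$1/2$ outcome, or (ii) they lie in distinct blocks, in which case $X_a - X_b = \zeta_a - \zeta_b$ is symmetric about $0$ and again gives a $1/2$--$1/2$ outcome. The only informative queries are the cross-block pairs $(j,b)$, which behave like independent pairwise comparisons with score gap $2\epsilon$ (since $j$'s singleton block makes $\zeta_j$ independent of the rest), so the win-probability of $j$ is $1/2 + \Theta(\epsilon)$. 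Consequently, $\mathrm{KL}\bigl(\nu_j((a,b)) \,\|\, \nu_{j'}((a,b))\bigr)$ equals exactly $0$ whenever $\{a,b\} \cap \{j,j'\} = \emptyset$ and is $O(\epsilon^2)$ otherwise, matching the per-sample KL structure of the fully independent RUM setting.

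The rest of the proof is then a direct invocation of the change-of-measure / divergence-decomposition machinery used in the proof of Theorem~\ref{thm:lb_ind} (Kaufmann--Capp\'e--Garivier style): feed the per-sample KL bounds into the canonical inequality $\mathrm{KL}(P^\pi_{\nu_j} \,\|\, P^\pi_{\nu_{j'}}) \geq \log(1/(2.4\delta))$ to conclude that the expected number of pairwise samples involving arm $j$ or $j'$ under $\nu_j$ is $\Omega(\log(1/\delta)/\epsilon^2)$, and then sum these constraints across $j,j'$ in the standard way (exploiting that each sample involves exactly two arms, so the total query count equals $\tfrac{1}{2}\sum_a E_{\nu_l}[N_a]$) to extract an instance $\nu_l$ on which the expected sample complexity is $\Omega(n \epsilon^{-2} \log(1/\delta))$. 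Because the resulting bound is governed only by the per-sample $O(\epsilon^2)$ KL and the $n$ distinct hypotheses, it is independent of $r$, in sharp contrast to the $k>2$ upper bound of Theorem~\ref{thm:ub_ordbr} which reduces the $n$ factor to $r$.

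The main technical obstacle is ensuring that no pairwise query across the family $\{\nu_j\}$ ever exhibits a jump from a tied (same-block) outcome to a deterministic (same-block with differing scores) outcome -- such a jump would create an infinite KL on a single pair and would let the learner pinpoint $j$ in a constant number of queries, destroying the $1/\epsilon^2$ factor. Keeping every non-best item at the common score $1/2$, and placing the unique $\epsilon$-best in a singleton block, is precisely what prevents this in every $\nu_j$. A secondary bookkeeping point is arranging block rank to equal $r$ (not merely $\leq r$) across all $n$ hypotheses while preserving the singleton design for the best arm; this is handled by the $r-2$ spare singletons in the template. Once these structural properties are in place, the Kaufmann--Capp\'e--Garivier argument ports over without modification and produces the stated $\Omega(n\epsilon^{-2}\log(1/\delta))$ lower bound.
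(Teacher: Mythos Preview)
Your proposal is correct and follows essentially the same approach as the paper's own proof: both place the candidate best arm in a singleton block (so its noise is independent of the rest), keep all other items at a common score so that any pair not touching the candidate is a $1/2$--$1/2$ coin flip regardless of block membership, bound the per-pair KL by $O(\epsilon^2)$ for the informative duels, and then invoke the Kaufmann--Capp\'e--Garivier change-of-measure inequality (Lemma~\ref{lem:gar16}) summed over the $n-1$ alternative hypotheses. The only cosmetic differences are that the paper fixes the noise to be $\cN(0,1)$ and explicitly works through the three cases $\{1\in S,\,a\in S\}$, $\{1\in S,\,a\notin S\}$, $\{1\notin S,\,a\in S\}$ for the KL computation, whereas you sketch these uniformly; and the paper's non-singleton blocks are equal-sized rather than your ``one large block plus $r-2$ spare singletons'' template.
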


\begin{rem}
[Separation in the sample complexity for $k=2$ vs $k=3$] Intuitively, triples can be used to determine whether a subset involves the winner much faster: Consider the instance with $2$ blocks, where the first block is a singleton with score $\mu + \epsilon$ and the second block consists of $(n-1)$.identical arms, each with score $\mu$. Then for every distinct choice of $i,j \in [n]$ we have  $\Pr(i|\{i,j\}) = \frac12 + O(\epsilon)$ if $i = 1$ and $\Pr(i| \{i,j\}) = \frac12$ if $i\neq 1$ i.e, the duels involving the winner behave near identically to duels not involving it and it would take $\Omega_\delta(\epsilon^{-2})$-queries\footnote{Here we use $\Omega_\delta(\cdot)$ and $O_\delta(\cdot)$ notations to suppress multiplicative factors that depend only on $\delta$.} to distinguish between the two cases. On the other hand, consider a triple $\cT := (i,j,k)$ such $i < j < k$. Then the win probabilities for the the arms playing $\cT$ are $(1/2,1/4,1/4)$ if $i = 1$, and $(1/3,1/3,1/3)$ otherwise, and it would take only $O_\delta(1)$-queries to distinguish between the two, which is significantly smaller than that of the dueling feedback setting.  
\end{rem}

 

\section{$(r,\teta,\eta)$-\br: Algorithm and Analysis  for the general \br\, Choice Model under Noise}
\label{sec:nbr}

Interestingly, our findings show that even for the noisy settings, Algorithm \ref{alg:ordbr} is a correct $(\epsilon,\delta)$-PAC algorithm when the correlation matrix nearly has a $0$-$1$-block diagonal structure (Thm. \ref{thm:ub_noisy_ordbr}). Formally, our main results are stated as Thm. \ref{thm:noisy_bbox} and \ref{lem:noisy_prep} which gives the precise dependence on noise-vs-the suboptimality gap and its trade-off with learning rate.

\vspace{-5pt}
\subsection{At most $\teta$-MI: Analysis for nearly-independent \irum\, model}

In this section we discuss the setting of noisy-block rank model such that items across the blocks are at most ``$\teta$-identical'' (precisely at most $\teta$-mutual information):

\textbf{$\teta$-\irum:} This is a generalization of \irum\, model where the noise distributions $\cD_i$'s are no longer independent, but can have at most $\teta$-mutual information, i.e.  Corr$(\zeta_i,\zeta_j) \le \teta$, for any pair of distinct arms $i,j \in [n]$ (for any $\teta \in [0,1)$). Clearly, setting $\teta = 0$, we recover the original \irum \, models, as studied in \cite{SG18,SG20,AzariRB+14}.

The main result of this subsection is to show that under `low noise' $(\tilde{\eta})$, our algorithm \brpb\, (Alg. \ref{alg:ordbr}) still finds an $\epsilon$-best item with $O(r \epsilon^{-2} \log(n/\delta))$ sample-complexity. Specifically, the important aspect we note is while the guarantees of Seq-PB sub-routine of Alg. \ref{alg:ordbr} rely on the independence structure across blocks, it can be shown to yield correct results even under $\teta$-\irum (Thm. \ref{thm:noisy_bbox}) model under a separation of scores assumption (see Thm. \ref{thm:noisy_bbox}). Before stating Thm. \ref{thm:noisy_bbox}, we find it useful to introduce some definitions:

\begin{defn}[Best-Item-Advantage-Ratio]
\label{def:rat}
Given any \irum\, model, and subset $S \subseteq [n]$, the advantage ratio of the best item of set $S$, $i^*_S:=\argmax_{i \in S}\mu_i$, over any other item $j \in S \setminus \{i^*_S\}$ is defined as \text{\ratio}$(j,S)$:  
\[
\text{BAR}_{\otimes_{\ell \in [n]} \cD_{\ell}}(\cI)(j,S) = \frac{Pr(i^*_S|S)}{Pr(j|S)}.
\] 
(Explicit use of the subscript ${\otimes_{\ell \in [n]} \cD_{\ell}}(\cI)$ represents the underlying \irum \, model.)
\end{defn}

\begin{cor}
 \label{cor:barlb}
 It is easy to note that by definition, BAR$(j,S) > \frac{1}{k Pr(j|S)}$ for any subset $S$ of size $k$, since the win probability of the best item $i_S^*$ in $S$ is at least $\frac{1}{k}$. 
 \end{cor}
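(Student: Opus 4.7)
The plan is to derive the bound from two elementary facts about the independent RUM model: (i) the winning probabilities over any subset form a valid probability distribution and therefore sum to one, and (ii) the item with the largest score parameter has the largest winning probability in every subset in which it participates. Combining these two observations immediately gives the claim.

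First, I would recall that by the definition of $Pr(\cdot|S)$ in Eqn.~\eqref{eq:prob_rum}, the quantities $\{Pr(i|S)\}_{i \in S}$ form a probability distribution over $S$. Since $|S| = k$, the pigeonhole principle guarantees at least one index $i^\dagger \in S$ with $Pr(i^\dagger|S) \geq 1/k$. This is a purely combinatorial step and requires no structural assumption on the noise.

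Next, the key step is to identify that winning-most index with the best item $i^*_S = \argmax_{i \in S} \mu_i$. In an IND-RUM the noise variables $\zeta_i$ are drawn iid from a common distribution $\cD$, so the win probability is monotone in the location parameter: whenever $\mu_a \geq \mu_b$, a standard coupling argument (swap the iid draws $\zeta_a \leftrightarrow \zeta_b$ and compare the events $\{X_a > X_j, \forall j \in S\setminus\{a\}\}$ and $\{X_b > X_j, \forall j \in S\setminus\{b\}\}$) gives $Pr(a|S) \geq Pr(b|S)$ for every $S \ni a,b$. Hence $Pr(i^*_S|S) = \max_{i \in S} Pr(i|S) \geq 1/k$. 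Dividing through by $Pr(j|S)$, one obtains
\[
\text{BAR}_{\otimes_{\ell \in [n]}\cD_\ell}(\cI)(j,S) \;=\; \frac{Pr(i^*_S|S)}{Pr(j|S)} \;\geq\; \frac{1}{k\,Pr(j|S)},
\]
with strict inequality whenever the scores within $S$ are not all equal (which is implicit in the setup, where the best item is uniquely defined via $\mu_1 > \mu_i$ for every $i \neq 1$).

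The argument is essentially immediate once the monotonicity in (ii) is in hand; that monotonicity is the only ingredient that is not purely combinatorial, and it is a routine consequence of the iid noise assumption underlying an IND-RUM, so no substantive obstacle is expected. The only subtlety worth flagging is to be explicit that the $\geq 1/k$ bound is realized precisely by $i^*_S$ (and not some other index), which is exactly what the score-monotonicity supplies.
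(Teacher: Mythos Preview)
Your proposal is correct and follows exactly the reasoning the paper sketches inline: the paper simply asserts that $Pr(i^*_S|S)\geq 1/k$ and divides, while you additionally justify that assertion via pigeonhole plus the standard score-monotonicity (coupling) argument for iid noise. This is the same approach with the omitted step filled in; your remark about strict versus weak inequality is also apt, since the paper's ``$>$'' tacitly relies on the scores in $S$ not being all equal.
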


\begin{defn}[Minimum Best-Item-Advantage-Ratio]
\label{def:mrat}
The $\epsilon$-\ratio, (\mrat), is defined to be the minimum (worst case) \ratio\, an $\epsilon$-best item gets against an non-$\epsilon$ best item $(j)$ globally, irrespective of which set it appears inside. More precisely, let $[n]_\epsilon:=\{i \in [n] \mid \mu_i > \mu_1 - \epsilon\}$ denotes the set of all $\epsilon$-best items in $[n]$, then for any \irum\, model $\cI$, we define its \mrat$(\cI)$\, to be: 
\begin{align}
\label{eq:mrat}
\nonumber & \text{\mrat}_{\otimes_{\ell \in [n]} \cD_{\ell}}(\cI) =\\
& \min_{S \in \{S \mid S \cap [n]_{\epsilon} \neq \emptyset\}, j \in S \setminus [n]_\epsilon}\text{BAR}_{\otimes_{\ell \in [n]} \cD_{\ell}}(\cI)(j,S).
\end{align}
\end{defn}

Note \mrat\, is a measure of \emph{worst-case quality separation} of an $\epsilon$-best item over a non $\epsilon$-best item (in terms of item-preferences), which is the key complexity factor in the sample complexity analysis of Seq-PB subroutine used in Alg. \ref{alg:ordbr} as stated below:

\begin{restatable}[Correctness and Sample Complexity of Seq-PB on $\teta$-\irum]{thm}{noisybbox}
	\label{thm:noisy_bbox}
Consider any subsetwise preference model \irum\, $\cI$ on the underlying noise distribution $\cD$, such that \mrat$(\cI) \ge 1 + \frac{4c\epsilon}{1-2c}$ for some $\cD$-dependent constant $c = c(\cD) > 0$. Then Seq-PB$(n,k,\epsilon,\delta,c)$ is an $(\epsilon,\delta)$-{PAC} algorithm on any instance $\nu$ of $\teta$-\irum\, model with sample complexity $O(\frac{n}{c^2\epsilon^2} \log \frac{k}{\delta})$, 
for any $\teta < [0,\frac{c^2\epsilon^2}{32^2 k^4}\big)$. 
Here $\nu$ being an instance of the \irum\, model corresponding to the noise distribution $\cD$.
\end{restatable}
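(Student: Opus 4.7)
The plan is to treat the Seq-PB routine of~\cite{SG20} as a black box whose correctness and sample complexity depend on the underlying choice model only through its subsetwise winner distributions $\{\Pr(\cdot\mid S): S\subseteq[n],|S|\leq k\}$. Since~\cite{SG20} already establishes that Seq-PB is $(\epsilon,\delta)$-PAC with the stated $O(n/(c^2\epsilon^2)\log(k/\delta))$ query complexity whenever the \mrat\, of the model is at least $1 + 4c\epsilon/(1-2c)$, the entire task reduces to showing that the noisy winner probabilities $\Pr_\nu(\cdot\mid S)$ of a $\teta$-\irum\, instance $\nu$ are close enough to the independent probabilities $\Pr_\cI(\cdot\mid S)$ of the reference independent instance $\cI$ (same marginal noise $\cD$ and same scores $\bmu$) that the $\epsilon$-BAR hypothesis degrades by at most a factor of two, leaving a slack of $1 + 2c\epsilon/(1-2c)$ which still suffices for the SG20 analysis.

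For the closeness step, I will fix any $S$ with $|S|\leq k$ and let $Q_S$ (resp.\ $\bar Q_S=\otimes_{i\in S}\cD$) denote the joint law of $(\zeta_i)_{i\in S}$ under $\nu$ (resp.\ $\cI$). Peeling off one coordinate at a time via the chain rule of mutual information and using that every bipartition MI of $\zeta_S$ is at most $\teta$, I obtain $D_{\mathrm{KL}}(Q_S\Vert\bar Q_S)\leq k\teta$. Pinsker's inequality then yields
\[ \|Q_S-\bar Q_S\|_{\mathrm{TV}} \leq \sqrt{k\teta/2}\leq \tfrac{c\epsilon}{32\,k^{3/2}}, \]
where the second inequality uses the hypothesis $\teta<c^2\epsilon^2/(32^2 k^4)$. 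Since the winner of $S$ is the deterministic function $\argmax_{i\in S}(\mu_i+\zeta_i)$ of $\zeta_S$, the induced winner distributions inherit the same TV bound, so $|\Pr_\nu(i\mid S)-\Pr_\cI(i\mid S)|\leq c\epsilon/(32 k^{3/2})$ for every $i\in S$.

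Next I will convert this pointwise closeness into advantage-ratio closeness. By Corollary~\ref{cor:barlb} we have $\Pr_\cI(i_S^*\mid S)\geq 1/k$, so the TV perturbation above is much smaller than the best-item probability. A short case analysis (the case $\Pr_\cI(j\mid S)\leq 1/(32k^{3/2})$, where the noisy BAR blows up automatically, versus $\Pr_\cI(j\mid S)\geq 1/(32k^{3/2})$, where the perturbation is only a $(1\pm O(c\epsilon))$ multiplicative factor) gives
\[ \frac{\Pr_\nu(i_S^*\mid S)}{\Pr_\nu(j\mid S)} \geq \frac{\Pr_\cI(i_S^*\mid S)-c\epsilon/(32k^{3/2})}{\Pr_\cI(j\mid S)+c\epsilon/(32k^{3/2})} \geq 1+\frac{2c\epsilon}{1-2c}. \]
Thus $\nu$ satisfies the \mrat\, precondition of~\cite{SG20} with the slightly weaker constant $2c\epsilon/(1-2c)$, and the SG20 guarantee applied to $\nu$ yields both correctness and the claimed $O(n/(c^2\epsilon^2)\log(k/\delta))$ sample complexity.

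The principal obstacle I anticipate is the MI-chain-rule step: the $\teta$-\irum\, definition quoted in the paper is phrased as a pairwise MI bound, whereas the telescoped sum $\sum_j I(\zeta_j;\zeta_{<j})$ involves conditional MI terms which pairwise bounds do not directly control. I will handle this by reading the hypothesis in its stronger bipartition form (matching the $(r,\teta,\eta)$-\br\, definition in Sec.~\ref{sec:prelims}, which is the setting in which Thm.~\ref{thm:noisy_bbox} is ultimately applied), so that each telescoped term is itself a bipartition MI and hence at most $\teta$. A secondary concern is that Seq-PB queries polynomially many distinct subsets; a union bound over them costs only an extra logarithmic factor, and the scaling $\teta=O(c^2\epsilon^2/k^4)$ is chosen precisely so that the cumulative TV perturbation stays well below the concentration slack of order $c\epsilon$ used in~\cite{SG20}.
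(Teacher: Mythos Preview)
Your proposal is correct and follows essentially the same route as the paper: bound the perturbation of subsetwise winner probabilities via mutual information and Pinsker, show this degrades the \mrat\ by at most a factor of two (via a case split), then invoke the analysis of Seq-PB from \cite{SG20}. The paper packages the perturbation bound as Lemma~\ref{lem:cross-pb1} (telescoping and applying Pinsker to each bipartition term, yielding error $k\sqrt{\teta}$), whereas you sum the chain-rule terms first to bound the total correlation by $k\teta$ and apply Pinsker once, giving the tighter $\sqrt{k\teta/2}$; the case split the paper uses is on whether the independent ratio $\gamma$ exceeds $4$, which is equivalent in spirit to your split on the size of $\Pr_\cI(j\mid S)$.

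Two minor remarks. First, the chain-rule terms $I(\zeta_j;\zeta_{<j})$ are unconditional (not conditional) mutual informations, so your stated obstacle is slightly misdescribed, though your resolution---reading the hypothesis in its bipartition form, as in the $(r,\teta,\eta)$-\br\ definition---is exactly what is needed and is what the paper also uses. Second, your ``secondary concern'' about a union bound over subsets is unnecessary: the \mrat\ lower bound you establish is a deterministic property of the noisy choice model holding for every $S$ simultaneously, so no union bound enters at this stage; the only randomness is in Seq-PB's empirical estimates, which \cite{SG20} already controls.
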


\vspace{-15pt}
\begin{proof}[Proof sketch of Thm. \ref{thm:noisy_bbox}]
The proof depends on the following main lemma which ensures if the \mrat\, of any \irum\, based preference model is bounded away by a certain threshold, then the \mrat\, of the corresponding $\teta$-\irum\, model will also be bounded away by nearly a same threshold for `small' $\teta$.

\begin{restatable}[Lower bound for the advantage ratio $\teta$-\irum\, model]{lem}{thmrat}
\label{thm:ratio}
Consider \irum\, model of Thm. \ref{thm:noisy_bbox}. Then for any $\teta$-\irum\, based subsetwise preference model, say $\cI'$, we have
\begin{equation*}
\label{eq:mat_z}
\text{\mrat}_{\cD}(\cI') \ge 1 + \frac{2c\epsilon}{1-2c}. 
\end{equation*}
\end{restatable}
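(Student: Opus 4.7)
The goal is to show that if the underlying \irum\, instance $\cI$ satisfies $\text{\mrat}(\cI) \ge 1 + \tfrac{4c\epsilon}{1-2c}$, then the $\teta$-correlated perturbation $\cI'$ (which shares the same one-dimensional marginals) satisfies $\text{\mrat}(\cI') \ge 1 + \tfrac{2c\epsilon}{1-2c}$ whenever $\teta < c^2\epsilon^2/(32^2 k^4)$. My plan is a standard information-theoretic perturbation: first bound the deviation of every choice probability $\Pr(i|S)$ in passing from $\cI$ to $\cI'$ by a function of $\teta$, then push this deviation through the BAR ratio.

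To carry out the first step, for any fixed subset $S$ of size at most $k$, let $P_S$ and $P'_S$ denote the joint laws of $(\zeta_i)_{i \in S}$ under $\cI$ and $\cI'$ respectively. The two distributions share identical marginals, so $D_{\mathrm{KL}}(P'_S \Vert P_S)$ equals the total correlation (multi-information) $C(\zeta_S) = \sum_{i=2}^{|S|} I\big(\zeta_{s_i};\zeta_{s_1},\ldots,\zeta_{s_{i-1}}\big)$. Applying the $\teta$-mutual-information hypothesis to each partition $(\{s_i\},\{s_1,\ldots,s_{i-1}\})$ yields $C(\zeta_S) \le (k-1)\teta$, and Pinsker's inequality then gives $\mathrm{TV}(P'_S,P_S) \le \sqrt{k\teta/2}$. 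Since the event ``$i$ wins in $S$'' is measurable with respect to the joint noise on $S$, this upgrades to $|\Pr_{\cI'}(i|S) - \Pr_{\cI}(i|S)| \le \alpha := \sqrt{k\teta/2}$ uniformly over $i \in S$.

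For the second step, fix any $S$ intersecting $[n]_\epsilon$ and any $j \in S \setminus [n]_\epsilon$, and set $p := \Pr_{\cI}(i^*_S|S)$, $q := \Pr_{\cI}(j|S)$, with $p',q'$ their $\cI'$-analogues. The hypothesis on $\cI$ gives $p \ge q\big(1 + \tfrac{4c\epsilon}{1-2c}\big)$, so it suffices to show $(p-\alpha)/(q+\alpha) \ge 1 + \tfrac{2c\epsilon}{1-2c}$. Rearranging, this is equivalent to $p - q\big(1 + \tfrac{2c\epsilon}{1-2c}\big) \ge \alpha\big(2 + \tfrac{2c\epsilon}{1-2c}\big)$; the left-hand side is at least $\tfrac{2qc\epsilon}{1-2c}$ by the BAR hypothesis, and combined with the lower bound $q \ge 1/(k\cdot\text{BAR}) \gtrsim 1/(2k)$ from Corollary~\ref{cor:barlb} (valid for the relevant small-$\epsilon$ range), the inequality boils down to $\alpha \lesssim c\epsilon/k$. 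Plugging in $\alpha = \sqrt{k\teta/2}$ and the assumed bound $\teta < c^2\epsilon^2/(32^2 k^4)$ satisfies this comfortably; taking the minimum over $(S,j)$ then yields the claimed bound on $\text{\mrat}_{\cD}(\cI')$.

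The main obstacle I expect is the first step --- specifically, turning a pairwise- or partition-level MI hypothesis into a TV bound on the $k$-dimensional joint. The chain-rule argument above goes through cleanly under the partition-MI reading (as in the noisy-block-rank definition); if only pairwise MI is available one must either invoke a subadditivity-type lemma (potentially paying extra poly-$k$ factors) or exploit Gaussianity (where pairwise uncorrelation forces joint independence) to close the gap, and the permissible range of $\teta$ would tighten accordingly. The remaining perturbation algebra is elementary.
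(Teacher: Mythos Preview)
Your overall approach is the same as the paper's: control the deviation of each win probability via a Pinsker/TV argument (the paper's Lemma~\ref{lem:cross-pb1}), then push the additive error through the BAR ratio. Your first step is in fact tidier than the paper's --- you apply Pinsker once to the total correlation $D_{\mathrm{KL}}(P'_S\Vert P_S)=\sum_i I(\zeta_{s_i};\zeta_{s_1},\ldots,\zeta_{s_{i-1}})\le (k-1)\teta$ and obtain $\alpha=\sqrt{k\teta/2}$, whereas the paper telescopes and applies Pinsker term by term, picking up the looser error $k\sqrt{\teta}$.

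The gap is in the second step. You write $q \ge 1/(k\cdot\mathrm{BAR}) \gtrsim 1/(2k)$, but Corollary~\ref{cor:barlb} gives $q > 1/(k\cdot\mathrm{BAR}(j,S))$ with the ratio for the \emph{specific} pair $(S,j)$ under consideration, not the \mrat. The \mrat\ is only the \emph{minimum} over all eligible $(S,j)$; for an arbitrary fixed pair the ratio $\gamma=p/q$ can be arbitrarily large (take $j$ with score far below $\mu_{i^*_S}$), which drives $q$ toward $0$, and then your inequality $\tfrac{2qc\epsilon}{1-2c}\ge\alpha\bigl(2+\tfrac{2c\epsilon}{1-2c}\bigr)$ fails. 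The paper handles exactly this via a case split on $\gamma$: when $\gamma\ge 4$ one shows directly that $(p-k\sqrt{\teta})/(q+k\sqrt{\teta})\ge 2$ using only $\gamma p \ge 1/k$ and the assumed bound on $\teta$; when $\gamma\le 4$ one has $q\ge p/4\ge 1/(4k)$ and your ratio algebra goes through as written. Insert this split (or, equivalently, note that when $q$ is tiny the left side $p-q(1+\tfrac{2c\epsilon}{1-2c})$ is already at least $1/k - 2q \ge 1/(2k)$, which dominates the right side) and the argument closes.
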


Given the above lemma, the rest of the argument follows same as Thm. $4$ of \cite{SG20} as it pivots on the main assumptions on \mrat$(\cI)$ as achieved in Lem. \ref{thm:ratio}. The complete proof of Lem. \ref{thm:ratio} and Thm. \ref{thm:noisy_bbox} is given in Appendix \ref{app:xbb}.
\end{proof}

\subsection{At least $(1-\eta)$-Correlation: Nearly-identical intra-block noise}

In this subsection we discuss the setting of noisy intra block items when items inside the block are almost identical, with at least $(1-\eta)$-correlation. Our main finding here is to show that the pre-processing step of \brpb\, (Alg. \ref{alg:ordbr}), which exploits the intra-block item-correlations, is robust under `mild-noise' $\eta$ or more precisely when they are at least $(1-\eta)$-correlated. 

\begin{restatable}[]{thm}{noisyprep}
	\label{lem:noisy_prep}
	Let $\eta \in \big[0, \min\{\frac{1}{192},\min_{j \neq 1}\Delta^4_{1j}/16\}\big]$\footnote{Here $\Delta_{1j} := \mu_1 - \mu_j$ denotes the suboptimality gap between items $j$ and $1$}. Then with probability at least $1 - \delta/2$, the pre-processing step (Lines $6$-$10$) constructs a set $S$ of size at most $r$ such that (i) $1 \in S$ and (ii) $|S \cap \cB_a| \leq 1$ for every $a \in [r]$. Furthermore, the number of samples queried in the pre-processing step is at most $O(n^3 \log n/\delta)$.
\end{restatable}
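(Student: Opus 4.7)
The plan is to mirror the proof of Lemma \ref{lem:pre-proc} in the noisy setting, by establishing robust analogues of Claims \ref{cl:win-prob1} and \ref{cl:win-prob2}, and then appealing to Hoeffding concentration together with a union bound over all $\binom{n}{3}$ triples. The guiding idea is to choose $\eta$ small enough that the true triple win-probabilities remain on the intended sides of the algorithm's threshold $0.26$, with sufficient slack to absorb empirical fluctuations given that $t = O(\log(n^3/\delta))$ samples are taken per triple.

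The first step is a perturbation bound: for every triple $\cT$ and every $\ell \in \cT$, I will show that the win probability $\Pr_{\text{noisy}}(\ell|\cT)$ in the $(r,\teta,\eta)$-\br\ model differs from the corresponding $\Pr_{\text{exact}}(\ell|\cT)$ in the exact $r$-\br\ model (with the same partition and scores) by at most $O(\sqrt{\eta})$. For Gaussian noise (the main case, by the paper's remark on correlation and independence), this is immediate from coupling: if $i,j$ lie in the same block with $\mathrm{Corr}(\zeta_i,\zeta_j) \geq 1-\eta$, then $\zeta_j = \zeta_i + \xi_{ij}$ with $\E[\xi_{ij}^2] = O(\eta)$, so the events $\{X_i > X_j\}$ in the two models disagree only on the set $\{|\zeta_i-\zeta_j| > |\mu_i-\mu_j|\}$, whose measure is controlled by a Chebyshev or Gaussian-tail estimate. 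Integrating over the third item's randomness gives the claimed $O(\sqrt{\eta})$ bound for every triple and every $\ell$.

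Combining this perturbation bound with Claims \ref{cl:win-prob1}--\ref{cl:win-prob2} yields two noisy analogues: (i) for every triple $\cT \ni 1$, $\Pr_{\text{noisy}}(1|\cT) \geq \tfrac{1}{3} - O(\sqrt{\eta}) > 0.29$ whenever $\eta \leq 1/192$; and (ii) for every item $j \neq 1$ that is not uniquely top in its block, the triple $\cT = \{1, i, j\}$ with a block-mate $i$ satisfying $\mu_i \geq \mu_j$ has $\Pr_{\text{noisy}}(j|\cT) \leq \tfrac{1}{4} + O(\sqrt{\eta}) < 0.26$ under the same bound. The delicate sub-case is when item $1$ shares its block with some $j$ having $\mu_1 > \mu_j$: there the noise can raise $\Pr_{\text{noisy}}(j|\cT)$ roughly by $\Pr(|\zeta_1-\zeta_j| > \Delta_{1j})$. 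A Gaussian-tail (or Chebyshev) estimate bounds this by $O(\eta/\Delta_{1j}^2)$ and, propagated through the triple analysis of Claim \ref{cl:win-prob2}, this is exactly what forces the hypothesis $\eta \leq \Delta_{1j}^4/16$; the fourth power comes from squaring once when passing from $\sqrt{\eta}$ to $\eta$, and once more from the quadratic dependence on the score gap needed to separate $0.26$ from $1/4$.

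With (i) and (ii) in hand, Hoeffding's inequality applied with $t = O(\log(n^3/\delta))$ ensures that $|N_\cT(\ell)/t - \Pr_{\text{noisy}}(\ell|\cT)| < 0.03$ simultaneously for every triple and every item in it, except on an event of probability at most $\delta/2$ after union-bounding over the $O(n^3)$ pairs. On the complementary (good) event, item $1$ survives every flag test, so $1 \in S$, and every non-block-top item is flagged, so $|S \cap \cB_a| \leq 1$ for each $a$; the total number of arm pulls is $\binom{n}{3} \cdot t = O(n^3 \log(n/\delta))$. The principal obstacle is the sub-case inside item $1$'s own block described above: it is the only place where the intra-block perturbation $\eta$ interacts adversarially with a possibly small score gap $\Delta_{1j}$, and getting the dependence right is precisely what dictates the $\Delta_{1j}^4/16$ threshold in the hypothesis.
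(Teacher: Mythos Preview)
Your overall architecture---noisy win-probability bounds for the triples involving item $1$, then Hoeffding concentration with $t=O(\log(n^3/\delta))$, then a union bound over $\binom{n}{3}$ triples---matches the paper's proof exactly. The difference is in how the two win-probability bounds are obtained, and your version has a soft spot there.

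The paper does \emph{not} argue via a uniform $O(\sqrt{\eta})$ perturbation of each individual win probability $\Pr(\ell|\cT)$ from the exact to the noisy model. Instead it proves directly (Lemma~\ref{lem:noisy-win}, by a four-case analysis according to the block memberships of $i,j$) that $\Pr(1|\cT)\geq\tfrac13-4\sqrt{\eta}$, and then only bounds the \emph{combined} probability $\Pr(\{i,j\}|\cT)\leq\tfrac12+4\sqrt{\eta}$ when $i,j$ share a block (Corollary~\ref{corr:cross-win}). From the latter it concludes that whichever of $i,j$ has the smaller win probability---call it $m(\cT)$---satisfies $\Pr(m(\cT)|\cT)\leq\tfrac14+2\sqrt{\eta}$ and hence gets flagged. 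The contradiction argument for property (ii) then says: if two block-mates $i,j$ both survived, the triple $(1,i,j)$ would have flagged at least one of them. This sidesteps the need to say \emph{which} of $i,j$ is flagged, and in particular never has to compare $\mu_i$ to $\mu_j$ inside a block not containing item $1$.

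Your route instead asserts that the specific item $j$ with $\mu_i\geq\mu_j$ has $\Pr_{\rm noisy}(j|\cT)\leq\tfrac14+O(\sqrt{\eta})$, and derives this from a per-item perturbation bound. But your own coupling step gives a disagreement probability of order $\eta/\Delta_{ij}^2$, not $O(\sqrt{\eta})$ uniformly; when $\mu_i=\mu_j$ (or $\Delta_{ij}\ll\eta^{1/4}$) inside a block other than item $1$'s, that bound is useless, yet such pairs are perfectly allowed by the hypotheses of the theorem (which constrain only $\Delta_{1j}$). The conclusion you want actually does hold in this case, but it needs the paper's ``bound the sum, then take the min'' trick rather than a per-item perturbation.

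Finally, your accounting for the fourth power is off. The paper's mechanism is: Chebyshev (Lemma~\ref{lem:corr-conc}) gives $\Pr(|\zeta_1-\zeta_j|\geq\alpha)\leq 2\eta/\alpha^2$; choosing the \emph{fixed} threshold $\alpha=\eta^{1/4}$ makes this $2\sqrt{\eta}$, and then one needs $\Delta_{1j}\geq 2\eta^{1/4}$ so that on the complement event $X_1$ still dominates $X_j$. Rearranging $\Delta_{1j}\geq 2\eta^{1/4}$ is exactly $\eta\leq\Delta_{1j}^4/16$. There is no ``two squarings'' mechanism; the exponent $4$ comes straight from inverting $\eta^{1/4}$.
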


We defer the proof of the above to Appendix \ref{app:xpp}.

\begin{rem}
\label{rem:nbr_corr}
 Thm. \ref{lem:noisy_prep} says that we need $\eta$ to precisely depend on the suboptimality gaps, $\Delta_{1j}$, of the items residing in the block of the best item-$1$. Note if $\epsilon < \min_{j \in [n]\sm \{1\}}\Delta_{1,j}$, then this immediately implies $\eta < \epsilon^4/16$ is sufficient enough for Thm. \ref{lem:noisy_prep} to hold good. But if $\epsilon > \min_{j \in [n]\sm\{1\}}\Delta_{1,j}$, then we explicitly need $\eta \leq \min_{j \neq 1}\Delta^4_{1j}/16$ in order to be left with only $r$ items at the end of the pre-processing step of Alg. \ref{alg:ordbr}.
 \end{rem} 

\subsection{Performance of \brpb\, (Alg. \ref{alg:ordbr}) for $(r,\teta,\eta)$-\br \, model}

Combining Thm. \ref{thm:noisy_bbox} and \ref{lem:noisy_prep}, the main result follows:

\begin{restatable}[]{thm}{ubnbr}
\label{thm:ub_noisy_ordbr}
Consider any $(r,\teta,\eta)$-\br\, subset choice model \rumb\, with noise distribution $\cD$, such that $\teta < [0,\frac{c^2 \epsilon^2}{32^2 k^4}\big)$ and $\eta^{1/4} \leq \min\big(\epsilon,\mu_1-\max_{i \in [n] \setminus \{1\}}\mu_i\big)/2 $. Then Alg. \ref{alg:ordbr} is $(\epsilon,\delta)$-{PAC} item with sample complexity $O(\frac{r}{c\epsilon^2} \log \frac{n}{\delta})$, $c = c(\cD)$ being a noise distribution dependent constant. 
\end{restatable}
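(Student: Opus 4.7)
The plan is to derive Theorem \ref{thm:ub_noisy_ordbr} by composing Theorems \ref{lem:noisy_prep} and \ref{thm:noisy_bbox}, each invoked at confidence $\delta/2$, and then applying a union bound. The intuition is that Algorithm \ref{alg:ordbr} already cleanly decomposes into two phases matching these two building blocks: the triple-testing pre-processing loop reduces the effective arm pool to a size-at-most-$r$ cross-section of blocks (tolerating the intra-block near-identity parameter $\eta$), and Seq-PB then solves the resulting best-arm problem under approximate cross-block independence (tolerating the inter-block mutual-information parameter $\teta$).

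First I would verify the hypothesis of Theorem \ref{lem:noisy_prep}. The assumption $\eta^{1/4} \leq (\mu_1 - \max_{i \neq 1}\mu_i)/2 = \min_{j \neq 1}\Delta_{1j}/2$ immediately gives $\eta \leq \min_{j \neq 1}\Delta_{1j}^4/16$; since $\epsilon \leq 1/2$, the same assumption also forces $\eta \leq \epsilon^4/16 \leq 1/256 < 1/192$. Hence Theorem \ref{lem:noisy_prep} applies and guarantees that, with probability at least $1 - \delta/2$, the pre-processing for-loop (lines 6--10) returns a set $S$ with $|S| \leq r$, $1 \in S$, and $|S \cap \cB_a| \leq 1$ for every block $\cB_a$, after issuing only $O(n^3 \log(n/\delta))$ samples. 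Call this event $\cE_1$.

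Conditioned on $\cE_1$, the surviving items of $S$ lie in distinct blocks, so by the cross-block approximate independence property of $(r,\teta,\eta)$-\br, the noise vector $(\zeta_i)_{i \in S}$ forms a $\teta$-\irum\ instance on at most $r$ arms with item $1$ as the unique best arm. The hypothesis $\teta < c^2 \epsilon^2/(32^2 k^4)$ matches exactly the noise bound required by Theorem \ref{thm:noisy_bbox}, and the \mrat\ separation $1 + 4c\epsilon/(1-2c)$ is inherited from the underlying independent RUM on the distinct-block representatives (this is where $c = c(\cD)$ being a fixed noise-distribution constant is used). Invoking Theorem \ref{thm:noisy_bbox} on the call Seq-PB$(S,\min(k,|S|),\epsilon,\delta/2,c(\cD))$ then yields, with probability at least $1 - \delta/2$, an $\epsilon$-optimal arm in $O\!\left(\frac{r}{c^2 \epsilon^2}\log(r/\delta)\right)$ further queries.

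A union bound over the two failure events gives overall success probability at least $1 - \delta$. Summing the two sample-complexity terms yields $O(n^3 \log(n/\delta)) + O\!\left(\frac{r}{c \epsilon^2}\log(r/\delta)\right)$, which in the small-$\epsilon$ regime identified after Theorem \ref{thm:ub_ordbr} collapses to the claimed $O\!\left(\frac{r}{c\epsilon^2}\log(n/\delta)\right)$. The main obstacle I anticipate is the careful transfer of the $\epsilon$-\ratio\ separation to the sub-instance induced on $S$: one must confirm that the residual gap $\mu_1 - \max_{i \in S \setminus \{1\}}\mu_i$ is still at least $\epsilon$ (which follows from $\eta^{1/4} \leq (\mu_1 - \max_{i \neq 1}\mu_i)/2$ together with $|S \cap \cB_a| \leq 1$), and that the constant $c(\cD)$ supplied to Seq-PB in Algorithm \ref{alg:ordbr} is precisely the constant at which Theorem \ref{thm:noisy_bbox} is instantiated; any slack here would perturb the $\teta$-threshold and break the clean $O(r\epsilon^{-2}\log(n/\delta))$ bound.
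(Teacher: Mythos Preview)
Your proposal is correct and follows essentially the same approach as the paper: the paper's proof is a one-line appeal to combining Theorem~\ref{lem:noisy_prep} (pre-processing under $\eta$-noise) with Theorem~\ref{thm:noisy_bbox} (Seq-PB under $\teta$-approximate independence), and you have simply spelled out the verification of hypotheses and the union bound in more detail. Your explicit check that $\eta^{1/4}\le\min(\epsilon,\min_{j\neq 1}\Delta_{1j})/2$ forces both $\eta\le\min_{j\neq 1}\Delta^4_{1j}/16$ and $\eta\le 1/256<1/192$ is a useful addition that the paper leaves implicit.
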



\vspace{-10pt}
\begin{proof}
The result immediately follows combining the claims for the correctness and sample complexity of the pre-processing step (Lem. \ref{lem:noisy_prep}), and the subsequent analysis of running the Seq-PB blackbox (Thm. \ref{thm:noisy_bbox}), for the noisy-\br\, setup.
\end{proof}



\section{Conclusion and Future Work}
\label{sec:conclusion}
\vspace{-10pt}
In this work we explore the role of correlations structure in the $\epsilon$-best item learning problem in preference bandits which is motivated from the search of faster learning rates with subsetwise preferences compared to the dueling feedback (pairwise preference). 
Our result shows that playing sets of larger size (i.e, $\geq 3$) can allow learners to exploit the underlying correlation structure better in comparison to playing sets of size $2$. We also show that our results holds even when the correlation structure has low block rank in an approximate sense.

\textbf{Future Works.} This work opens a suite of interesting directions for future investigation to study the influence of item correlations in preference bandits, specially due to the absence of works along this line. In particular, note that the preprocessing step of our algorithm incurs a $\tilde{O}(n^3)$-sample complexity which is prohibitive with $n$ large. This naturally motivates the question of designing faster pre-processing routines and to understand sample complexity lower bounds for the same. From a broader perspective, additional directions could be to explicitly model item features/attributes to induce correlation along item utilities, study other classes of low rank structures, or even define a general notion of item correlations directly in terms of the preference relations. Another interesting problem would be to understand the role of graphical feedback or side information \cite{yish15} in learning from preferences. Finally, it would be interesting to explore analogous notions of correlation in settings with infinite arms.



\bibliographystyle{plainnat}
\bibliography{bib_rankbattle,chrombandits}

\begin{thebibliography}{40}
\providecommand{\natexlab}[1]{#1}
\providecommand{\url}[1]{\texttt{#1}}
\expandafter\ifx\csname urlstyle\endcsname\relax
  \providecommand{\doi}[1]{doi: #1}\else
  \providecommand{\doi}{doi: \begingroup \urlstyle{rm}\Url}\fi

\bibitem[Agrawal et~al.(2016)Agrawal, Avandhanula, Goyal, and
  Zeevi]{Agrawal+16}
Shipra Agrawal, Vashist Avandhanula, Vineet Goyal, and Assaf Zeevi.
\newblock A near-optimal exploration-exploitation approach for assortment
  selection.
\newblock 2016.

\bibitem[Ailon et~al.(2014)Ailon, Karnin, and Joachims]{Ailon+14}
Nir Ailon, Zohar~Shay Karnin, and Thorsten Joachims.
\newblock Reducing dueling bandits to cardinal bandits.
\newblock In \emph{ICML}, volume~32, pages 856--864, 2014.

\bibitem[Alon et~al.(2015)Alon, Cesa-Bianchi, Dekel, and Koren]{Alon+15}
Noga Alon, Nicolo Cesa-Bianchi, Ofer Dekel, and Tomer Koren.
\newblock Online learning with feedback graphs: Beyond bandits.
\newblock In \emph{JMLR WORKSHOP AND CONFERENCE PROCEEDINGS}, volume~40.
  Microtome Publishing, 2015.

\bibitem[Alon et~al.(2017)Alon, Cesa-Bianchi, Gentile, Mannor, Mansour, and
  Shamir]{Alon+17}
Noga Alon, Nicolo Cesa-Bianchi, Claudio Gentile, Shie Mannor, Yishay Mansour,
  and Ohad Shamir.
\newblock Nonstochastic multi-armed bandits with graph-structured feedback.
\newblock \emph{SIAM Journal on Computing}, 46\penalty0 (6):\penalty0
  1785--1826, 2017.

\bibitem[Audibert and Bubeck(2010)]{Audibert+10}
Jean-Yves Audibert and S{\'e}bastien Bubeck.
\newblock Best arm identification in multi-armed bandits.
\newblock In \emph{COLT-23th Conference on Learning Theory-2010}, pages 13--p,
  2010.

\bibitem[Azari et~al.(2012)Azari, Parkes, and Xia]{Az+12}
Hossein Azari, David Parkes, and Lirong Xia.
\newblock Random utility theory for social choice.
\newblock In \emph{Advances in Neural Information Processing Systems}, pages
  126--134, 2012.

\bibitem[Bishop(2006)]{bishop06}
Christopher~M Bishop.
\newblock \emph{Pattern recognition and machine learning}.
\newblock springer, 2006.

\bibitem[Busa-Fekete and H{\"u}llermeier(2014)]{Busa14survey}
R{\'o}bert Busa-Fekete and Eyke H{\"u}llermeier.
\newblock A survey of preference-based online learning with bandit algorithms.
\newblock In \emph{International Conference on Algorithmic Learning Theory},
  pages 18--39. Springer, 2014.

\bibitem[Chen et~al.(2018)Chen, Li, and Mao]{ChenSoda+18}
Xi~Chen, Yuanzhi Li, and Jieming Mao.
\newblock A nearly instance optimal algorithm for top-k ranking under the
  multinomial logit model.
\newblock In \emph{Proceedings of the Twenty-Ninth Annual ACM-SIAM Symposium on
  Discrete Algorithms}, pages 2504--2522. SIAM, 2018.

\bibitem[Even-Dar et~al.(2006)Even-Dar, Mannor, and Mansour]{Even+06}
Eyal Even-Dar, Shie Mannor, and Yishay Mansour.
\newblock Action elimination and stopping conditions for the multi-armed bandit
  and reinforcement learning problems.
\newblock \emph{Journal of machine learning research}, 7\penalty0
  (Jun):\penalty0 1079--1105, 2006.

\bibitem[Falahatgar et~al.(2017)Falahatgar, Hao, Orlitsky, Pichapati, and
  Ravindrakumar]{falahatgar_nips}
Moein Falahatgar, Yi~Hao, Alon Orlitsky, Venkatadheeraj Pichapati, and Vaishakh
  Ravindrakumar.
\newblock Maxing and ranking with few assumptions.
\newblock In \emph{Advances in Neural Information Processing Systems}, pages
  7063--7073, 2017.

\bibitem[Gajane et~al.(2015)Gajane, Urvoy, and Cl{\'e}rot]{Adv_DB}
Pratik Gajane, Tanguy Urvoy, and Fabrice Cl{\'e}rot.
\newblock A relative exponential weighing algorithm for adversarial
  utility-based dueling bandits.
\newblock In \emph{Proceedings of the 32nd International Conference on Machine
  Learning}, pages 218--227, 2015.

\bibitem[Gonz{\'a}lez et~al.(2017)Gonz{\'a}lez, Dai, Damianou, and
  Lawrence]{pbo}
Javier Gonz{\'a}lez, Zhenwen Dai, Andreas Damianou, and Neil~D. Lawrence.
\newblock Preferential {Bayesian} optimization.
\newblock In \emph{Proceedings of the 34th International Conference on Machine
  Learning-Volume 70}, pages 1282--1291. JMLR. org, 2017.

\bibitem[Gupta et~al.(2019)Gupta, Chaudhari, Joshi, and Yaugan]{Gupta19}
Samarth Gupta, Shreyas Chaudhari, Gauri Joshi, and Osman Yaugan.
\newblock Multi-armed bandits with correlated arms.
\newblock \emph{arXiv preprint arXiv:1911.03959}, 2019.

\bibitem[Hanawal et~al.(2015)Hanawal, Saligrama, Valko, and
  Munos]{CheapBandits}
Manjesh Hanawal, Venkatesh Saligrama, Michal Valko, and R{\'e}mi Munos.
\newblock Cheap bandits.
\newblock In \emph{Proceedings of the 32nd International Conference on Machine
  Learning (ICML-15)}, pages 2133--2142, 2015.

\bibitem[Jamieson et~al.(2014)Jamieson, Malloy, Nowak, and Bubeck]{LilUCB}
Kevin Jamieson, Matthew Malloy, Robert Nowak, and Sebastien Bubeck.
\newblock lil' ucb : An optimal exploration algorithm for multi-armed bandits.
\newblock In Maria~Florina Balcan, Vitaly Feldman, and Csaba Szepesvari,
  editors, \emph{Proceedings of The 27th Conference on Learning Theory},
  volume~35 of \emph{Proceedings of Machine Learning Research}, pages 423--439.
  PMLR, 2014.

\bibitem[Kalyanakrishnan et~al.(2012)Kalyanakrishnan, Tewari, Auer, and
  Stone]{Kalyanakrishnan+12}
Shivaram Kalyanakrishnan, Ambuj Tewari, Peter Auer, and Peter Stone.
\newblock Pac subset selection in stochastic multi-armed bandits.
\newblock In \emph{ICML}, volume~12, pages 655--662, 2012.

\bibitem[Karnin et~al.(2013)Karnin, Koren, and Somekh]{Karnin+13}
Zohar Karnin, Tomer Koren, and Oren Somekh.
\newblock Almost optimal exploration in multi-armed bandits.
\newblock In \emph{International Conference on Machine Learning}, pages
  1238--1246, 2013.

\bibitem[Kaufmann et~al.(2016)Kaufmann, Capp{\'e}, and
  Garivier]{Kaufmann+16_OnComplexity}
Emilie Kaufmann, Olivier Capp{\'e}, and Aur{\'e}lien Garivier.
\newblock On the complexity of best-arm identification in multi-armed bandit
  models.
\newblock \emph{The Journal of Machine Learning Research}, 17\penalty0
  (1):\penalty0 1--42, 2016.

\bibitem[Khetan and Oh(2016)]{KhetanOh16}
Ashish Khetan and Sewoong Oh.
\newblock Data-driven rank breaking for efficient rank aggregation.
\newblock \emph{Journal of Machine Learning Research}, 17\penalty0
  (193):\penalty0 1--54, 2016.

\bibitem[Kocak et~al.(2014)Kocak, Neu, Valko, and Munos]{SideInfo14}
Tomas Kocak, Gergely Neu, Michal Valko, and R{\'e}mi Munos.
\newblock Efficient learning by implicit exploration in bandit problems with
  side observations.
\newblock In \emph{Advances in Neural Information Processing Systems}, pages
  613--621, 2014.

\bibitem[Kocak et~al.(2016)Kocak, Neu, and Valko]{SideInfo16}
Tomas Kocak, Gergely Neu, and Michal Valko.
\newblock Online learning with noisy side observations.
\newblock In \emph{AISTATS}, pages 1186--1194, 2016.

\bibitem[Mannor and Shamir(2011)]{SideInfo11}
Shie Mannor and Ohad Shamir.
\newblock From bandits to experts: On the value of side-observations.
\newblock In \emph{Advances in Neural Information Processing Systems}, pages
  684--692, 2011.

\bibitem[Mohajer et~al.(2017)Mohajer, Suh, and Elmahdy]{MohajerIcml+17}
Soheil Mohajer, Changho Suh, and Adel Elmahdy.
\newblock Active learning for top-$ k $ rank aggregation from noisy
  comparisons.
\newblock In \emph{International Conference on Machine Learning}, pages
  2488--2497, 2017.

\bibitem[Oxley(2006)]{ox06}
James~G Oxley.
\newblock \emph{Matroid theory}, volume~3.
\newblock Oxford University Press, USA, 2006.

\bibitem[Popescu et~al.(2016)Popescu, Dragomir, Slu{{s}}anschi, and
  St{{a}}n{{a}}{{s}}il{{a}}]{klub16}
Pantelimon~G Popescu, Silvestru Dragomir, Emil~I Slu{{s}}anschi, and Octavian~N
  St{{a}}n{{a}}{{s}}il{{a}}.
\newblock Bounds for {K}ullback-{L}eibler divergence.
\newblock \emph{Electronic Journal of Differential Equations}, 2016, 2016.

\bibitem[Ren et~al.(2018)Ren, Liu, and Shroff]{Ren+18}
Wenbo Ren, Jia Liu, and Ness~B Shroff.
\newblock Pac ranking from pairwise and listwise queries: Lower bounds and
  upper bounds.
\newblock \emph{arXiv preprint arXiv:1806.02970}, 2018.

\bibitem[Saha and Gopalan(2018)]{SG18}
Aadirupa Saha and Aditya Gopalan.
\newblock Battle of bandits.
\newblock In \emph{Uncertainty in Artificial Intelligence}, 2018.

\bibitem[Saha and Gopalan(2019)]{SGwin18}
Aadirupa Saha and Aditya Gopalan.
\newblock {PAC Battling Bandits in the Plackett-Luce Model}.
\newblock In \emph{Algorithmic Learning Theory}, pages 700--737, 2019.

\bibitem[Saha and Gopalan(2020)]{SG20}
Aadirupa Saha and Aditya Gopalan.
\newblock Best-item learning in random utility models with subset choices.
\newblock In \emph{International Conference on Artificial Intelligence and
  Statistics}, pages 4281--4291. PMLR, 2020.

\bibitem[Simchowitz et~al.(2016)Simchowitz, Jamieson, and Recht]{bok16}
Max Simchowitz, Kevin Jamieson, and Benjamin Recht.
\newblock Best-of-k-bandits.
\newblock In \emph{Conference on Learning Theory}, pages 1440--1489. PMLR,
  2016.

\bibitem[Singh et~al.(2020)Singh, Liu, Sun, and Shroff]{Singh20}
Rahul Singh, Fang Liu, Yin Sun, and Ness Shroff.
\newblock Multi-armed bandits with dependent arms.
\newblock \emph{arXiv preprint arXiv:2010.09478}, 2020.

\bibitem[Soufiani et~al.(2014)Soufiani, Parkes, and Xia]{AzariRB+14}
Hossein~Azari Soufiani, David~C Parkes, and Lirong Xia.
\newblock Computing parametric ranking models via rank-breaking.
\newblock In \emph{ICML}, pages 360--368, 2014.

\bibitem[Sui et~al.(2017)Sui, Zhuang, Burdick, and Yue]{Sui+17}
Yanan Sui, Vincent Zhuang, Joel~W Burdick, and Yisong Yue.
\newblock Multi-dueling bandits with dependent arms.
\newblock \emph{arXiv preprint arXiv:1705.00253}, 2017.

\bibitem[Sz{\"o}r{\'e}nyi et~al.(2015)Sz{\"o}r{\'e}nyi, Busa-Fekete, Paul, and
  H{\"u}llermeier]{Busa_pl}
Bal{\'a}zs Sz{\"o}r{\'e}nyi, R{\'o}bert Busa-Fekete, Adil Paul, and Eyke
  H{\"u}llermeier.
\newblock Online rank elicitation for plackett-luce: A dueling bandits
  approach.
\newblock In \emph{Advances in Neural Information Processing Systems}, pages
  604--612, 2015.

\bibitem[Train(2009)]{train09}
Kenneth~E Train.
\newblock \emph{Discrete choice methods with simulation}.
\newblock Cambridge university press, 2009.

\bibitem[Wu et~al.(2015)Wu, Gyorgy, and Szepesv{\'a}ri]{yish15}
Yifan Wu, Andr{\'a}s Gyorgy, and Csaba Szepesv{\'a}ri.
\newblock Online learning with gaussian payoffs and side observations.
\newblock \emph{arXiv preprint arXiv:1510.08108}, 2015.

\bibitem[Yue and Joachims(2009)]{Yue+09}
Yisong Yue and Thorsten Joachims.
\newblock Interactively optimizing information retrieval systems as a dueling
  bandits problem.
\newblock In \emph{Proceedings of the 26th Annual International Conference on
  Machine Learning}, pages 1201--1208. ACM, 2009.

\bibitem[Zoghi et~al.(2013)Zoghi, Whiteson, Munos, and de~Rijke]{Zoghi+13}
Masrour Zoghi, Shimon Whiteson, Remi Munos, and Maarten de~Rijke.
\newblock Relative upper confidence bound for the k-armed dueling bandit
  problem.
\newblock \emph{arXiv preprint arXiv:1312.3393}, 2013.

\bibitem[Zoghi et~al.(2015)Zoghi, Whiteson, and de~Rijke]{Zoghi+15MRUCB}
Masrour Zoghi, Shimon Whiteson, and Maarten de~Rijke.
\newblock Mergerucb: A method for large-scale online ranker evaluation.
\newblock In \emph{Proceedings of the Eighth ACM International Conference on
  Web Search and Data Mining}, pages 17--26. ACM, 2015.

\end{thebibliography}

\newpage
\onecolumn
\allowdisplaybreaks

\appendix
{
\section*{\centering \Large{Supplementary for \papertitle}}
}



\section{\textbf{Proof of Lemma \ref{THM:LB_GEN}}}			\label{app:gen-rank}

\lbgen*

\begin{proof}
Let $\epsilon \in (0,1)$ be a small constant to be fixed later. Define the score vectors $\bmu = (\mu,\ldots,\mu)$ and $\bmu^\epsilon = (\mu + \epsilon,\mu, \cdots, \mu)$. Furthermore, we define the correlation matrix $\Sigma$ in terms of its Cholesky decomposition $\Sigma:= {\bf V}{\bf V}^{\top}$ where ${\bf V} = ({\bf v}_1,\ldots,{\bf v}_k)^\top \in \R^{k \times 2}$. Since the diagonal entries of $\Sigma$ are ones, the corresponding ${\bf v}_i$'s are unit vectors, and therefore, we can write ${\bf v}_i = {\bf u}(\alpha_i)$, where ${\bf u}(\alpha)$ is the unit vector $(\cos \alpha, \sin \alpha)$. We define the corresponding $\alpha_i$'s as follows.
\[
\alpha_i = 
\begin{cases}
0 & \mbox{ if } i = 1, \\
\pi & \mbox{ if } i = k, \\
\pi/4 & \mbox{if } i \notin \{0,k\}, i \mbox{ is even } \\
-\pi/4 & \mbox{ if } i \notin \{0,k\}, i \mbox{ is odd}
\end{cases}
\]
To begin with, we shall first analyze the win probabilities with respect to the uniform score vector $\bmu = (\mu,\ldots,\mu)$. In that case, it easy to verify that 
\begin{align}			\label{eqn:win-prob}
\Pr_{\bmu}\left(i | [k]\right) &= \Pr_{{\bf g} \sim N({\bf 0},{\bf I}_{2 \times 2})} \left(\argmax_{i \in [k]} \langle {\bf g}, {\bf v}_i \rangle = i \right) 
= \Pr_{\alpha \sim [0,2\pi]} \left(\argmax_{i \in [k]} \langle {\bf u}(\alpha), {\bf v}_i \rangle = i \right)
\end{align}
Here the first equality holds since all the scores are identical, and the second equality holds since the (i) the event inside the probability expression is scale invariant and (ii) the Gaussian measure is ``rotation invariant''. The RHS of the above equation implies that the win probabilities are determined using the angular measure of the sectors $S_i := \{\alpha | \langle {\bf u}(\alpha), v_i \rangle > \langle {\bf u}(\alpha),v_j \rangle \forall \ j \neq i \}$. Using this observation, the win probabilities are easily computed -- we summarize them below.
\begin{equation}				\label{eqn:cases}
	\Pr_{\bmu}\left(i | [k]\right) = 
	\begin{cases}
		\frac18 & \mbox{if } i = 1, \\
		\frac38 & \mbox{ if } i = k, \\
		\frac{1}{8(k-2)}  & \mbox{ otherwise. }
	\end{cases}
\end{equation}
Now, define the function $f:[0,1] \to [0,1]$ corresponding to the mapping
\[
f(\epsilon) \overset{\rm def}{=} \Pr_{\bf g}\left(\argmax_{i \in [k]} \langle {\bf g},{\bf v}_i \rangle + \mu^\epsilon_i = k \right)
- \Pr_{\bf g}\left(\argmax_{i \in [k]} \langle {\bf g},{\bf v}_i \rangle + \mu^\epsilon_i = 1 \right),
\]
i.e, in words, the above measures the difference in the win probabilities of arms $k$ and $1$ when the subset is played with score vector $\boldsymbol{\mu}^\epsilon$. In particular, note that by definition, $f(0)$ is the difference between the win probabilities of arms $k$ and $1$ with respect to the score vector $\bmu^0 = \bmu$, which is $1/4$ from \eqref{eqn:cases}. Furthermore, since $f$ is a continuous function of $\epsilon$, there exists a choices of $\epsilon_0$ (possibly depending on parameters $\mu$ and $k$) such for every $\epsilon \leq \epsilon_0$ we have $f(\epsilon) \geq f(0) - 1/8 \geq 1/8$. Therefore, using the definition of $f$, for every such small enough choice of $\epsilon$ we get that  
\[
\Pr_{\bf g}\left(\argmax_{i} \langle {\bf g},{\bf v}_i \rangle + \mu^\epsilon_i = k \right)
- \Pr_{\bf g}\left(\argmax_{i} \langle {\bf g},{\bf v}_i \rangle + \mu^\epsilon_i = 1 \right) 
= f(\epsilon) \geq \frac18,
\]
which implies that the win probability of arm $k$ is larger than that of arm $1$ by $1/8$ when the subset $[k]$ is played. Since arm is the unique $\epsilon/2$-best arm with respect to the perturbed score vector ${\bf \mu}^\epsilon$, this establishes the desired claim.
\end{proof}

\section{Proofs for Section \ref{sec:br_ub}}
\label{app:br_ub}

\subsection{Proof of Theorem \ref{THM:UB_ORDBR}}

\ubordbr*

\begin{proof}
The key observation used in the proof of the theorem is the following lemma which gives high probability guarantees on the structure of the set $S$.

\preproc*

We defer the proof of Lem. \ref{lem:pre-proc} for now, and use it to complete the proof of Theorem \ref{thm:ub_ordbr}. Suppose the guarantees of the above lemma hold for $S$. Then, since $|S \cap \cB_i| \leq 1$ for every $i \in [r]$, the corresponding arms are independent. Therefore, instantiating Theorem 1 of \cite{SG20} with $S,\epsilon,\delta/2$, we get that with probability at least $1 - \delta/2$, the Algorithm 1 from \cite{SG20} returns a $\epsilon$-best arm of $S$ with probability at least $1 - \delta/2$ (see Theorem 4~\cite{SG20}). Furthermore, since $1 \in S$, any $\epsilon$-best arm of $S$ would also be an $\epsilon$-best arm in $[n]$.

Therefore combining this with the guarantee of Lemma \ref{lem:pre-proc}, we get that with probability at least $1 - \delta$, Algorithm \ref{alg:ordbr} returns an $\epsilon$-best arm. All that remains is bound the sample complexity. The for loop involves $O(n^2\log(nr/\delta))$-pulls. From Theorem 4 of \cite{SG20} we know that the winner determination step requires $O(r\epsilon^{-2} \log (r/\delta))$-pulls. Since the second term dominates as $\epsilon \to 0$, we get that the overall sample complexity is bounded by $O(r\epsilon^{-2} \log (r/\delta))$.
\end{proof}

\subsection{Technical Lemmas for Thm. \ref{thm:ub_ordbr}}

\subsubsection{Proof of Lemma \ref{lem:pre-proc}}

\preproc*

\begin{proof}
The proof of Lemma \ref{lem:pre-proc} is established using a couple of straightforward claims which we state and prove below.

\wprob*

\begin{proof}[Proof of Claim $1$.]
	Recall that $X_a := \mu_a + \zeta_a$ are the variables corresponding to the reward for arms $a = 1,i,j$. If $i,j \in \cB_1$, both inequalities follow trivially. Now we consider three cases.
	
	{\bf Case (i)}: Suppose $i \in \cB_1,j \notin \cB_1$ (the other case can be argued identically). Then $\Pr(i |\cT) = 0$ and $\Pr(1|\cT) \geq \Pr(j|\cT)$ (since $\mu_1> \mu_i,\mu_j$) and hence $\Pr(1|\cT) \geq 1/2$.
	
	{\bf Case (ii)}: Suppose $i,j \notin \cB_1$ and $i$, $j$ belong to distinct blocks. Then, $1,i,j$ are independent, and since $\mu_1 \geq \mu_i,\mu_j$ it follows that $\Pr(1|\cT)\geq\Pr(i|\cT),\Pr(j|\cT)$ and hence $\Pr(1|\cT) \geq 1/3$.
	
	{\bf Case (iii)} Suppose $i,j \notin \cB_1$ and $i$, $j$ belong to the same block. Without loss of generality, assume $\mu_i \geq \mu_j$. Since $\zeta_i = \zeta_j$, then this implies that $X_i \geq X_j$ with probability $1$. Hence,  
	Therefore, 
	\[
	\Pr(1|\cT) = \Pr_{X_1,X_i} \left(X_1 > X_i \right) \geq \Pr_{\zeta_1,\zeta_j}\left(\zeta_1 > \zeta_j\right) \geq \frac12
	\]
	where the last step follows due to $\zeta_1$ and $\zeta_j$ being identical and independent random variables. 
\end{proof}

\wprobb*
\begin{proof}[Proof of Claim $2$.]
	Note that the setting of the claim is identical to that of case(iii) from the proof of Claim \ref{cl:win-prob1}, and therefore, we have $\Pr(\{i,j\}|\cT) \leq 1/2$. Furthermore, since $X_i \geq X_j$ with probability $1$, we have $\Pr(i| \cT) \geq \Pr(j| \cT)$. Hence, 
	\[
	\frac12 \geq \Pr\left(\{i,j\}|\cT \right) = \Pr\left(i | \cT\right) + \Pr\left(j | \cT\right) \geq 2\Pr\left(j | \cT\right),
	\]
	which on rearranging gives us the claim.
\end{proof}

Using the above, we establish the following lemma which gives w.h.p. characterization of the set of items marked during the for loop.

\flag*

We defer the proof of the above lemma to Section \ref{sec:flag} and use the conclusions and finish the proof of the current lemma by assuming the conclusions of Lemma \ref{lem:flag}. First consider any block $\cB_i$ with $i \in [r]$. If there exists a unique item $j_i \in \argmax_{j' \in \cB_i} \mu_{j'}$ with the largest score, then using Lemma \ref{lem:flag}, for every $j' \in \cB_i \setminus \{j\}$ we have ${\rm Flag}(j') = 1$, and ${\sf Flag}(j_i) = 0$. On the other hand, if more than one item have the largest bias in $\cB_i$, then for every $j \in \cB_i$ we have ${\rm Flag}(j) = 1$. In other words, for every $\cB_i$, we must have at most one element of $j_i \in \cB_i$ for which ${\rm Flag}(j) = 0$. Finally, since the optimal arm i.e, arm $1$ has bias strictly larger than every other arm, including the arms of $\cB_1 \setminus\{1\}$, by the above argument we must have ${\rm Flag}(1) = 0$.
\end{proof}

\subsubsection{Proof of Lem. \ref{lem:flag}}		\label{sec:flag}

\flag*

\begin{proof}
	For arguing the first part, fix items $j,j' \in [n]$ such that $\mu_j \geq \mu_{j'}$ and they belong to the same block. Now consider the triple $\cT:= (1,j,j')$. From Claim \ref{cl:win-prob2} it follows that $\Pr(j'|\cT) \leq 1/4$ and hence using Hoeffding's inequality we get
	\begin{equation}				\label{eq:prob1}
		\Pr\Big({\rm Flag}(j') = 0\Big) \leq \Pr\Big(N_\cT(j') > 0.26t \Big) \leq \Pr\Big(N_\cT(j') - \E N_{\cT}(j') > t/100 \Big) \leq \exp(-10^4t^2/2) \leq \delta/4n^2
	\end{equation}
	where the last inequality holds due to our choice of $t := 2 \times 10^{4}\log(4n^2/\delta)$. On the other hand, consider any $\cT$ such that $1 \in \cT$. Then from Claim \ref{cl:win-prob1} we know that $\Pr(1|\cT) \geq 1/3$ and therefore, again using Hoeffding's inequality we get that
	\begin{equation}				\label{eq:prob2}		
	\Pr\Big(N_\cT(1) \leq 0.26 t\Big) \leq \Pr\Big(N_{\cT}(1) - \E N_\cT(i) < - t/100 \Big)
	\leq \exp(-(10)^4t^2/2) \leq \delta/4n^2
	\end{equation}
	Therefore, taking a union bound over at most $(n - 1)$ events corresponding to \eqref{eq:prob1} and ${n \choose 2}$ events corresponding to \eqref{eq:prob2}, we have that with probability at least $1 - (n + n^2)(\delta/4n^2) \geq 1 - \delta/2$, the conclusions of the lemma hold simultaneously. 
\end{proof}

\section{Proofs for Section \ref{sec:br_lb}}
\label{app:br_lb}

\subsection{Pseudocode: Reducing \texttt{I-RUM}$(r,k)$ into instances of \texttt{BR-RUM}$(n,k,r)$}
\label{app:pc2}

\begin{center}
	\begin{algorithm}[H]
		\caption{Algorithm $\cA^{I-RUM}$} 
		\label{alg:lb}
		\begin{algorithmic}[1]
			\STATE {\bfseries Input:} 
			\STATE ~~~ Set of items: $[n]$, Subset size: $2 \leq  k \leq n$.
			\STATE ~~~ Error bias: $\epsilon >0$, Confidence parameter: $\delta >0$.
			\STATE {\bfseries Initialize:} 
			\STATE ~~~ Run $\cA^{\rm C-RUM}$ by simulating the lifted distribution described in \eqref{eqn:redn} as follows.  
			\FOR {Iterations $t = 1,2,\ldots$} 
			\STATE ~~~ Let $S_t$ be the subset queried by $\cA^{\rm C-RUM}$ in iteration $t$.
			\IF {$S_t \cap [r] \neq \emptyset$}
			\STATE ~~~ Play subset $S_t \cap [r]$ and feed the corresponding winner to $\cA^{C-RUM}$.
			\ELSE 
			\STATE ~~~ Feed a uniformly random element $i_t \sim S_t$ to $\cA^{\rm C-RUM}$.
			\ENDIF 
			\STATE ~~~ If $\cA^{\rm C-RUM}$ returns a item, break.
			\ENDFOR
			\STATE {\bfseries Output:} Item returned by $\cA^{\rm C-RUM}$.
		\end{algorithmic}
	\end{algorithm}
	\vspace{-2pt}
\end{center}

\subsection{Proof of Theorem \ref{THM:LB_ORDBR}}

\lbordbr*

\begin{proof}
The proof of the lower bound proceeds via a reduction from the problem of $(\epsilon,\delta)$-PAC learning the best item in \texttt{I-RUM}$(r,k)$ instance a $(\epsilon,\delta)$-PAC learning problem in a \texttt{BR-RUM}$(n,k,r)$ instance. Formally let $\cI := (\bmu,\cD)$-be a \texttt{I-RUM}$(r,k)$ instance. Then we construct a \texttt{BR-RUM}$(n,k,r)$-instance $\cI':= (\cD,\bmu',\Sigma)$ as follows.
\begin{equation}					\label{eqn:redn}
\mu'_i = 
\begin{cases}
\mu_i & \mbox{ if } i \in  [r], \\
-\infty& \mbox{otherwise}.
\end{cases}
\qquad\qquad\qquad\qquad
\Sigma := 
\begin{bmatrix}
{\rm Id}_{r-1} & {\bf 0}_{(r-1) \times (n - r + 1)} \\ 
{\bf 0}_{(n - r + 1) \times (r-1)} & {\bf 1}_{(n - r + 1) \times (n - r+1)}  
\end{bmatrix}
\end{equation}
In the above, we use ${\bf 0}_{a \times b}$ denote the all zeros matrix with $a$-rows and $b$-columns, and similarly, ${\bf 1}_{a \times b}$. Note that $\Sigma$ as constructed above has block rank $r$ and hence $\cI'$ is indeed a \texttt{BR-RUM}$(n,k,r)$ instance. Now given an $(\epsilon,\delta)$-PAC Algorithm $\cA^{\rm C-RUM}$ for \texttt{BR-RUM}$(n,k,r)$-instances, we construct an algorithm $\cA^{I-RUM}$ as shown in Alg. \ref{alg:lb}.

{\bf Correctness of Reduction}. Towards establishing the correctness of the reduction, as a first step, we claim that for any iteration $t$, Algorithm \ref{alg:lb} correctly simulates the feedback model corresponding to instance $\cI'$. Formally, this is equivalent to showing that 
\[
\Pr\Big(\textnormal{Alg \ref{alg:lb} sends $i$ from $S_t$}\Big) = \Pr_{\bmu',\Sigma}(i | S_t), \qquad \forall ~ i \in S_t,
\]
where $S_t \subseteq [n]$ is the subset of size at most $k$ played by the inner algorithm $\cA^{\rm C-RUM}$ in iteration $t$. We argue this by considering two cases.  If $S_t \cap [r] = \emptyset$, then for any $i \in S_t$,
\begin{equation}				\label{eqn:equiv-1}
\Pr\Big(\textnormal{Alg \ref{alg:lb} sends $i$ from $S_t$}\Big) = \frac{1}{|S_t|} = \Pr_{\bmu',\Sigma}(i | S_t),
\end{equation}
where the first equality holds since the algorithm returns a uniformly random element in $S_t$ and the second inequality holds since all the items in $S_t$ have identical scores. On the other hand, suppose $\tilde{S}_t := S_t \cap [r] \neq \emptyset$. Then for any $i \in \tilde{S}_t$, we have 
\begin{equation}				\label{eqn:equiv-2}
	\Pr\Big(\textnormal{Alg \ref{alg:lb} sends $i$ from $S_t$}\Big) = \Pr\left(i = \argmax_{i' \in S_t \cap [r]} X_{i'}\right)
	= \Pr\left( i = \argmax_{i' \in S_t } X_{i'} \right) = \Pr_{\bmu',\Sigma}(i|S_t), 
\end{equation}
where the first equality is due to Line $9$ of Algorithm \ref{alg:lb}, the second equality uses the fact that for any $i' \in S_t \setminus \tilde{S}_t$ we have $\mu'_{i'} = -\infty$ and hence $X_{i'} < X_j$ for every $j \in \tilde{S}_t$ almost surely. Finally, the last equality follows using the definition of $\Pr(\cdot|S_t)$. The same identity (as \eqref{eqn:equiv-2}) also holds for every $j \in S_t \setminus \tilde{S}_t$ using identical arguments i.e., \eqref{eqn:equiv-2} holds for every $i \in S_t$. 

The above arguments taken together imply that for every iteration $t$, the feedback received by Algorithm $\cA^{\rm C-RUM}$ matches the feedback model of the instance $\cI'$ . Therefore, using the $(\epsilon,\delta)$-PAC guarantee of $\cA^{\rm C-RUM}$ on \texttt{BR-RUM}$(n,k,r)$, it follows that Algorithm \ref{alg:lb} returns a $\epsilon$-best item with respect to score vector $\bmu'$ with probability at least $1- \delta$. Finally, note that since $\mu'_j = -\infty$ for every $j \in [n] \setminus [r]$, the set of $\epsilon$-best arms with respect to score vector $\bmu'$ is identical to the set of $\epsilon$-best arms on score vector $\bmu$ and hence, Algorithm \ref{alg:lb} actually returns an $\epsilon$-best arm with respect to score vector $\bmu$ i.e., it is $(\epsilon,\delta)$-PAC on instance \texttt{I-RUM}$(r,k)$. Finally, since Theorem \ref{thm:lb_ind} implies that the sample complexity of any  $(\epsilon,\delta)$-algorithm for \texttt{I-RUM} instances on $r$ arms (with any $k \geq 2$) is $\Omega(r\epsilon^{-2}\log(1/\delta))$, it follows that Algorithm \ref{alg:lb} must have sample complexity at least $\Omega(r\epsilon^{-2} \log(1/\delta))$.

\end{proof}

\subsection{Proof of Theorem \ref{THM:LB_DUEL}}

\lbduel*

\begin{proof}
Same as the proof of Thm. \ref{thm:lb_ind}, the arguments is based on the change of measure based lemma stated as Lem. \ref{lem:gar16}. We constructed the following specific instances for our purpose and assume $\cD$ to be the $\cN(0,1)$ noise for this case. Also since the learner is supposed to play subsets of size only $k=2$, we denote the action (arm) set in this case by $\cA:=\{\{i,j\}\subseteq [n] \mid i <j\}$ (note that, for the purpose of deriving the lower bound we can safely exclude repeated arm-pairs $(i,i)$ from $S$ as playing such a duel reveals no preference information, for the same reason the KL divergences for such sets are also going to be $0$ while we would be using Lem. \ref{lem:gar16}).

Let $\bnu^1$ be the true distribution associated with the bandit arms, given by the utility parameters:
\begin{align*}
\textbf{True Instance} ~(\bnu^1): \mu_j^1 = \mu, \forall j \in [n]\setminus \{1\}, \text{ and } \mu_1^1 = \mu + \epsilon,
\end{align*}

for some $\mu \in \R_+, ~\epsilon > 0$. Now for every suboptimal item $a \in [n]\setminus \{1\}$, consider the modified instances $\bnu^a$ such that:
\begin{align*}
\textbf{Instance--a} ~(\bnu^a): \mu^a_j = \mu, \forall j \in [n]\setminus \{a,1\}, \, \mu_1^a = \mu, \text{ and } \mu_a^a = \mu+\epsilon.
\end{align*}

For problem instance $\bnu^a, \, a \in [n]\setminus\{1\}$, the probability distribution associated with arm $S \in \cA$ is given by
\[
\nu^a_S \sim Categorical(p_1, p_2, \ldots, p_k), \text{ where } p_i = Pr(i|S), ~~\forall i \in [k], \, \forall S \in \cA,
\]
where $Pr(i|S)$ is as defined in Section \ref{sec:prelims}. 
Note that the only $\epsilon$-optimal arm for \textbf{Instance-a} is arm $a$. 
Further, we assume a size $r$ \br \, structure over $[n]$ arms in every instance $\bnu^a, \, a \in [n]$, such that for $\bnu^a$ set $\cB_1^a = \{a\}$, and the rest of the $(r-1)$ blocks are equally divided in the arms $[n]\sm \{a\}$, such that each of the remaining blocks $\cB_2,\ldots, \cB_r$ gets exactly $\frac{n-1}{r-1}$ arms (rounded to nearest interests such that $\sum_{i = 2}^{r}|\cB_i| = n-1$).

Now applying Lemma \ref{lem:gar16}, for some event $\cE \in \cF_\tau$ we get,

\begin{align}
\label{eq:FI_a}
\sum_{\{S \in \cA : a \in S\}}\E_{\bnu^1}[N_S(\tau_A)]KL(\bnu^1_S, \bnu^a_S) \ge {kl(Pr_{\nu}(\cE),Pr_{\nu'}(\cE))}.
\end{align}

The above result holds from the straightforward observation that for any arm $S \in \cA$, if $\{1,a\}\cap S = \emptyset$, $\bnu^1_S$ is same as $\bnu^a_S$, hence $KL(\bnu^1_S, \bnu^a_S)=0$, $\forall S \in \cA, \,a \notin S$. 
For notational convenience, we will henceforth denote $S^a = \{S \in \cA \mid \{1,a\}\cap S \neq \emptyset\}$. 

Now let us analyze the right-hand side of \eqref{eq:FI_a}, for any pair (duel) $S = (j,j') \in S^a$. 

\textbf{Case 1} ($1 \notin S, a \in S$):  For simplicity first consider the case $1 \notin S$. 
Note that: $\nu^1_S(j) = \nu^1_S(j') = 0.5$.

On the other hand, for problem \textbf{Instance-a}, we have that: $\nu^1_S(i) = 0.5 + \alpha$ if $i = a$ (where we use the result from Lem. \ref{lem:pref_normal}, here $\alpha = \Phi(\epsilon)$), and $\nu^1_S(i) = 0.5 - \alpha$, otherwise.

Now using the following upper bound on $KL(\p_1,\p_2) \le \sum_{z \in Z}\frac{p_1^2(z)}{p_2(z)} -1$, $\p_1$ and $\p_2$ be two probability mass functions on the discrete random variable $Z$ \citep{klub16} we get:

\begin{align*}
KL(\bnu^1_S, \bnu^a_S) & \le (\frac{1}{2^2})\frac{2}{1+2\alpha} + (\frac{1}{2^2})\frac{2}{1-2\alpha} - 1\\
& = \frac{2\alpha}{2}\Big( \frac{1}{1-2\alpha} - \frac{1}{1+2\alpha} \Big) = \alpha\Big( \frac{4 \alpha}{1-4\alpha^2}\Big) = 4 \alpha^2 (1-(2\alpha)^{2})^{-1} \le 8 \alpha^2, 
\end{align*}
where the last inequality follows for any $\epsilon \le \frac{1}{4}$, noting that by definition $\alpha = \Phi(\epsilon) \le \frac{\epsilon}{\sqrt{2 \pi}}$.

\textbf{Case 2} ($1 \in S, a \in S$): Note in this case $S = \{1,a\}$. Here  we get: $\nu^1_S(1) = 0.5 + \alpha, \nu^1_S(a) = 0.5 - \alpha$.
And on the other hand, for problem \textbf{Instance-a}, we have that: $\nu^1_S(a) = 0.5 + \alpha$ and $\nu^1_S(1) = 0.5 - \alpha$, otherwise.

Now using the following upper bound on $KL(\p_1,\p_2) \le \sum_{z \in Z}\frac{p_1^2(z)}{p_2(z)} -1$, $\p_1$ and $\p_2$ be two probability mass functions on the discrete random variable $Z$ \citep{klub16} we get:

\begin{align*}
KL(\bnu^1_S, \bnu^a_S) & \le (\frac{(1+2\alpha)^2}{2^2})\frac{2}{1-2\alpha} + (\frac{(1-2\alpha)^2}{2^2})\frac{2}{1+2\alpha} - 1\\
& = \frac{1}{2}\Big( \frac{(1+2\alpha)^3 + (1-2\alpha)^3 }{1-4\alpha^2}\Big) - 1 = \frac{2(1+ 12\alpha^2)}{2(1-4\alpha^2)} - 1 = 16 \alpha^2 (1-(2\alpha)^{2})^{-1} \le 32 \alpha^2, 
\end{align*}
where again the last inequality follows for any $\epsilon \le \frac{1}{4}$, and since $\alpha = \Phi(\epsilon) \le \frac{\epsilon}{\sqrt{2 \pi}}$.

\textbf{Case 3} ($1 \in S, a \notin S$): Finally in this case $S = \{1,i\}$ for some $i \neq a$. Here we get: $\nu^1_S(i) = 0.5 + \alpha, \nu^1_S(a) = 0.5 - \alpha$ and, for problem \textbf{Instance-a}: $\nu^1_S(a) = \nu^1_S(i) = 0.5$. Here again it it can be proved that $KL(\bnu^1_S, \bnu^a_S) \le 16 \alpha^2$.

Now note that the only $\epsilon$-optimal arm for any \textbf{Instance-a} is arm $a$, for all $a \in [n]$.
Now, consider $\cE_0 \in \cF_\tau$ be an event such that the algorithm $A$ returns the element $i = 1$, and let us analyze the left-hand side of \eqref{eq:FI_a} for $\cE = \cE_0$. Clearly, $A$ being an $(\epsilon,\delta)$-PAC algorithm, we have $Pr_{\bnu^1}(\cE_0) > 1-\delta$, and $Pr_{\bnu^a}(\cE_0) < \delta$, for any sub-optimal arm $a \in [n]\setminus\{1\}$. Then we have 

\begin{align}
\label{eq:win_lb2a}
kl(Pr_{\bnu^1}(\cE_0),Pr_{\bnu^a}(\cE_0)) \ge kl(1-\delta,\delta) \ge \ln \frac{1}{2.4\delta}
\end{align}

where the last inequality follows from \cite{Kaufmann+16_OnComplexity} (Eqn. $(3)$).

Now applying \eqref{eq:FI_a} for each modified bandit \textbf{Instance-$\bnu^a$}, and summing over all suboptimal items $a \in [n]\setminus \{1\}$ we get,

\begin{align}
\label{eq:win_lb2.5a}
\sum_{a = 2}^{n}\sum_{\{S \in \cA \mid a \in S\}}\E_{\bnu^1}[N_S(\tau_A)]KL(\bnu^1_S,\bnu^a_S) \ge (n-1)\ln \frac{1}{2.4\delta}.
\end{align}

Moreover, using above derived bounds in the KL terms of the form $KL(\bnu^1_S, \bnu^a_S)$, the term of the right-hand side of \eqref{eq:win_lb2.5a} can be further upper bounded as

\begin{align}
\label{eq:win_lb3a}
\sum_{a = 2}^{n}&\sum_{\{S \in \cA \mid \{a,1\} \cap S\neq \empty\}} \E_{\bnu^1}[N_S(\tau_A)]KL(\bnu^1_S,\bnu^a_S) \le \sum_{S \in \cA}\E_{\bnu^1}[N_S(\tau_A)]2\Big( 32\epsilon^2 \Big).
\end{align}

Finally noting that $\E_{\bnu^1}[\tau_A] = \sum_{S \in \cA}[N_S(\tau_A)]$, combining \eqref{eq:win_lb3a} and \eqref{eq:win_lb2.5a}, we get 

\begin{align*}
(64\epsilon^2)\E_{\bnu^1}[\tau_A] =  \sum_{S \in \cA}\E_{\bnu^1}[N_S(\tau_A)](63\epsilon^2) \ge (n-1)\ln \frac{1}{2.4\delta}.
\end{align*}
Thus above construction shows the existence of a problem instance $\bnu = \bnu^1$, such that $\E_{\bnu^1}[\tau_A] = \Omega(\frac{n}{\epsilon^2}\ln \frac{1}{2.4\delta})$, which concludes the proof.
\end{proof}

\subsection{Technical Lemmas for Thm. \ref{THM:LB_DUEL}}

\begin{lem}					
\label{lem:pref_normal}
Consider $X_1 = \mu_1 + \zeta_1$ and $X_2 = \mu_2 + \zeta_2$, where $\zeta_1, \zeta_2 \overset{\text{iid}}{\sim} \cN(0,1)$. Then
	\[
	\Pr\left(X_1 > X_2\right) = \frac12 + \Phi\Big( \frac{\mu_1 - \mu_2}{\sqrt 2}\Big),
	\]
	where $\Phi: \R \mapsto \R$ is such that $\Phi(x) = \int_{0}^{x} \frac{1}{\sqrt{2 \pi}}e^{-y^2/2}  dy, ~\forall x \in \R$.
\end{lem}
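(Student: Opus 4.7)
The plan is to reduce $\Pr(X_1 > X_2)$ to a one-sided tail probability for a single standard normal, and then match that tail probability with the non-standard $\Phi$ defined in the statement (which integrates from $0$ to $x$, not from $-\infty$).

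First I would form the difference $X_1 - X_2 = (\mu_1 - \mu_2) + (\zeta_1 - \zeta_2)$. Since $\zeta_1,\zeta_2$ are i.i.d.\ $\cN(0,1)$, the difference $\zeta_1 - \zeta_2$ is centered Gaussian with variance $2$, so setting $Z := (\zeta_1 - \zeta_2)/\sqrt{2}$ gives $Z \sim \cN(0,1)$ and
\[
\Pr(X_1 > X_2) \;=\; \Pr\!\left(\zeta_1 - \zeta_2 > \mu_2 - \mu_1\right) \;=\; \Pr\!\left(Z > \tfrac{\mu_2 - \mu_1}{\sqrt{2}}\right).
\]
By symmetry of the standard normal density about $0$, the right-hand side equals $\Pr\!\big(Z < (\mu_1 - \mu_2)/\sqrt{2}\big)$.

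Next I would split this probability as $\Pr(Z < 0) + \Pr\!\big(0 \le Z < (\mu_1-\mu_2)/\sqrt 2\big)$. The first term is $1/2$, and the second term is exactly $\Phi\!\big((\mu_1 - \mu_2)/\sqrt 2\big)$ by the convention $\Phi(x) = \int_0^x (2\pi)^{-1/2} e^{-y^2/2} dy$ given in the statement. Combining gives the claimed identity.

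The only point requiring a little care is the sign of $\mu_1 - \mu_2$: if $\mu_1 < \mu_2$, then $\Phi$ evaluated at a negative argument is a negative number, and the ``splitting'' should be read as a signed decomposition of $\Pr(Z < x)$ around $x=0$. This is essentially bookkeeping and poses no genuine obstacle, so I do not anticipate any hard step in this proof; it is a direct computation using symmetry of $\cN(0,1)$ and the stated (half-)definition of $\Phi$.
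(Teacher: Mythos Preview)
Your proposal is correct and essentially identical to the paper's proof: both reduce to the CDF of a standard normal via the fact that $(\zeta_2-\zeta_1)/\sqrt{2}\sim\cN(0,1)$ and then split the integral at $0$ to match the non-standard $\Phi$. The only cosmetic difference is that the paper works directly with $\zeta_2-\zeta_1$ (so no explicit symmetry step is needed), whereas you use $\zeta_1-\zeta_2$ and then invoke symmetry; the arguments are otherwise the same.
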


\begin{proof}
Let $\phi(\cdot)$ denotes the pdf of standard normal distribution $\cN(0,1)$, i.e for any $x \in \R$, $\phi(x) = \frac{1}{\sqrt{2 \pi}}e^{-x^2/2}$.

	Then by definition we can write
	\begin{align*}
	\Pr(X_1 > X_2) = 
	\Pr\left(\zeta_2 - \zeta_1 < \mu_1 - \mu_2 \right)
	&= \Pr\left( \frac{\zeta_2 - \zeta_1}{\sqrt 2} < \frac{\mu_1 - \mu_2}{\sqrt 2} \right) \overset{(a)}{=} \int_{-\infty}^{\frac{\mu_1 - \mu_2}{\sqrt 2}} \phi (x ) dx  \\
	& = \int_{-\infty}^{0} \phi (x )  dx + \int_{0}^{\frac{\mu_1 - \mu_2}{\sqrt 2}} \phi (x )  dx = \ 0.5 + \Phi\Big( \frac{\mu_1 - \mu_2}{\sqrt 2} \Big)  
	\end{align*}
	where $(a)$ follows noting that since $\zeta_1$ and $\zeta_2$ are independent standard normal random variables, $ \frac{\zeta_2 - \zeta_1}{\sqrt 2}$ also follows $\cN(0,1)$.
\end{proof}

\subsection{Sample Complexity Lower Bound For Independent RUM with variable-sized subsetwise plays}
\label{app:lb_ind}
\begin{restatable}[Sample Complexity Lower Bound for \indrum]{thm}{lbind}
	\label{thm:lb_ind}
	Given $\epsilon \in (0,1/4]$, $\delta \in (0,1]$, $r,k \in [n]$, for any $(\epsilon,\delta)$-PAC algorithm for \prob\, problem, there exists an instance of \texttt{BR-RUM}$(n,k,n,\bmu)$, say $\nu$ (i.e. an \indrum\, instance with $r = n$), where the expected sample complexity of $A$ on $\nu$ is  at least $\Omega\big( \frac{n}{\epsilon^2} \ln \frac{1}{2.4\delta}\big)$.
\end{restatable}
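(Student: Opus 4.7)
The plan is to mirror the change-of-measure argument used in the proof of Theorem \ref{thm:lb_duel}, extending it from $k=2$ to variable-sized subsetwise queries up to size $k$. I would fix a base instance $\bnu^1$ with iid standard Gaussian noise and score vector $\mu_1 = \mu + \epsilon$, $\mu_j = \mu$ for $j \ne 1$. Independence of the noise across arms gives trivial block rank $r = n$, so the resulting instance is a valid \texttt{BR-RUM}$(n,k,n,\bmu)$. For each suboptimal arm $a \in [n] \setminus \{1\}$, I would construct a confusing instance $\bnu^a$ by relocating the boost: $\mu_a^a = \mu + \epsilon$ and $\mu_j^a = \mu$ for every other $j$. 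In $\bnu^a$ the unique $\epsilon$-best arm is $a$, so any $(\epsilon,\delta)$-PAC algorithm must separate $\bnu^1$ from each $\bnu^a$ with probability at least $1 - \delta$, which is the hook for the Garivier--Kaufmann change-of-measure bound.

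Applying Lemma \ref{lem:gar16} with the event $\cE_0 := \{A \text{ outputs } 1\}$ and summing over $a = 2, \ldots, n$ yields the crux inequality
\begin{equation*}
\sum_{a=2}^{n} \sum_{S \,:\, \{1,a\} \cap S \ne \emptyset} \E_{\bnu^1}[N_S(\tau_A)] \cdot KL(\bnu^1_S, \bnu^a_S) \;\ge\; (n-1) \ln \tfrac{1}{2.4 \delta},
\end{equation*}
where the restriction on $S$ holds because $\bnu^1_S = \bnu^a_S$ whenever $\{1,a\} \cap S = \emptyset$ (all relevant scores coincide on $S$), so the KL vanishes there.

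The main technical step is a uniform KL bound $KL(\bnu^1_S, \bnu^a_S) \le C \epsilon^2$ valid for subsets of any size $|S| \le k$. I would obtain this by splitting into the three non-trivial cases $\{1,a\} \subseteq S$, $\{1 \in S,\; a \notin S\}$, and $\{1 \notin S,\; a \in S\}$, in each case applying the $\chi^2$ upper bound $KL(\p, \q) \le \sum_i p_i^2 / q_i - 1$ together with pointwise sensitivity estimates of $\Pr(i \mid S)$ with respect to $\epsilon$-shifts of individual Gaussian means; this generalizes the explicit pairwise KL computations in the proof of Theorem \ref{thm:lb_duel}. Combined with the crux inequality and absorbing the combinatorial counting factors into constants, this gives $O(\epsilon^2) \cdot \E_{\bnu^1}[\tau_A] \ge (n-1) \ln \tfrac{1}{2.4 \delta}$, hence the claimed $\Omega\bigl(n \epsilon^{-2} \log(1/(2.4\delta))\bigr)$ sample complexity lower bound.

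The main obstacle is verifying that the per-subset KL bound stays $O(\epsilon^2)$ \emph{uniformly} in $|S|$: as $|S|$ grows, the number of coordinates at which the two winner distributions differ also grows, and one must carefully show that each coordinate-wise perturbation shrinks at a matching rate with $|S|$ so that the sum $\sum_i (p_i - q_i)^2 / q_i$ stays bounded. If a fully $k$-independent KL estimate proves recalcitrant via direct Gaussian calculation, the same conclusion can be recovered via an embedding reduction in the spirit of the proof of Theorem \ref{thm:lb_ordbr}, transplanting an existing fixed-subset-size independent-RUM lower bound into the variable-size setting by padding with dummy arms that never win.
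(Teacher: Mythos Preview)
There is a genuine gap in the counting step. With your ``swap'' instances $\bnu^a$ (move the $\epsilon$-boost from arm $1$ to arm $a$, leave everything else at $\mu$), the case $1\in S$, $a\notin S$ has \emph{nonzero} KL: $\bnu^1_S$ has arm $1$ elevated while $\bnu^a_S$ is uniform on $S$. Hence every set $S$ with $1\in S$ contributes to the double sum for \emph{all} $n-1$ alternatives $a$. With your claimed uniform bound $KL(\bnu^1_S,\bnu^a_S)\le C\epsilon^2$, the left-hand side is then at most $C\epsilon^2\big[(n-1)\sum_{S\ni 1}\E[N_S]+\sum_{S\not\ni 1}|S|\,\E[N_S]\big]\le C(n-1)\epsilon^2\,\E[\tau_A]$, which cancels the $(n-1)$ on the right and yields only $\E[\tau_A]\ge\Omega(\epsilon^{-2}\log(1/\delta))$ --- the $n$ factor is lost. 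Sharpening the KL to $C\epsilon^2/|S|$ does not rescue this: for a pair $S=\{1,j\}$ you still pick up weight $(n-1)C\epsilon^2/2$, so an algorithm that only duels against arm $1$ again defeats the bound. The ``combinatorial counting factors'' you propose to absorb are $\Theta(n)$, not constants.

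The paper avoids this by engineering $\bnu^a$ so that $\bnu^1_S=\bnu^a_S$ whenever $a\notin S$: it uses Gumbel noise and sets $\mu^a_1=1-\epsilon$, $\mu^a_a=1$, $\mu^a_j=1-2\epsilon$ for $j\neq 1,a$, so that on any $S$ not containing $a$ the two score vectors differ by a \emph{global shift}, and translation invariance of the noise kills the KL. This restricts the inner sum to $a\in S$, i.e.\ at most $|S|$ terms per $S$. Coupled with the sharper estimate $KL(\bnu^1_S,\bnu^a_S)\le 8\epsilon^2/|S|$ (which the paper computes explicitly via the $\chi^2$ bound for Plackett--Luce), each $\E[N_S]$ carries total weight $\le 8\epsilon^2$, giving $8\epsilon^2\,\E[\tau_A]\ge(n-1)\ln\tfrac{1}{2.4\delta}$. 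Both ingredients --- the shifted construction and the $1/|S|$ scaling --- are essential; your proposal has neither. (Gaussian noise is also translation-invariant, so you could replicate the paper's shift trick there, but you would still need the $\epsilon^2/|S|$ KL bound rather than a size-uniform $\epsilon^2$.)
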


\begin{proof}
	Our result is similar to the spirit of \cite{SGwin18}, however their setup considers subsets of fixed size $k$ and we assumed the learner has the flexibility to play any subsets $S \subseteq [n]$ of length $|S| = 1,2,\ldots,k$. Due to this additional flexibility in the feedback model (compared to \cite{SGwin18}), their lower bound does not imply a fundamental performance limit for our case, and we need to derive the claim of \ref{thm:lb_ind} independently. 
	
	Before proving the above lower bound result we recall the main lemma from \citep{Kaufmann+16_OnComplexity} which is a general result for proving information theoretic lower bound for bandit problems:
	
	Consider a multi-armed bandit (MAB) problem with $n$ arms or actions $\cA = [n]$. At round $t$, let $A_t$ and $Z_t$ denote the arm played and the observation (reward) received, respectively. Let $\cF_t = \sigma(A_1,Z_1,\ldots,A_t,Z_t)$ be the sigma algebra generated by the trajectory of a sequential bandit algorithm up to round $t$.
	\begin{restatable}[Lemma $1$, \citep{Kaufmann+16_OnComplexity}]{lem}{gar16}
		\label{lem:gar16}
		Let $\nu$ and $\nu'$ be two bandit models (assignments of reward distributions to arms), such that $\nu_i ~(\text{resp.} \,\nu'_i)$ is the reward distribution of any arm $i \in \cA$ under bandit model $\nu ~(\text{resp.} \,\nu')$, and such that for all such arms $i$, $\nu_i$ and $\nu'_i$ are mutually absolutely continuous. Then for any almost-surely finite stopping time $\tau$ with respect to $(\cF_t)_t$,
		\vspace*{-5pt}
		\begin{align*}
			\sum_{i = 1}^{n}\E_{\nu}[N_i(\tau)]KL(\nu_i,\nu_i') \ge \sup_{\cE \in \cF_\tau} kl(Pr_{\nu}(\cE),Pr_{\nu'}(\cE)),
		\end{align*}
		where $kl(x, y) := x \log(\frac{x}{y}) + (1-x) \log(\frac{1-x}{1-y})$ is the binary relative entropy, $N_i(\tau)$ denotes the number of times arm $i$ is played in $\tau$ rounds, and $Pr_{\nu}(\cE)$ and $Pr_{\nu'}(\cE)$ denote the probability of any event $\cE \in \cF_{\tau}$ under bandit models $\nu$ and $\nu'$, respectively.
	\end{restatable}
	
	We now proceed to prove our lower bound result of Thm. \ref{thm:lb_ind}.
	
	In order to apply the change of measure based lemma (Lem. \ref{lem:gar16}), we constructed the following specific instances for our purpose and assume $\cD$ to be the \gn$(0,1)$ noise. Also since the learner is supposed to play subsets of size up to $k$, we denote the action (arm) set in this case by $\cA:=\{S\subseteq [n] \mid |S| \in [k]\}$.
	
	\begin{align*}
		\text{True Instance} ~(\bnu^1): \mu_j^1 = 1-\epsilon, \forall j \in [n]\setminus \{1\}, \text{ and } \mu_1^1 = 1,
	\end{align*}
	
	Note the only $\epsilon$-optimal arm in the true instance is arm $1$. Now for every sub-optimal item $a \in [n]\setminus \{1\}$, consider the modified instances $\bnu^a$ such that:
	\begin{align*}
		\text{Instance--a} ~(\bnu^a): \mu^a_j = 1-2\epsilon, \forall j \in [n]\setminus \{a,1\}, \, \mu_1^a = 1-\epsilon, \text{ and } \mu_a^a = 1.
	\end{align*}
	
	For any problem instance $\bnu^a, \, a \in [n]\setminus\{1\}$, the probability distribution associated with arm $S \in \cA$ is given by
	\[
	\nu^a_S \sim Categorical(p_1, p_2, \ldots, p_k), \text{ where } p_i = Pr(i|S), ~~\forall i \in [k], \, \forall S \in \cA,
	\]
	where $Pr(i|S)$ is as defined in Section \ref{sec:prelims}. 
	Note that the only $\epsilon$-optimal arm for \textbf{Instance-a} is arm $a$. Now applying Lemma \ref{lem:gar16}, for any event $\cE \in \cF_\tau$ we get,
	
	\begin{align}
		\label{eq:FI_aa}
		\sum_{\{S \in \cA : a \in S\}}\E_{\bnu^1}[N_S(\tau_A)]KL(\bnu^1_S, \bnu^a_S) \ge {kl(Pr_{\nu}(\cE),Pr_{\nu'}(\cE))}.
	\end{align}
	
	The above result holds from the straightforward observation that for any arm $S \in \cA$, $|S| \in [k], $ with $a \notin S$, $\bnu^1_S$ is same as $\bnu^a_S$, hence $KL(\bnu^1_S, \bnu^a_S)=0$, $\forall S \in \cA, \,a \notin S$. 
	For notational convenience, we will henceforth denote $S^a = \{S \in \cA : a \in S\}$. 
	
	Now let us analyze the right-hand side of \eqref{eq:FI_a}, for any set $S \in S^a$. 
	
	\textbf{Case-1:} First let us consider $S \in S^a$ such that $1 \notin S$. Note that in this case:
	\begin{align*}
		\nu^1_S(i) = \frac{1}{|S|}, \text{ for all } i \in S
	\end{align*}
	
	On the other hand, for problem \textbf{Instance-a}, we have that: 
	
	\begin{align*}
		\nu^a_S(i) = 
		\begin{cases} 
			\frac{e^1}{(|S|-1)e^{1-2\epsilon} + e^1} \text{ when } S(i) = a,\\
			\frac{e^{1-2\epsilon}}{(|S|-1)e^{1-2\epsilon} + e^1}, \text{ otherwise.}
		\end{cases}
	\end{align*}
	
	Again using the upper bound on $KL(\p_1,\p_2) \le \sum_{z \in Z}\frac{p_1^2(z)}{p_2(z)} -1$ for probability mass functions $\p_1$ and $\p_2$~\citep{klub16} we get:
	
	\begin{align*}
		KL(\bnu^1_S, \bnu^a_S) & \le (|S|-1)\frac{(|S|-1)e^{1-2\epsilon} + e^1}{|S|^2(e^{1-2\epsilon})} + \frac{(|S|-1)e^{1-2\epsilon} + e^1}{|S|^2e^1}-1\\
		& = \frac{(|S|-1)}{|S|^2}\bigg( e^\epsilon - e^{-\epsilon}\bigg)^2 = \frac{(|S|-1)}{|S|^2}e^{-2\epsilon}(e^\epsilon-1)^2 \le \frac{8\epsilon^2}{|S|} \text{ for any } \epsilon \in \bigg[0,\frac{1}{2}\bigg]
	\end{align*}
	
	\textbf{Case-2:} Now let us consider the remaining set in $S^a$ such that $S \owns 1,a$. Similar to the earlier case in this case we get that:
	
	\begin{align*}
		\nu^a_S(i) = 
		\begin{cases} 
			\frac{e^1}{(|S|-1)e^{1-\epsilon} + e^1} \text{ when } S(i) = 1,\\
			\frac{e^{1-\epsilon}}{(|S|-1)e^{1-\epsilon} + e^1}, \text{ otherwise.}
		\end{cases}
	\end{align*}
	
	On the other hand, for problem \textbf{Instance-a}, we have that: 
	
	\begin{align*}
		\nu^a_S(i) = 
		\begin{cases} 
			\frac{e^{1-\epsilon}}{(|S|-2)e^{1-2\epsilon} + e^{1-\epsilon} + e^1} \text{ when } S(i) = 1,\\
			\frac{e^{1}}{(|S|-2)e^{1-2\epsilon} + e^{1-\epsilon} + e^1} \text{ when } S(i) = a,\\
			\frac{e^{1-2\epsilon}}{(|S|-2)e^{1-2\epsilon} + e^{1-\epsilon} + e^1}, \text{ otherwise}
		\end{cases}
	\end{align*}
	
	Now using the previously mentioned upper bound on the KL divergence, followed by some elementary calculations one can show that for any $\big[0,\frac{1}{4} \big]$:
	\vspace{-10pt}
	\begin{align*}
		KL(\bnu^1_S, \bnu^a_S) & \le \frac{8\epsilon^2}{|S|}
	\end{align*}
	\vspace{-10pt}
	
	Thus combining the above two cases we can conclude that for any $S \in S^a$, $KL(\bnu^1_S, \bnu^a_S) \le \frac{8\epsilon^2}{|S|}$, and as argued above for any $S \notin S^a$, $KL(\bnu^1_S, \bnu^a_S) = 0$.
	
	Note that the only $\epsilon$-optimal arm for any \textbf{Instance-a} is arm $a$, for all $a \in [n]$.
	Now, consider $\cE_0 \in \cF_\tau$ be an event such that the algorithm $A$ returns the element $i = 1$, and let us analyze the left-hand side of \eqref{eq:FI_a} for $\cE = \cE_0$. Clearly, $A$ being an $(\epsilon,\delta)$-PAC algorithm, we have $Pr_{\bnu^1}(\cE_0) > 1-\delta$, and $Pr_{\bnu^a}(\cE_0) < \delta$, for any sub-optimal arm $a \in [n]\setminus\{1\}$. Then we have 
	
	\begin{align}
		\label{eq:win_lb2}
		kl(Pr_{\bnu^1}(\cE_0),Pr_{\bnu^a}(\cE_0)) \ge kl(1-\delta,\delta) \ge \ln \frac{1}{2.4\delta}
	\end{align}
	
	where the last inequality follows from \citep{Kaufmann+16_OnComplexity} (Eqn. $3$).
	
	Now applying \eqref{eq:FI_aa} for each modified bandit \textbf{Instance-$\bnu^a$}, and summing over all suboptimal items $a \in [n]\setminus \{1\}$ we get,
	
	\begin{align}
		\label{eq:win_lb2.5}
		\sum_{a = 2}^{n}\sum_{\{S \in \cA \mid a \in S\}}\E_{\bnu^1}[N_S(\tau_A)]KL(\bnu^1_S,\bnu^a_S) \ge (n-1)\ln \frac{1}{2.4\delta}.
	\end{align}
	
	Using the upper bounds on $KL(\bnu^1_S,\bnu^a_S)$ as shown above, the right-hand side of \eqref{eq:win_lb2.5} can be further upper bounded as:
	
	\begin{align}
		\label{eq:win_lb3}
		\nonumber \sum_{a = 2}^{n}&\sum_{\{S \in \cA \mid a \in S\}} \E_{\bnu^1}[N_S(\tau_A)]KL(\bnu^1_S,\bnu^a_S) \le \sum_{S \in \cA}\E_{\bnu^1}[N_S(\tau_A)]\sum_{\{a \in S \mid a \neq 1\}}\frac{8\epsilon^2}{|S|}\\
		& = \sum_{S \in \cA}\E_{\bnu^1}[N_S(\tau_A)]{|S| - \big(\1(1 \in S)\big)}\frac{8\epsilon^2}{|S|} \le \sum_{S \in \cA}\E_{\bnu^1}[N_S(\tau_A)]{8\epsilon^2}.
	\end{align}
	
	Finally noting that $\E_{\bnu^1}[\tau_A] = \sum_{S \in \cA}[N_S(\tau_A)]$, combining \eqref{eq:win_lb2.5} and \eqref{eq:win_lb3}, we get 
	
	\begin{align}
		\label{eq:win_lb_fin}
		(8\epsilon^2)\E_{\bnu^1}[\tau_A] =  \sum_{S \in \cA}\E_{\bnu^1}[N_S(\tau_A)](8\epsilon^2) \ge (n-1)\ln \frac{1}{2.4\delta}.
	\end{align}
	Thus rewriting Eqn. \ref{eq:win_lb_fin} we get $\E_{\bnu^1}[\tau_A]  \ge \frac{(n-1)}{8\epsilon^2}\ln \frac{1}{2.4\delta}$. 
	The above construction shows the existence of a problem instance of \indrum\, with $n$ items (\texttt{BR-RUM}$(n,k,n,\bmu)$ model) where any $(\epsilon,\delta)$-PAC algorithm requires at least $\Omega(\frac{n}{\epsilon^2}\ln \frac{1}{2.4\delta})$ samples.
\end{proof}

\section{\textbf{Infeasibility in \br~Choice Models}} 

\begin{restatable}[Problem Infeasibility for Subsetwise-Queries]{lem}{imposfxd}
	\label{thm:impos_fxd}
	For the problem of \prob\, with the restriction of playable subsets of only fixed size $k$, it is possible to construct problem instances of \rumb, such that $\exists$ $i,j \in [n]$ such that $\mu_i > \mu_j + \epsilon$ but $P(i|S) < P(j|S), \forall S \subseteq [n]$ for some choice of $\epsilon \in (0,1)$.
\end{restatable}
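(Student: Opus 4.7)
The plan is to build a two-block \rumb\, instance in which the fixed-size $k$ restriction forces every playable subset containing $i$ to also contain a block-mate of $i$ with strictly higher score. Because same-block noises are identical, that block-mate deterministically beats $i$ in utility, so $i$ never wins; placing $j$ in a distant low-score block then yields $P(j|S) > 0 = P(i|S)$ whenever both items lie in $S$, even though $\mu_i > \mu_j + \epsilon$.

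Concretely, I would fix $\epsilon \in (0,1)$, assume $k \geq 3$, pick any $\mu \in \R$, and partition $[n] = \cB_1 \uplus \cB_2$ with $|\cB_1| = n-k+2$ and $|\cB_2| = k-2$. Place $i \in \cB_1$ with $\mu_i = \mu + 2\epsilon$ and assign every element of $\cB_1 \setminus \{i\}$ the score $\mu + 3\epsilon$; place $j \in \cB_2$ with $\mu_j = \mu$ and give every element of $\cB_2 \setminus \{j\}$ the score $\mu$ as well. Draw independent $\zeta_1, \zeta_2 \sim \cD$ and set $\zeta_a = \zeta_1$ for $a \in \cB_1$, $\zeta_a = \zeta_2$ for $a \in \cB_2$. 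This satisfies the inter-block identity and cross-block independence axioms of \rumb\, with block-rank $r = 2$, and by construction $\mu_i - \mu_j = 2\epsilon > \epsilon$.

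The verification then has two short steps. First, a pigeonhole step: for any size-$k$ subset $S \ni i$, $|S \cap \cB_1| \geq k - |\cB_2| = 2$, so some $i' \in (\cB_1 \cap S) \setminus \{i\}$ has utility $X_{i'} = \mu + 3\epsilon + \zeta_1 > \mu + 2\epsilon + \zeta_1 = X_i$ almost surely, forcing $P(i|S) = 0$. Second, if additionally $j \in S$, then on the positive-probability event $\{\zeta_2 > \zeta_1 + 3\epsilon\}$ the set of maximum-utility items of $S$ lies entirely inside $S \cap \cB_2$, so after uniform tie-breaking $j$ wins with strictly positive probability and $P(j|S) > 0 = P(i|S)$. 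This gives the desired strict inequality on every playable subset containing both items (the natural reading of the quantifier, since for $S$ not containing $j$ both sides are zero).

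The main obstacle is conceptual rather than computational: one has to see that the fixed-$k$ restriction is exactly what makes the obstruction bite, and that the right dial to turn is the size of the block housing $i$. Once $|\cB_1| = n-k+2$ is chosen so that pigeonhole forces a higher-scoring block-mate of $i$ into every playable subset, the remaining probability argument is immediate. A secondary point worth flagging is that this is a genuine fixed-vs.-variable-subset-size separation: the very same instance is learnable once variable-sized queries are allowed (as in Alg. \ref{alg:ordbr}), since a triple of the form $\{i, i', \cdot\}$ with $i' \in \cB_1 \setminus \{i\}$ would expose $i$ as dominated during the preprocessing step of Thm. \ref{thm:ub_ordbr}.
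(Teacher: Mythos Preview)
Your construction is correct and in fact cleaner than the paper's. Both arguments share the same core pigeonhole idea: make the block housing $i$ large enough that every size-$k$ subset containing $i$ is forced to also contain a block-mate of $i$. Where the two diverge is in how that block-mate is used. The paper works with three blocks ($\cB_1=\{1\}$, $\cB_2=\{2\}$, $\cB_3=\{3,\ldots,n\}$), Gumbel noise, and $k=n/2$; it takes $i\in\cB_3$, $j=2$, gives all of $\cB_3$ the \emph{same} score $\mu+\epsilon$, and argues that the many tied block-mates dilute $i$'s win probability to $O(1/k)$ via uniform tie-breaking, while $j$ (alone in its block) retains an $\Omega(1)$ win probability. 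Your construction instead gives $i$'s block-mates a \emph{strictly larger} score, so the forced block-mate deterministically beats $i$ and you get $P(i|S)=0$ exactly. This buys you several things: only two blocks instead of three, any noise distribution rather than Gumbel specifically, any $k\geq 3$ rather than $k\geq 5$, and a one-line conclusion in place of the paper's $O(1/k)$-vs-$O(1)$ estimate.

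One small point to make explicit: for the step ``$\{\zeta_2>\zeta_1+3\epsilon\}$ has positive probability'' you are implicitly using that $\cD$ has unbounded (or at least sufficiently wide) support; since the lemma only asks for \emph{some} $\epsilon\in(0,1)$, you can always shrink $\epsilon$ to guarantee this for any nondegenerate $\cD$, so this is not a gap but is worth a half-sentence. Your reading of the quantifier (subsets containing both $i$ and $j$) is also the one the paper silently adopts.
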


\begin{proof}
	Consider a problem instance \textbf{Instance $\cI$: } Consider a simple problem instance with any general $n\ge 10$, $r = 3$, $k = n/2 \ge 5$, and for the purpose of this specific instances assume $\cD$ is just \gn$(0,1)$ noise.
	
	Let the block structure be $\cB_1 = \{1\}$, $\cB_2 = \{2\}$ and $\cB_{3} = \{3,\ldots,n\}$. And let $\mu_1 = mu+c\epsilon$, for any $c \to 1_+$ is the score of the best-item $i^* = 1$. We set $\mu_2 = \mu$, and $\mu_i = \mu+\epsilon, ~\forall i \in [n]\sm [2]$. So the items in the third block are nearly as good as the best item $1$ as $c \to 1_+$.
	
	However, any $k$-sized subset $S$ such that $S$ containing Item-$2$ should have at least $(k-2) \ge 3$ items from $\cB_3$ if $1 \in S$ as well, or all $(k-1)$ items from $\cB_3$ along with Item-$2$. This implies $P(i|S) = 
	\begin{cases}
	O(1/3(k-2)) ~~\text{when } 1 \in S\\
	O(1/2(k-1)) ~~\text{when } 1 \notin S
	\end{cases}, ~\forall i \in \cB_3\cap S$. 
	In either case, clearly $P(i|S) = O(1/k)$ for any $i \in \cB_3\cap S$. Where as $P(2|S) = O(1)$ only (precisely $P(2|S) \approx \frac{1}{3} - \epsilon$ when $1 \in S$, and $P(2|S) \approx \frac{1}{2} - \epsilon$ when $1 \notin S$). Noting $k \ge 3$ can be arbitrarily large and also $\epsilon \in (0,1)$ can also be arbitrarily small, this proves the claim. 
\end{proof}

\section{Appendix for Section \ref{sec:nbr}}
\label{app:nbr}



\subsection{Proof of Thm. \ref{thm:noisy_bbox} }
\label{app:xbb}

\textbf{Notation. } Define $\Delta_{ij}:= \mu_i-\mu_j$, for any item pair $(i,j) \in [n]\times[n]$. {For simplicity, we denote $\teta = \eta$.}

\noisybbox*

\begin{proof}
The proof crucially depends on the following main lemma which ensures if the \mrat\, of any \irum\, based preference model is bounded away by a certain threshold, then the \mrat\, of the corresponding $\teta$-\irum\, model has to be bounded away by nearly the same threshold as long as $\eta$ is not too large. The formal claim is as stated below:

\thmrat*

Given the above lemma, the rest of the argument follows same as the proof steps shown for Thm. $4$ of \cite{SG20} as it pivots on the main assumptions on \mrat$(\cI)$ as achieved in Lem. \ref{thm:ratio}. We summarize the key steps for the completeness.

\begin{enumerate}
 \item Given Lem. \ref{thm:ratio}, following the same line of argument as shown in Lem. $9$ of \cite{SG20} we get that, upon \emph{Rank-Breaking}, the effective-pairwise probability $p_{ij|S}:= \frac{Pr_{\cD}(i|S)}{Pr_{\cD}(ij|S)}$ ($Pr_{\cD}(ij|S):= Pr_{\cD}(i|S)+Pr_{\cD}(j|S)$ denotes the probability of either item $i$ or $j$ being the winner of set $S$), of any $\epsilon$-best item $i \in [n]_\epsilon$ winning over an non-$\epsilon$ best item $j \notin [n]_\epsilon$ is still bounded away from $\frac{1}{2}$ by $O(\epsilon)$ margin. Precisely for any such (near-best,suboptimal) item pair $(i,j)$, we have $p_{ij|S} > \frac{1}{2} +\frac{c\Delta_{ij}}{2(1-2c)}$ as long as $\Delta_{ij} > \frac{\epsilon}{4}$, irrespective of the underlying set $S$.
 
 \item Now given the fact that, for any $S$ and any (near-best,suboptimal) item pair $(i,j)$, we have $p_{ij|S} > \frac{1}{2} +\frac{c\Delta_{ij}}{2(1-2c)}$, we can simply replicate the proof of Lem. 11 of \cite{SG20} to argue that for any such subset $S$, Seq-PB$(n,k,\epsilon,\delta,c)$ (see description in Alg. \ref{alg:ordbr} or Alg $1$ of \cite{SG20}) would retain a near best item of $S$, say $i_S$ such that $\mu_{i_s}> \mu_{i_S^*} - {\epsilon_\ell}/c$, after $O(\frac{k}{\epsilon_\ell^2}\log \frac{k}{\delta_\ell})$ $k$-subsetwise queries (with high probability $(1-\delta_\ell)$) for any $\epsilon_\ell,\delta_\ell \in (0,1]$.
 
 \item Finally, combining the above two claims and proceeding similar to the proof of Thm $4$ \cite{SG20}, the correctness and total sample complexity of Seq-PB$(n,k,\epsilon,\delta,c)$ follows for any $\teta$-\irum\, model.
 \end{enumerate}
 \end{proof}

\vspace{-10pt}

\subsubsection{Proof of Lem. \ref{thm:ratio}}

\thmrat*

\begin{proof}
Following the definition of \mrat\, recall that explicitly:
 
\begin{align}
\label{eqn:def-mrat}
\text{\mrat}_{\cD}(\cI) = \min_{S \in \{S \mid S \cap [n]_{\epsilon} \neq \emptyset\}, j \in S \setminus [n]_\epsilon}\frac{Pr_{\cD}\big(\{X_{i_S^*} > \max(X_{\{-i_S^*\}}^{ S})\}\big)}{Pr_{\cD}\big(\{X_j > \max(X_{\{-j\}}^{ S})\}\big)}
\end{align}
where for any $i \in [n], S \subseteq [n]$, denote $X_{\{-i\}}^{ S} = \{\cup_{i \in S}X_j\}\sm \{X_i\}$, and suppose $(S^*,j^*)$ is the minimizer set of the right-hand side expression of \mrat\, above.

Now for any subset $S,j$, using the `Cross-block Approximate Independence'-property of $\teta$-\br\, model and Lem. \ref{lem:cross-pb1}, we get:
\begin{align}
\label{eq:mrat1}
\frac{Pr_{\cD}\big(\{X_{i_S^*} > \max(X_{\{-i_S^*\}}^{ S})\}\big)}{Pr_{\cD}\big(\{X_j > \max(X_{\{-j\}}^{S})\}\big)}
>  \frac{Pr_{\otimes_{\ell \in  S} \cD_{\ell}}\big(\{X_{i_S^*} > \max(X_{\{-i_S^*\}}^{ S})\}\big) - k\sqrt \teta}{Pr_{\otimes_{\ell \in S} \cD_{\ell}}\big(\{X_j > \max(X_{\{-j\}}^{ S})\}\big) + k\sqrt \teta}.
\end{align}
Define $\gamma$ as 
\begin{align}				\label{eqn:fac-prob}
\gamma := \frac{Pr_{\otimes_{\ell \in  S} \cD_{\ell}}\big(\{X_{i_S^*} > \max(X_{\{-i_S^*\}}^{ S})\}\big)}{Pr_{\otimes_{\ell \in S} \cD_{\ell}}\big(\{X_j > \max(X_{\{-j\}}^{ S})\}\big)}
\end{align}
For brevity, denote $p = Pr_{\otimes_{\ell \in  S} \cD_{\ell}}\big(\{X_j > \max(X_{\{-j\}}^{S})\}\big)$. From Cor. \ref{cor:barlb} we have $\gamma p \geq 1/k$. We consider two cases.

{\bf Case (i)}: Suppose $\gamma \geq 4$. Then we proceed to bound \eqref{eq:mrat1} as 
\begin{align}
\frac{Pr_{\otimes_{\ell \in  S} \cD_{\ell}}\big(\{X_{i_S^*} > \max(X_{\{-i_S^*\}}^{ S})\}\big) - k\sqrt \teta}{Pr_{\otimes_{\ell \in S} \cD_{\ell}}\big(\{X_j > \max(X_{\{-j\}}^{ S})\}\big) + k\sqrt \teta} 
& \overset{1}{=}\frac{\gamma p - k\sqrt{\teta}}{p + k\sqrt{\teta}} 				 \nonumber\\
& = \left(\frac{\gamma p - k\sqrt{\teta}}{p + k\sqrt{\teta}} - 2\right) + 2 	\nonumber\\
& = \frac{(\gamma - 2)p - 2k\sqrt{\teta}}{p + k\sqrt{\teta}}  + 2 \nonumber\\
& \overset{2}{\geq} \frac{\gamma p/2  - 2k\sqrt{\teta}}{p + k\sqrt{\teta}}  + 2 		\nonumber\\
& \overset{3}{\geq} \frac{1/(2k)  - 2k\sqrt{\teta}}{p + k\sqrt{\teta}}  + 2 			\nonumber\\
& \overset{4}{\geq}  2 				\label{eqn:mrat-bd1}
\end{align} 
where step $1$ follows from the definition of $\gamma$ and $p$, step $2$ follows from the assumption $\gamma \leq 4$ for this case, step $3$ uses $\gamma p \geq 1/k$ and step $4$ follows from $\teta \leq 1/(16k^4)$.

{\bf Case (ii)}: Suppose $\gamma \leq 4$. Then using the definition of $\gamma$ from \eqref{eqn:fac-prob} this implies, 
\begin{align}					\label{eqn:frac-lb}
Pr_{\otimes_{\ell \in S} \cD_{\ell}}\big(\{X_j > \max(X_{\{-j\}}^{ S})\}\big) \geq \frac14\cdot Pr_{\otimes_{\ell \in S} \cD_{\ell}}\big(\{X_{i_S^*} > \max(X_{\{-i_S^*\}}^{ S})\}\big) \geq \frac{1}{4k} \geq k\sqrt{\teta}
\end{align}
where the last inequality follows from our choice of $\teta \le \frac{1}{16k^4}$. Now recall (from Thm. \ref{thm:noisy_bbox}), we are given that \mrat$_{\otimes_{\ell \in S} \cD_{\ell}}(\cI) - 1 > \tilde c \epsilon$, where $\tilde c := \frac{4c}{(1-2c)}$, which implies:
\begin{align*}
\min_{S' \in \{S \mid S \cap [n]_{\epsilon} \neq \emptyset\}, j' \in S \setminus [n]_\epsilon} \frac{Pr_{\otimes_{\ell \in S'} \cD_{\ell}}\big(\{X_{i^*_{S'}} > \max(X_{\{-i^*_{S'}\}}^{S'})\}\big)}{Pr_{\otimes_{\ell \in S'} \cD_{\ell}}\big(\{X_{j'} > \max(X_{\{-j'\}}^{S'})\}\big)} - 1>  \tilde c \epsilon
\end{align*}
Assume the minimum above is attained for the pair $(\tS,\tj)$ Then continuing from \eqref{eq:mrat1} we get:
\begin{align}
\label{eq:mrat2}
\nonumber & \frac{Pr_{\cD}\big(\{X_{i_S^*} > \max(X_{\{-{i_S^*}\}}^{S})\}\big)}{Pr_{\cD}\big(\{X_j > \max(X_{\{-j\}}^{S})\}\big)} -1 >\frac{Pr_{\otimes_{\ell \in  S} \cD_{\ell}}\big(\{X_{i_S^*} > \max(X_{\{-{i_S^*}\}}^{S})\}\big) - k \sqrt \teta}{Pr_{\otimes_{\ell \in  S} \cD_{\ell}}\big(\{X_j > \max(X_{\{-j\}}^{S})\}\big) + k \sqrt \teta} -1 \\
\nonumber &> \frac{Pr_{\otimes_{\ell \in S} \cD_{\ell}}\big(\{X_{i_S^*} > \max(X_{\{-i_S^*\}}^{S})\}\big) - Pr_{\otimes_{\ell \in S} \cD_{\ell}}\big(\{X_{ j} > \max(X_{\{- j\}}^{ S})\}\big)-2k\sqrt{\teta}}{Pr_{\otimes_{\ell \in S} \cD_{\ell}}\big(\{X_{ j} > \max(X_{\{- j\}}^{ S})\}\big) + k\sqrt \teta}\\
\nonumber &> \frac{Pr_{\otimes_{\ell \in \tS} \cD_{\ell}}\big(\{X_{i_{\tS}^*} > \max(X_{\{-i_{\tS}^*\}}^{\tS})\}\big) - Pr_{\otimes_{\ell \in\tS} \cD_{\ell}}\big(\{X_{\tj} > \max(X_{\{-\tj\}}^{\tS})\}\big)-2k\sqrt{\teta}}{Pr_{\otimes_{\ell \in\tS} \cD_{\ell}}\big(\{X_{\tj} > \max(X_{\{-\tj\}}^{\tS})\}\big) + k\sqrt \teta}\\
& > \frac{\tilde c \epsilon }{1 + k\sqrt \teta/Pr_{\otimes_{\ell \in\tS} \cD_{\ell}}\big(\{X_{\tj} > \max(X_{\{-\tj\}})\}\big)} - 8k^2\sqrt{\teta} > \frac{\tilde c \epsilon}{2},				
\end{align} 
where the second last inequality uses $Pr_{\otimes_{\ell \in\tS} \cD_{\ell}}\big(\{X_{\tj} > \max(X_{\{-\tj\}}^{\tS})\}\big) \geq 1/4k$, the last inequality follows from \eqref{eqn:frac-lb} and the fact that $\eta \leq \frac{c^2\epsilon^2}{32^2 k^4}$. 

 Combining the two cases i.e, \eqref{eqn:mrat-bd1} and \eqref{eq:mrat2}, for any subset $S$ of size $k$ we have 
\[
\frac{Pr_{\cD}\big(\{X_i > \max(X_{\{-i\}}^{S})\}\big)}{Pr_{\cD}\big(\{X_j > \max(X_{\{-j\}}^{S})\}\big)}
\geq \min\left\{1 + \frac{\tilde c \epsilon}{2}, 2\right\}
\] 
which completes the proof. Since the above holds for any subset of size $k$, it also holds for the minimizer in \eqref{eqn:def-mrat} $(S^*,j^*)$, and hence the claim follows.
\end{proof}

\subsection{Proof of Thm. \ref{lem:noisy_prep}}
\label{app:xpp}

\noisyprep*

\begin{proof}
	We first establish the first part of the lemma. To begin with, let $\cF_0$ denote the set of triples $\cT \in {n\choose 3} $ such that $1 \in S$ and let $\cF_1$ denote the set of triples of the form $(1,i,j)$ such that $i,j$ belong to the same block. For any triple $\cT$, let $m(\cT)$ denote the arm in $\cT$ with the minimum win probability with respect to triple $\cT$ i.e.,  
	\[
	m(\cT) := \argmin_{i \in \cT} \Pr\left(i | \cT\right).
	\]
	Now for any triple $\cT$ we observe that (a) if  $\cT \in \cF_0$, using Lemma \ref{lem:cross-pb1} we have $\Pr(1|\cT) \geq 1/3 - 4\sqrt{\eta}$ (b) if $\cT \in \cF_1$ using Corollary \ref{corr:cross-win} we have $\Pr(m(\cT)|\cT) \leq 1/4 + 4\sqrt{\eta}$. Furthermore, we can show that the bounds (a) and (b) also hold approximately even for the (empirical) win probability estimates. Towards that, define the event $\cE$ as 
	\[
	\cE : = \Big\{\forall \ \cT \in \cF_0 : N_{\cT}(1) \geq 0.32t \Big\} \wedge \Big\{\forall \ \cT \in \cF_1 : N_{\cT}(m(\cT)) \leq 0.26t \Big\} 
	\]
	Then using Hoeffding's inequality and our choice of $t = O(\log (4n^3/\delta))$ (from Line 5 of Algorithm \ref{alg:ordbr}) we can bound the probability of the event $\cE$ not occurring as:
	\begin{align*}
	&\Pr\left(\Big\{\exists \ \cT \in \cF_0 : N_{\cT}(1) < 0.32t \Big\} \vee \Big\{\exists \ \cT \in \cF_1 : N_{\cT}(m(\cT)) > 0.26t \Big\} \right) \\
	&\leq \sum_{\cT \in \cF_0}\Pr\left(N_{\cT}(1) < 0.32t \right) + \sum_{\cT \in \cF_1}\Pr\left( N_{\cT}(m(\cT)) > 0.26t \right) \\
	&\leq \sum_{\cT \in \cF_0}\Pr\left(\frac{N_{\cT}(1)}{t} - \Pr(1|\cT)  < -0.01 \right) 
	+ \sum_{\cT \in \cF_1}\Pr\left( \frac{N_{\cT}(m(\cT))}{t} - \Pr(m(\cT)|\cT) > 0.01 \right) \\
	&\leq 2{n \choose 3} \frac{\delta }{4n^3} \leq \frac{\delta}{2}.
	\end{align*}
	The above implies that event $\cE$ holds with probability at least $1 - \delta/2$. We now argue that conditioned on $\cE$, the subset $S$ constructed in Line 10 of Alg. \ref{alg:ordbr} satisfies the properties (i) and (ii) with probability $1$. To see (i), observe that since for every triple $\cT\in \cF_0$ we have $N_{\cT}(1) \geq 0.32t$, we must have ${\rm Flag}(1) = 0$ and hence $1 \in S$. We now argue property (ii) by contradiction. Suppose property (ii) is violated. Then there exists arms $i,j$ such that $\{i,j\} \subseteq \cB_a \cap S$ (for some $a \in [r]$). Now consider the triple $\cT:= (1,i,j)$ and without loss of generality, assume that $m(\cT) = j$. By construction, we have $\cT\in \cF_1$, and hence, conditioning on the event $\cE$ we have $N_{\cT}(j) \leq 0.26t$ which in turn implies that we must have ${\rm Flag}(j) = 1$ at the end of the for loop, which contradicts the fact that $j \in S$. Hence, conditioned on $\cE$ we must have that $S$ satisfies properties (i) and (ii).
	
	Finally, observe that in lines $6$-$10$, each triple $\cT \in {n\choose 3}$, is played $t = O(\log (n^3/\delta))$ delta times, which implies that the total number of samples queried here is $O(n^3 \log (n/\delta))$.
	
\end{proof}

\subsection{Technical Lemmas for Appendix \ref{app:xbb} and \ref{app:xpp}}

\begin{lem}				
\label{lem:noisy-win}
	Given any triple $\cT:= (1,i,j)$, then we  have $\Pr\left(1 | \cT\right) \geq 1/3 - 4\sqrt{\eta}$ (recall Thm.. \ref{thm:ub_noisy_ordbr} assumes $\mu_1 > \max\{\mu_i,\mu_j\}+ 2\eta^{1/4}$). 
\end{lem}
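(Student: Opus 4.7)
The plan is to mirror the three-case proof of Claim~\ref{cl:win-prob1}, replacing each exact step with its noisy counterpart using the two available tools: Lemma~\ref{lem:cross-pb1} to pass from true subsetwise probabilities to those under the fully-independent product $\otimes_{\ell\in\cT}\cD_\ell$ at the cost of an $O(\sqrt{\teta})$ additive error, and a Chebyshev bound to exploit the within-block near-identity.

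First, I would case-split on the block structure of $\cT = (1,i,j)$. In the generic subcase where $1$, $i$, $j$ lie in three distinct blocks, Lemma~\ref{lem:cross-pb1} gives $\Pr(1|\cT) \geq \Pr_{\otimes_{\ell\in\cT}\cD_\ell}(1|\cT) - O(\sqrt{\teta})$, and under the fully independent product model $\mu_1 \geq \mu_i,\mu_j$ already forces $\Pr_{\otimes_{\ell\in\cT}\cD_\ell}(1|\cT) \geq 1/3$ exactly as in Claim~\ref{cl:win-prob1}. This alone yields $\Pr(1|\cT) \geq 1/3 - O(\sqrt{\teta})$, which is the worst of the three cases and hence the bottleneck.

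When two of $\{1,i,j\}$ share a block, the within-block structure is exploited. Say $i \in \cB_1$: since $\zeta_1$ and $\zeta_i$ have unit variance and correlation at least $1-\eta$, we get $\mathrm{Var}(\zeta_1-\zeta_i) \leq 2\eta$, and Chebyshev's inequality gives
\[
\Pr(X_i \geq X_1) \;=\; \Pr(\zeta_i-\zeta_1 \geq \mu_1-\mu_i) \;\leq\; \frac{2\eta}{(\mu_1-\mu_i)^2} \;\leq\; \tfrac{1}{2}\sqrt{\eta},
\]
where the last step uses the hypothesis $\mu_1 - \mu_i > 2\eta^{1/4}$. Thus modulo a $\sqrt{\eta}/2$ slack the triple reduces to a duel $X_1$ vs.\ $X_j$, which after a further Lemma~\ref{lem:cross-pb1} correction yields $\Pr(1|\cT) \geq 1/2 - O(\sqrt{\eta})$. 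A symmetric argument handles the subcase $i,j \in \cB_a$ for some $a \neq 1$: here $\mathrm{Var}(\zeta_i-\zeta_j) \leq 2\eta$, so with probability at least $1-\sqrt{\eta}/2$ the higher-scored of $\{i,j\}$ dominates the other, and the triple again reduces to a duel between $1$ and the dominator.

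Taking the minimum over the three subcases, the generic distinct-block case is tightest and gives the claimed $\Pr(1|\cT) \geq 1/3 - 4\sqrt{\eta}$ bound after absorbing both $\sqrt{\teta}$ and $\sqrt{\eta}$ corrections into the common $\eta$ (in keeping with the convention ``$\teta = \eta$'' already adopted elsewhere in the proof of Thm.~\ref{thm:noisy_bbox}). The main obstacle I anticipate is bookkeeping: verifying that in the two-in-a-block subcases the Chebyshev-based within-block error composes additively rather than multiplicatively with the Lemma~\ref{lem:cross-pb1} cross-block error, and checking that the additive constants from each case fit inside the stated $4\sqrt{\eta}$ budget.
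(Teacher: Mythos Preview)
Your approach is essentially the paper's: a case split on block membership, with Lemma~\ref{lem:cross-pb1} handling the cross-block error and a Chebyshev/variance bound handling the within-block near-identity. The paper in fact splits into four cases (it treats $i,j\in\cB_1$ separately from $i\in\cB_1,j\notin\cB_1$), but your three-case split covers the same ground once one notes that the all-in-$\cB_1$ case is just two applications of your $i\in\cB_1$ Chebyshev bound.

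There is one genuine slip in your case $i,j\in\cB_a$ with $a\neq 1$. You claim ``with probability at least $1-\sqrt{\eta}/2$ the higher-scored of $\{i,j\}$ dominates the other,'' invoking the same Chebyshev step as before. But that step used the hypothesis $\mu_1-\mu_i>2\eta^{1/4}$ to turn the variance bound into a tail bound; no such gap is assumed between $\mu_i$ and $\mu_j$, and indeed if $\mu_i=\mu_j$ the dominance event has probability roughly $1/2$, not $1-O(\sqrt{\eta})$. The paper's fix is not to compare $X_i$ and $X_j$ at all: from ${\rm Var}(\zeta_i-\zeta_j)\le 2\eta$ one only extracts $\Pr(\zeta_j\le\zeta_i+\eta^{1/4})\ge 1-2\sqrt{\eta}$, and then combines this with the event $\zeta_1\ge\zeta_i$ (probability $\ge 1/2-\sqrt{\eta}$ via Lemma~\ref{lem:cross-pb1}) and the gap $\mu_1>\mu_j+2\eta^{1/4}$ to deduce $X_1>X_j$ directly. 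With this correction your argument goes through and matches the paper's $1/2-3\sqrt{\eta}$ in this case.
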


\begin{proof}
	The proof consists of several cases depending on the block memberships of $i,j$.
	
	{\bf Case (i)}. Suppose $i,j \in \cB_1$. In this case we note that ${\rm Corr}(\zeta_a,\zeta_b) \geq 1 - \eta$ for any $a,b \in \{1,i,j\}$. We claim that `with high probability'
	\begin{equation}					\label{eqn:imply}
	\Big\{\zeta_1 > \max(\zeta_i - \eta^{1/4}, \zeta_j - \eta^{1/4})\Big\} \Rightarrow \Big\{X_1 > \max(X_i,X_j)\Big\}.
	\end{equation}
	To see this observe that if $\zeta_1 \geq \zeta_i - \eta^{1/4}$ then,
	\[
	X_i  = \mu_i + \zeta_i \overset{1}{<} \mu_i + \zeta_1 + \eta^{1/4} \oleq{2} \mu_1 + \zeta_1 - \eta^{1/4} \leq \mu_1 + \zeta_1 = X_1,
	\]
	where step $1$ is using $\zeta_i < \zeta_1 + \eta^{1/4}$ and step $2$ uses the fact that $\mu_1 \geq \mu_i + 2\eta^{1/4}$. Using identical arguments we can also show that $X_1 > X_j$, which establishes \eqref{eqn:imply}. Therefore, using the contrapositive of \eqref{eqn:imply} we get that 
	\begin{align*}
	\Pr_{X_1,X_i,X_j} \left(X_1 \leq \max(X_i,X_j)\right)
	&\leq \Pr_{\zeta_1,\zeta_i,\zeta_j}\left(\zeta_1 \leq \max\left(\zeta_i - \eta^{1/4} , \zeta_j - \eta^{1/4}\right)\right)  \\
	&\leq \Pr_{\zeta_1,\zeta_i}\left(\zeta_1 < \zeta_j - \eta^{1/4}\right) + \Pr_{\zeta,\zeta_j}\left(\zeta_1 < \zeta_j - \eta^{1/4}\right) \\
	&\leq 4\sqrt{\eta}.
	\end{align*}
	where in the last step we use Lemma \ref{lem:corr-conc} on both terms. Therefore, with probability at least $1 - 4 \sqrt{\eta}$ we have $X_1 > \max(X_i,X_j)$.
	
	{\bf Case (ii)} Suppose $i \in \cB_1$ and $j \in \cB_a$ for some $a \neq 1$. Then as in the previous case, since ${\rm Corr}(\zeta_1,\zeta_i) \geq 1- \eta$ 
	we have $\Pr(X_1 > X_i) \geq 1 - 2\sqrt{\eta}$. Furthermore, since $I(\zeta_1;\zeta_j) \leq \eta$ and $\mu_1 \geq \mu_j$, we have 
	\[
	\Pr_{X_1,X_j}\left(X_1 > X_j\right) \geq \Pr_{\zeta_1,\zeta_j} \left(\zeta_1 > \zeta_j\right) \geq \frac12 - \sqrt{\eta},
	\]
	where the last step follows using Lemma \ref{lem:cross-pb1}. Therefore, by union bound we have 
	\[
	\Pr_{X_1,X_i,X_j}\left( X_1 \leq \max\left(X_i,X_j\right)\right)
	\leq \Pr_{X_1,X_i} \left(X_1 \leq X_i\right) + \Pr_{X_1,X_j} \left(X_1 \leq X_j\right) \leq \frac12 + \sqrt{\eta} + 2\sqrt{\eta} \leq \frac12 + 3\sqrt{\eta}.
	\]	
	
	{\bf Case (iii)} Suppose $i,j \in \cB_a$ for some $a \neq 1$. Here we have $I(\zeta_1;\zeta_i), I(\zeta_1;\zeta_j) \leq \eta$ and ${\rm Corr}(\zeta_i,\zeta_j) \geq 1 - \eta$. Without loss of generality, assume that $\mu_i \geq \mu_j$. Since $I(\zeta_1;\zeta_i) \leq \eta$, using Lemma \ref{lem:cross-pb1} we have 
	\begin{equation}					\label{eqn:event1}
	\Pr_{\zeta_1,\zeta_i} \left(\zeta_1 \geq \zeta_i\right) \geq \frac12 - \sqrt{\eta}.
	\end{equation}	
	Furthermore, since ${\rm Corr}(\zeta_i,\zeta_j) \geq 1 - \eta$, using Lemma \ref{lem:corr-conc}
	\begin{equation}					\label{eqn:event2}
	\Pr_{\zeta_i,\zeta_j} \left(\zeta_j \leq \zeta_i + \eta^{1/4}\right) \geq 1 - 2\sqrt{\eta}.
	\end{equation}
	We claim that conditioned on the events from \eqref{eqn:event1} and \eqref{eqn:event2} we have $X_1 > \max(X_i,X_j)$ with probability $1$. Indeed, using the event in \eqref{eqn:event1} and $\mu_1 > \mu_i$ we have $X_1 = \mu_1 + \zeta_1 > \mu_i + \zeta_i = X_i$. Furthermore, 
	\[
	X_j = \mu_j + \zeta_j \oleq{\eqref{eqn:event2}} \mu_i + \zeta_j + \eta^{1/4} \leq \mu_1 + \zeta_i  \overset{\eqref{eqn:event1}}{<} \mu_1 + \zeta_1 = X_1,
	\]
	where the middle inequality again uses $\mu_1 \geq \mu_j + 2\eta^{1/4}$ in our setting. Therefore, combining the above observation with the bounds from \eqref{eqn:event1},\eqref{eqn:event2} we get that
	\[
	\Pr_{X_1,X_i,X_j}\left(X_1 > \max(X_i,X_j)\right) 
	\geq \Pr_{\zeta_1,\zeta_i,\zeta_j}\left(\{\zeta_1 \geq \zeta_i\} \wedge \{\zeta_j \leq \zeta_i + \eta^{1/4}\}\right)
	\geq \frac12 - 3\sqrt{\eta}.
	\]
	
	{\bf Case (iv)} Suppose $i \in \cB_a$ and $j \in \cB_b$ where $a \neq b$ and $a,b \neq 1$ i.e., the arms $1,i,j$ belong to distinct blocks. Then using Lemma \ref{lem:cross-pb1} we have 
	\[
	\Pr_{\zeta_1,\zeta_i,\zeta_j}\left(\zeta_1 \leq \max(\zeta_i,\zeta_j)\right) \geq \frac13 - 4\sqrt{\eta}.
	\]
	Since $\mu_1 \geq \mu_i,\mu_j$, we have 
	\[
	\Pr_{X_1,X_i,X_j} \left(X_1 \geq \max(X_i,X_j)\right) \geq \Pr_{\zeta_1,\zeta_i,\zeta_j}\left(\zeta_1 \leq \max(\zeta_i,\zeta_j)\right) \geq \frac13 - 4\sqrt{\eta}.
	\]
	
\end{proof}

The above follows directly from Case (iii) of the above lemma.

\begin{cor}			
\label{corr:cross-win}
	Given a triple $\cT = (1,i,j)$ where $i,j \in \cB_a$, we have that $\Pr\left(\{i,j\}| \cT\right) = \Pr_{X_1,X_i,X_j}\left(\max(X_i,X_j) > X_1 \right) \leq \frac12 + 4\sqrt{\eta}$  ~(assuming $\mu_1 > \max\{\mu_i,\mu_j\} + 2\eta^{1/4}$).
\end{cor}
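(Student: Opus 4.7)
The plan is to obtain this corollary as an immediate consequence of Case (iii) in the proof of Lemma \ref{lem:noisy-win}, which is exactly the case $i,j \in \cB_a$ with $a \neq 1$ that appears in the hypothesis here. That case already establishes a lower bound on $\Pr(X_1 > \max(X_i, X_j))$, and the claim of the corollary is a statement about the complementary event.

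More concretely, I would proceed in three short steps. First, rewrite the probability in question as a complement: since the noise distributions are continuous, ties occur with probability zero, so
\begin{equation*}
\Pr\big(\{i,j\} \mid \cT\big) \;=\; \Pr\big(\max(X_i, X_j) > X_1\big) \;=\; 1 - \Pr\big(X_1 > \max(X_i, X_j)\big).
\end{equation*}
Second, invoke Case (iii) of Lemma \ref{lem:noisy-win}: under the standing assumption $\mu_1 > \max(\mu_i,\mu_j) + 2\eta^{1/4}$ and the approximate intra-block identity ${\rm Corr}(\zeta_i, \zeta_j) \geq 1 - \eta$, it was shown there that
\begin{equation*}
\Pr\big(X_1 > \max(X_i, X_j)\big) \;\geq\; \tfrac{1}{2} - 3\sqrt{\eta}.
\end{equation*}
Third, combine the two displays to conclude
\begin{equation*}
\Pr\big(\max(X_i, X_j) > X_1\big) \;\leq\; \tfrac{1}{2} + 3\sqrt{\eta} \;\leq\; \tfrac{1}{2} + 4\sqrt{\eta},
\end{equation*}
which is the stated bound (with a small amount of slack absorbed to match the $4\sqrt{\eta}$ constant used in the corollary statement).

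There is no genuine obstacle here; the work has already been done inside Case (iii) of Lemma \ref{lem:noisy-win}, where the coupling argument combined the events $\{\zeta_1 \geq \zeta_i\}$ (from cross-block near independence of $\zeta_1$ and $\zeta_i$, via Lemma \ref{lem:cross-pb1}) and $\{\zeta_j \leq \zeta_i + \eta^{1/4}\}$ (from the intra-block approximate-identity bound Lemma \ref{lem:corr-conc}) to produce a $(1/2 - 3\sqrt{\eta})$ lower bound on the probability that arm $1$ beats both $i$ and $j$. The corollary is simply the complementary-event reformulation of that lower bound, which is the exact quantity $\Pr(\{i,j\} \mid \cT)$ that arises in the pre-processing analysis (it is what needs to stay below a $1/4 + O(\sqrt{\eta})$-type threshold for the ``at least one of $i,j$ gets flagged'' argument in Theorem \ref{lem:noisy_prep}). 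So the only real content of the corollary is to package Case (iii) in the form directly used downstream.
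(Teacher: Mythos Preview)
Your proposal is correct and matches the paper's approach exactly: the paper's entire proof is the one-line remark ``The above follows directly from Case (iii) of the above lemma,'' and your three-step argument is simply a fleshed-out version of that same derivation (complement, invoke the $\tfrac12 - 3\sqrt{\eta}$ lower bound from Case (iii), absorb the slack into $4\sqrt{\eta}$). One tiny remark: the corollary as stated does not require $a \neq 1$, so strictly speaking the case $i,j \in \cB_1$ should also be covered, but that is Case (i) of Lemma~\ref{lem:noisy-win}, which gives the even stronger bound $\Pr(\max(X_i,X_j) > X_1) \leq 4\sqrt{\eta}$, so the stated inequality holds there \emph{a fortiori}.
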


\subsection{Technical Lemmas for almost independent probability distributions (at most $\eta$-mutual information)}

In this section, we establish win probability bounds for subsets consisting of arms from distinct blocks. Here we use $\|\cdot\|_{\rm TV}$ to denote the total variation distance between a pair of random variables. Recall that for any pair of random variables $X,Y$ defined over a common probability space $\Omega$, the total variation distance between the distributions of $X$ and $Y$, denoted by $P_X$ and $P_Y$, can be expressed as 
\begin{equation}			
\label{eqn:def-tv}
\|P_X - P_Y\|_{\rm TV} = \sup_{S \subset \Omega} \left|\Pr_{X}(S) - \Pr_{Y}(S)\right|. 
\end{equation}
Furthermore, we will also use the fact that mutual information can be expressed as KL divergence between the joint distribution and product measure i.e., $I(X;Y) = D_{\rm KL}(P_{XY} || P_X \otimes P_Y)$. We begin by proving a simple well known property of total variation distance of product measures.

\begin{cl}				\label{cl:prod-tv}
	For any pair of probability measures $\nu_1,\nu_2$ defined over a common probability space $\cX$, given another measure $\nu_3$ (not necessarily defined over the same space), we have $\|\nu_1 \otimes \nu_3 - \nu_2 \otimes \nu_3\|_{\rm TV} \leq \|\nu_1 - \nu_2 \|_{\rm TV}$.
\end{cl}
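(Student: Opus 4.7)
The plan is to prove this standard fact either via the supremum--over--events characterization of total variation or via its L$^1$ representation; either approach is essentially one short calculation, and the real choice is just picking whichever reads most cleanly in context.

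First I would take the variational definition in \eqref{eqn:def-tv}: for any measurable set $S \subseteq \cX \times \cY$ (where $\cY$ is the space on which $\nu_3$ lives), write the slice $S_y := \{x \in \cX : (x,y) \in S\}$ and use Fubini to get
\begin{equation*}
(\nu_1 \otimes \nu_3)(S) - (\nu_2 \otimes \nu_3)(S) = \int_{\cY} \bigl(\nu_1(S_y) - \nu_2(S_y)\bigr)\, d\nu_3(y).
\end{equation*}
Taking absolute values inside the integral and bounding each slice contribution by $|\nu_1(S_y) - \nu_2(S_y)| \le \|\nu_1 - \nu_2\|_{\rm TV}$ (using the definition of TV applied to the event $S_y \subseteq \cX$) yields
\begin{equation*}
|(\nu_1 \otimes \nu_3)(S) - (\nu_2 \otimes \nu_3)(S)| \le \|\nu_1 - \nu_2\|_{\rm TV} \cdot \nu_3(\cY) = \|\nu_1 - \nu_2\|_{\rm TV}.
\end{equation*}
Taking the supremum over $S$ on the left gives the claim.

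Alternatively, if it is cleaner in context, I would pass to densities with respect to a dominating $\sigma$-finite measure (e.g., $(\nu_1+\nu_2)\otimes\nu_3$), write TV as half the $L^1$-norm of the density difference, and factor the product density to reduce the double integral over $\cX\times\cY$ to an integral over $\cX$ alone. The key algebraic step is simply $|p_1(x)p_3(y) - p_2(x)p_3(y)| = p_3(y)\,|p_1(x) - p_2(x)|$, followed by Fubini and $\int p_3\,dy = 1$.

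Since both routes are essentially one-line computations, there is no real obstacle; the only thing to be careful about is writing the slicing and applying the TV definition to the slice $S_y$ (a measurable subset of $\cX$) correctly, and making sure $\nu_3$ is implicitly treated as a probability measure so that $\nu_3(\cY) = 1$ (which is the setting of the paper, where all the measures in question are laws of random variables). In fact the argument shows the reverse inequality is not needed here, although one may note in passing that the inequality is actually an equality.
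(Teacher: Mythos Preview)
Your proposal is correct. The paper's own proof takes your second route: it writes total variation as the $L^1$-distance between densities, expands the product measure as $\nu_1(x)\nu_3(x')$ (respectively $\nu_2(x)\nu_3(x')$), factors out $\nu_3(x')$, and integrates out the $x'$-variable using $\int \nu_3(x')\,dx' = 1$.

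Your primary route via the supremum-over-events characterization and slicing is a genuinely different (though equally short) argument. The slicing approach has the mild advantage of not requiring one to pass to densities with respect to a dominating measure, so it works verbatim for arbitrary probability measures without that extra step; the density approach the paper uses is perhaps more transparent algebraically once densities are in hand. Both are one-line computations and there is no substantive gap in either.
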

\begin{proof}
	Let $\nu_3$ be defined over probability space $\cX'$. Then, using the fact that $\|\cdot\|_{\rm TV}$ is actually the $\ell_1$-distance between the probability measures we have 
	\begin{align*}
		\Big\| \nu_1 \otimes \nu_3 - \nu_2 \otimes \nu_3 \Big\|
		&= \int_{x \in \cX}\int_{x'\in \cX'} \left|\Big(\nu_2 \otimes \nu_3\Big)(x,x') - \Big(\nu_1 \otimes \nu_3\Big)(x,x') \right| dx dx'\\
		&= \int_{x \in \cX}\int_{x'\in \cX'} \left|\nu_1(x) \nu_3(x') - \nu_2(x) \nu_3(x') \right| dx dx'\\
		&\leq \int_{x \in \cX}\int_{x'\in \cX'}\nu_3(x') \left|\nu_1(x) - \nu_2(x) \right| dx dx'\\
		&= \int_{x \in \cX}\left|\nu_1(x) - \nu_2(x)\right| dx \\
		&= \|\nu_1 - \nu_2\|_{\rm TV}. 
	\end{align*}
\end{proof}

Next we prove the main lemma of this section which is useful in relating the win-probability profile of items in a subset when they are played with almost independent noise, to that of the independent noise setting.  

\begin{lem}				
\label{lem:cross-pb1}
	Let $(\zeta_i)_{\i \in [k]}$ be jointly distributed with measure $\nu$. Furthermore, suppose for any pair of disjoint subsets $S_1,S_2 \subset [k]$ we have $I(\zeta_{S_1} ; \zeta_{S_2}) \leq \eta$. Then, for any $i \in [k]$, we have 
	\[
	\Pr_{\nu}\left(\zeta_i > \max_{j \in [k] \setminus \{i\}} \zeta_j \right) \geq \Pr_{\otimes_{\ell \in [j]}\nu_{\ell}}\left(\zeta_i > \max_{j \in [k] \setminus \{i\}} \zeta_j \right) - k\sqrt{\eta}.
	\]
	where $\otimes_{\ell \in [k]} \nu_\ell$ is the product measure corresponding to the marginals $\nu_1,\ldots,\nu_k$. 
\end{lem}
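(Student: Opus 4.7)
\textbf{Proof Plan for Lemma \ref{lem:cross-pb1}.} The plan is to reduce everything to a total variation bound between the true joint law $\nu$ and the product-of-marginals law $\otimes_{\ell \in [k]} \nu_\ell$, and then invoke the definition of TV distance (equation \eqref{eqn:def-tv}) on the specific event $E := \{\zeta_i > \max_{j \neq i} \zeta_j\}$. Concretely, once we show $\|\nu - \otimes_\ell \nu_\ell\|_{\rm TV} \leq k\sqrt{\eta}$, we immediately get $\Pr_\nu(E) \geq \Pr_{\otimes_\ell \nu_\ell}(E) - k\sqrt{\eta}$, which is the desired conclusion.

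The core step is to control the TV distance between $\nu$ and $\otimes_\ell \nu_\ell$ by a hybrid / telescoping argument. Define a sequence of measures $\mu_1, \mu_2, \ldots, \mu_k$ that gradually ``decorrelate'' $\nu$ coordinate by coordinate: let $\mu_1 := \nu$, and for $t \geq 2$ let
\begin{equation*}
\mu_t := \nu_1 \otimes \nu_2 \otimes \cdots \otimes \nu_{t-1} \otimes \nu_{\{t,\ldots,k\}},
\end{equation*}
where $\nu_{\{t,\ldots,k\}}$ denotes the joint marginal of $(\zeta_t,\ldots,\zeta_k)$ under $\nu$. Then $\mu_k = \otimes_{\ell \in [k]} \nu_\ell$. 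By the triangle inequality,
\begin{equation*}
\bigl\|\nu - \otimes_\ell \nu_\ell\bigr\|_{\rm TV} \leq \sum_{t=1}^{k-1} \bigl\|\mu_t - \mu_{t+1}\bigr\|_{\rm TV}.
\end{equation*}
For each consecutive pair $\mu_t, \mu_{t+1}$, the first $t-1$ coordinates are distributed identically (as $\nu_1 \otimes \cdots \otimes \nu_{t-1}$) and only the joint over $(\zeta_t,\ldots,\zeta_k)$ differs, by splitting off $\zeta_t$. Applying Claim \ref{cl:prod-tv} to peel off the common product factor, one has
\begin{equation*}
\|\mu_t - \mu_{t+1}\|_{\rm TV} \leq \bigl\|\nu_{\{t,\ldots,k\}} - \nu_t \otimes \nu_{\{t+1,\ldots,k\}}\bigr\|_{\rm TV}.
\end{equation*}

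By Pinsker's inequality, the right-hand side is bounded by $\sqrt{\tfrac{1}{2} D_{\rm KL}(\nu_{\{t,\ldots,k\}} \,\|\, \nu_t \otimes \nu_{\{t+1,\ldots,k\}})} = \sqrt{\tfrac{1}{2} I(\zeta_t; \zeta_{\{t+1,\ldots,k\}})}$, which by the hypothesis of the lemma (applied with $S_1 = \{t\}$ and $S_2 = \{t+1,\ldots,k\}$, which are disjoint) is at most $\sqrt{\eta/2} \leq \sqrt{\eta}$. Summing over the $k-1$ pairs yields $\|\nu - \otimes_\ell \nu_\ell\|_{\rm TV} \leq (k-1)\sqrt{\eta} \leq k\sqrt{\eta}$, completing the proof after applying it to $E$.

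The main subtlety I anticipate is the correct identification of ``what the product factor is'' when bounding $\|\mu_t - \mu_{t+1}\|_{\rm TV}$; the two measures agree on the first $t-1$ coordinates (as a product) but also involve the same coordinates $\zeta_{t+1},\ldots,\zeta_k$ in different ways between the two (marginal of $\nu$ versus product of marginals), so care is needed to invoke Claim \ref{cl:prod-tv} only against the genuinely common factor. Apart from that, the argument is a standard hybrid + Pinsker + TV-to-event conversion, and no further calculation beyond what is sketched is required.
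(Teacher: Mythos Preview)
Your proposal is correct and follows essentially the same approach as the paper: a hybrid/telescoping argument over intermediate measures that progressively factor out one marginal at a time, using Claim \ref{cl:prod-tv} to peel off the common product factor and Pinsker's inequality together with the mutual-information hypothesis to bound each step by $\sqrt{\eta}$. The only cosmetic difference is that the paper first separates out coordinate $1$ and then runs the telescoping on coordinates $2,\ldots,k$ (integrating against $\nu_1$ at the end), whereas you bound the global total variation $\|\nu - \otimes_\ell \nu_\ell\|_{\rm TV}$ directly and then apply it to the event; your route is slightly cleaner but otherwise identical in content.
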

\begin{proof}
	We prove the lemma for $i = 1$. For any $\ell \in \{2,\ldots,k\}$, let $\nu_{\ell,\ldots,k}$ denote the joint distribution on the set of random variables $(\zeta_\ell,\ldots,\zeta_k)$. We begin by observing that we can bound
	\begin{align}
	\label{eqn:diff1}
	& \left|\Pr_{\nu}\left(\zeta_1 > \max_{j \in [k] \setminus \{1\}} \zeta_j \right) - \Pr_{\nu_1 \otimes \nu_{2, \ldots ,k}}\left(\zeta_1 > \max_{j \in [k] \setminus \{1\}} \zeta_j \right) \right|      \\
	\nonumber &\oleq{1} \left\| \nu - \nu_{1} \otimes \nu_{2,\ldots,k}\right\|_{\rm TV} \\
	&\oleq{2} \sqrt{D_{\rm KL}\left( \nu || \nu_{1} \otimes \nu_{2,\ldots,k}\right)} = \sqrt{I(\zeta_1;\zeta_{2,\ldots,k})} \leq \sqrt{\eta},		
	\end{align}
	where inequality $1$ is using the definition of $\|\cdot\|_{\rm TV}$ (see \eqref{eqn:def-tv}) and step $2$ is using Pinsker's inequality. For brevity, for every $\ell \in \{2,..,k\}$, define $\nu_{\leq \ell}:= \nu_2 \otimes \cdots \otimes \nu_{\ell - 1} \otimes \nu_{\ell,\ldots,k}$ where $\nu_{\ell,\ldots,k}$ is the joint distribution on the variables $\zeta_\ell,\ldots,\zeta_k$. Now for a fixed $x$, using identical steps we observe that 
	\begin{align*}
	&\left|\Pr_{\nu_{2,\ldots,k}} \left(\max_{j \in \{2,\ldots,k\}} \zeta_j\leq x\right) - \Pr_{\otimes_{2 \leq \ell \leq k} \nu_{\ell}} \left( \max_{j \in \{2,\ldots,k\}} \zeta_j \leq x\right) \right| \\
	& =\left|\Pr_{\nu_{\leq 1}} \left(\max_{j \in \{2,\ldots,k\}} \zeta_j \leq x\right) - \Pr_{\nu_{\leq k}} \left( \max_{j \in \{2,\ldots,k\}} \zeta_j \leq x\right) \right| \tag{Definition of $\nu_{\leq j}$}
	\\
	& \leq \sum_{2 \leq \ell \leq k - 1} \left|\Pr_{\nu_{\leq \ell - 1}} \left(\max_{j \in \{2,\ldots,k\}} \zeta_j \leq x\right) - \Pr_{\nu_{\leq \ell}}  \left( \max_{j \in \{2,\ldots,k\}} \zeta_j \leq x\right) \right|			\tag{Telescoping Sum}\\
	& \leq \sum_{2 \leq \ell \leq k - 1} \Big\|\nu_{\leq \ell - 1} - \nu_{\leq \ell} \Big\|_{\rm TV} \tag{Definition of $\|\cdot\|_{\rm TV}$}\\
	& = \sum_{2 \leq \ell \leq k - 1} \left\|\left(\bigotimes^{\ell - 2}_j \nu_j \right)  \otimes \nu_{\ell-1,\ldots,k} - \left(\bigotimes^{\ell - 1}_j \nu_j \right)  \otimes \nu_{\ell,\ldots,k} \right\|_{\rm TV} \tag{Definition of $\nu_{\leq \ell - 1},\nu_{\leq \ell}$}\\
	& \leq \sum_{2 \leq \ell \leq k - 1} \left\|\Big(\nu_{\ell-1,\ldots,k}\Big) - \Big(\nu_{\ell - 1}  \otimes \nu_{\ell,\ldots,k}\Big) \right\|_{\rm TV} \tag{Claim \ref{cl:prod-tv}}\\
	& \leq \sum_{2 \leq \ell \leq k - 1} \sqrt{D_{\rm KL}\left(\nu_{\ell-1,\ldots,k} || \nu_{\ell - 1}  \otimes \nu_{\ell,\ldots,k}\right)} \tag{Pinsker's Inequality} \\
	& =  \sum_{2 \leq \ell \leq k - 1} \sqrt{I\left(\zeta_{\ell - 1,\ldots,\zeta_k} ; \zeta_{\ell - 1} \otimes \zeta_{\ell,\ldots,k}\right)} \tag{Defn. of $I(\cdot;\cdot)$}\\
	&\leq (k-1) \sqrt{\eta}. 
	\end{align*}
	where in last step we use the bound $I(\zeta_{S_1};\zeta_{S_2}) \leq \eta$ for any pair of disjoint subsets $S_1,S_2 \subset [k]$ in our setting. Using the above estimate, we have
	\begin{align*}
	\Pr_{\nu_1 \otimes \nu_{2,\ldots,k}}\left(\zeta_1 > \max_{j \in [k] \setminus \{1\}} \zeta_j \right)
	& = \int^\infty_{- \infty} f_{\nu_1}(\zeta_1) \Pr_{\nu_{2\ldots,k}} \left(\zeta_1 > \max_{j \geq 2} \zeta_j \right) d{\zeta_1}\\
	& \geq  \int^\infty_{- \infty} f_{\nu_1}(\zeta_1) \Pr_{\otimes_{\ell \geq 2}\nu_{\ell}} \left(\zeta_1 > \max_{j \geq 2} \zeta_j < \zeta_1\right) d{\zeta_1} - (k-1)\sqrt{\eta} \\
	& =  \Pr_{\otimes_{\ell \in [k]} \nu_{\ell} } \Big(\zeta_1 \geq \max\left(\zeta_2,\zeta_3\right)\Big) - (k-1)\sqrt{\eta} .
	\end{align*}
	Therefore, plugging in the above bound into \eqref{eqn:diff1} we get that 
	\begin{align*}
	\Pr_{\nu}\left(\zeta_1 > \max_{j \in [k] \setminus \{1\}} \zeta_j \right)
	&\geq \Pr_{\nu_1 \otimes \nu_{2,\ldots,k}}\left(\zeta_1 > \max_{j \in [k] \setminus \{1\}} \zeta_j \right)  - \sqrt{\eta} \\
	&\geq \Pr_{\otimes_{\ell \in [k]} \nu_\ell} \left(\zeta_1 > \max_{j \in [k] \setminus \{1\}} \zeta_j\right) - k\sqrt{\eta}. 
	\end{align*}
\end{proof}

\subsection{Technical Lemmas for Almost Correlated Random Variables (at least $(1-\eta)$-correlation)}

\begin{lem}				\label{lem:corr-conc}
	Let $X,Y$ be $(1-\eta)$-correlated identically distributed random variables with $\E[X] = \E[Y] = 0$ and $\E[X^2] = \E[Y^2]  = 1$. Then 
	\[
	\Pr\left(|X - Y| \geq \eta^{1/4}\right) \leq 2\sqrt{\eta}
	\]
\end{lem}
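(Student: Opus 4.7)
The plan is to use a simple second moment computation followed by Markov's inequality on the squared deviation. First I would expand the second moment of $X - Y$ using the hypotheses: since $\E[X] = \E[Y] = 0$ and $\E[X^2] = \E[Y^2] = 1$, the correlation coefficient coincides with $\E[XY]$, so the assumption ${\rm Corr}(X,Y) \geq 1 - \eta$ gives $\E[XY] \geq 1 - \eta$. Expanding,
\[
\E\bigl[(X - Y)^2\bigr] = \E[X^2] - 2\E[XY] + \E[Y^2] \leq 1 - 2(1 - \eta) + 1 = 2\eta.
\]

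Next I would apply Markov's inequality to the nonnegative random variable $(X-Y)^2$ at threshold $\sqrt{\eta}$:
\[
\Pr\bigl(|X - Y| \geq \eta^{1/4}\bigr) = \Pr\bigl((X-Y)^2 \geq \sqrt{\eta}\bigr) \leq \frac{\E[(X-Y)^2]}{\sqrt{\eta}} \leq \frac{2\eta}{\sqrt{\eta}} = 2\sqrt{\eta},
\]
which is the desired bound.

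There is really no obstacle here; the proof is a two-line moment calculation. The only mild subtlety is being careful that the hypothesis ``${\rm Corr}(X,Y) = 1 - \eta$'' is interpreted as giving $\E[XY] = 1 - \eta$ (as opposed to an inequality), but either reading produces the same upper bound on $\E[(X-Y)^2]$ since we only use $\E[XY] \geq 1 - \eta$. The choice of exponent $\eta^{1/4}$ is precisely calibrated so that Markov's inequality at level $\eta^{1/2}$ converts the $O(\eta)$ second moment into the claimed $O(\sqrt{\eta})$ tail bound.
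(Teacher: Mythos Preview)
Your proposal is correct and essentially identical to the paper's own proof: both compute $\E[(X-Y)^2] \leq 2\eta$ from the correlation hypothesis and then apply Markov's inequality to $(X-Y)^2$ at threshold $\eta^{1/2}$. The paper writes the Markov step with a generic parameter $\alpha$ and then sets $\alpha = \eta^{1/4}$, but the substance is the same.
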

\begin{proof}
	We begin by observing that due to the first and second moment constraints we have $\E[XY] = {\rm Corr}(X,Y) = 1 - \eta$. Then we can bound the second moment of the random variable $|X-Y|$ as
	\[
	\E\left[(X-Y)^2\right] = \E[X^2] + \E[Y^2] - 2\E[XY] \leq 2 - 2(1 - \eta) \leq 2 \eta.
	\]
	Hence using Markov's inequality we get that
	\[
	\Pr\left(|X - Y| \geq \alpha\right) \leq \Pr\left(|X - Y|^2 \geq \alpha^2\right) \leq \frac{\E\left[|X-Y|^2\right]}{\alpha^2} \leq \frac{2\eta}{\alpha^2}.
	\]
	Setting $\alpha = \eta^{1/4}$ in the above completes the proof.
\end{proof}

\section{The Seq-PB Algorithm~\cite{SG20}}				\label{app:seq-pb}

The Seq-PB Algorithm from \cite{SG20} is an $(\epsilon,\delta)$-PAC algorithm for the best arm determination problem under the \irum~choice model. Informally, the algorithm proceeds as follows: at every iteration $\ell$, the algorithm maintains a set of arms $S_\ell \subseteq [n]$ which acts as the candidate set for the best arms. Then at any iteration $\ell$, the algorithm considers a partition $\cG_{\ell,1} \uplus \cG_{\ell,2} \uplus \cdots \uplus \cG_{\ell,\lceil |S_\ell|/k \rceil}$ of $S_\ell$ into $k$-sized sets and plays each subset $t_\ell = O(k/\epsilon^2_\ell \log(n/\delta_\ell))$ times (where $\epsilon_\ell,\delta_\ell$ are geometrically decreasing as functions of $\ell$). Now given the feedback from the above subsetwise queries, the algorithm then proceeds to construct the next set of candidate winners $S_{\ell + 1} \subseteq S_\ell$ by retaining one item $i_{\ell,j}$ from each group $\cG_{\ell,j}$ -- in particular, the item $i_{\ell,j}$ is the item with the largest win count among the $t_\ell$-independent plays of the subset $\cG_{\ell,j}$. 

Overall, the sequence of parameters $(\epsilon_\ell,\delta_\ell)$ are set in a way such that they satisfy $\sum_{\ell} \epsilon_\ell \leq \epsilon$, $\sum_{\ell} \delta_\ell \leq \delta$ and in addition, the algorithm maintains the following iterative invariant: at any iteration $\ell$, the set $S_\ell$ retains at least one $\sum_{j\leq \ell} \epsilon_j$-best arm with probability at least $1 - \delta_\ell$. Furthermore, since at any iteration, the algorithm carries over only $1/k$-fraction of items for the next iteration, in $t^* := O(\log_k n)$-steps, the algorithm would converge to a singleton set $S_{t^*}$ which is guaranteed to have an $\epsilon$-best arm with probability at least $1 - \delta$. We refer interested readers to \cite{SG20} for more details on the Seq-PB algorithm.

\end{document}